\documentclass{article}

\usepackage{arxiv}

\usepackage[utf8]{inputenc}
\usepackage[T1]{fontenc}
\usepackage{hyperref}
\usepackage{url}
\usepackage{booktabs}
\usepackage{amsmath,amssymb,amsfonts}
\usepackage{mathrsfs}
\usepackage{nicefrac}
\usepackage{microtype}
\usepackage{graphicx}
\usepackage{multirow}
\usepackage{xcolor}
\usepackage{textcomp}
\usepackage{algorithm}
\usepackage{algorithmicx}
\usepackage{algpseudocode}
\usepackage{listings}
\usepackage{tikz}
\usetikzlibrary{arrows.meta}
\usepackage{enumitem}
\usepackage{caption}
\usepackage{float}
\usepackage[export]{adjustbox}
\usepackage{pgfplots}
\pgfplotsset{compat=1.18}
\usepgfplotslibrary{groupplots}
\pgfplotsset{
    resultsgrid/.style={
        every axis/.append style={
            ybar, bar width=4pt,
            width=0.5\textwidth, height=0.4\textwidth,
            symbolic x coords={Class A, Class B, Class C, Class D, Class E, Class F, Class G, Overall},
            xtick=data,
            xticklabel style={rotate=60, anchor=east, font=\tiny},
            tick label style={font=\tiny},
            label style={font=\scriptsize},
            title style={font=\small, yshift=-2mm},
            enlarge x limits=0.15,
            legend style={font=\small}, legend columns=-1,
        }
    }
}
\usepackage{longtable}
\usepackage{array}
\usepackage{rotating}
\usepackage{cleveref}

\definecolor{pieceRed}{RGB}{234,6,6}
\newlength{\PolyUnit}
\tikzset{
  poly/.style={x=\PolyUnit,y=\PolyUnit}
}

\usepackage{amsthm}
\theoremstyle{plain}
\newtheorem{theorem}{Theorem}

\newtheorem{corollary}[theorem]{Corollary}

\theoremstyle{definition}

\newtheorem{remark}{Remark}
\newtheorem{definition}{Definition}

\newcommand{\hgeneral}{h_{\mathrm{general}}}

\title{Class-Based Heuristic Selection for Solving the Flying Block Puzzle}

\author{
  Sanyar Ahmadi \\
  Faculty of Computer Engineering\\
  University of Tehran\\
  Tehran, Iran\\
  \texttt{sanyar.ahmadi@ut.ac.ir} \\
  \And
  Pedram Asadzadeh \\
  Faculty of Computer Engineering\\
  K.~N.~Toosi University of Technology\\
  Tehran, Iran\\
  \texttt{p.asadzadeh@email.kntu.ac.ir} \\
  \And
  Amanj Khorramian\thanks{Corresponding author.} \\
  Department of Electrical and Computer Engineering\\
  University of Kurdistan\\
  Sanandaj, Iran\\
  \texttt{a.khorramian@uok.ac.ir} \\
}

\renewcommand{\shorttitle}{Class-Based Heuristic Selection for the Flying Block Puzzle}

\hypersetup{
  pdftitle={Class-Based Heuristic Selection for Solving the Flying Block Puzzle},
  pdfauthor={Sanyar Ahmadi, Pedram Asadzadeh, Amanj Khorramian},
  pdfkeywords={Flying Block Puzzle, NP-complete, Heuristic Search Algorithm,
    Depth-Prioritized A*, Class-Based Heuristic Selection, Spatial Planning},
}

\begin{document}
\maketitle

\begin{abstract}
Heuristic search underlies planning in autonomous systems ranging from warehouse
logistics to robotic navigation, yet generic heuristics fail to exploit the
structural constraints that govern constrained spatial domains, causing search
performance to degrade catastrophically on harder instances. We study this
problem through the two-column Flying Block Puzzle, a rigorously NP-complete
spatial planning microworld whose bottleneck geometry mirrors clearance-to-size
constraints encountered in multi-agent path finding, autonomous vehicle
navigation, and block relocation systems. We introduce the Class-Based Heuristic
A* (CBHA*) algorithm, which integrates a General Move Constraint to capture
minimum displacement costs when vacant units are scarce, a formal kinematic
taxonomy partitioning the state space into seven mutually exclusive classes with
provably admissible heuristics based on vacancy ratio and goal-piece geometry,
and a class-conditional tie-breaking mechanism that dynamically switches between
depth-priority and vertical-distance ordering to overcome $f$-value plateaus.
Over 146 benchmark instances, CBHA* achieves a 93.4\% success rate against 64\%
for Depth-Prioritized A*, 39\% for Standard A*, and 17\% for BFS, while reducing
node expansions by 87.98\% relative to Standard A* and sustaining an average
effective branching factor of approximately 3, demonstrating that class-triggered
adaptive heuristics constitute a principled mechanism for efficient spatial
planning that generalizes structurally to physical constraint systems.
\end{abstract}

\keywords{Flying Block Puzzle \and NP-complete \and Heuristic Search Algorithm
\and Depth-Prioritized A* \and Class-Based Heuristic Selection \and Spatial Planning}

\section{Introduction}

Combinatorial puzzles serve as canonical AI microworlds~\cite{russell2020artificial}
for heuristic search in planning and navigation.  As Uehara~\cite{uehara2026first} argues, well-designed
puzzles are exceptionally suited for explaining fundamental algorithmic
techniques, since a robust algorithm inherently possesses a refined beauty
analogous to the satisfaction of solving a complex puzzle.  This perspective
is reinforced throughout Knuth's \emph{The Art of Computer Programming}
\cite{knuth2022art}, which uses classical recreational challenges---the Soma
cube, pentominoes, word searches---to elucidate advanced search and
backtracking algorithms, observing that playful experimentation with such
problems often reveals mathematical structures that suggest
``useful cutoff strategies and/or data structures''~\cite{knuth2022art}
for real-world software.
The 2-column Flying Block variant is rigorously NP-complete
\cite{kanzaki2024computational} and provides a tractable yet challenging
testbed for spatial algorithm design
\cite{mccarthy1990chess, ensmenger2012chess, hearn2005pspace}. Unlike purely logical NP-complete benchmarks, this puzzle requires reasoning about geometric occlusion and cascading positional dependencies. The bottleneck structure of this variant---where limited vacant cells force long sliding sequences---mirrors constraints in multi-agent path finding, warehouse robotics, and block relocation \cite{lin2022review, yang2023path, caserta2012mathematical}. Puzzle pieces are polyominoes placed in a $2\times h$ frame; piece shapes are formally classified in Section~\ref{sec:formulation}.

A* search \cite{hart1968formal} balances optimal pathfinding with heuristic guidance, but its performance depends critically on heuristic quality. Generic heuristics fail to exploit the structural properties specific to individual puzzle variants and suffer from scalability problems on larger instances \cite{mathew2013experimental, korf1996finding, sanchez2020systematic}. Our Class-Based Heuristic Selection (CBHS) framework addresses this by instantiating the algorithm selection paradigm \cite{russell2020artificial,kotthoff2015algorithm,brazdil2017metalearning}, dynamically mapping instance features to optimal heuristic strategies. Our contributions are:

\begin{enumerate}
	\item \textbf{Puzzle-Specific General Heuristic:} An admissible, consistent heuristic derived from the General Move Constraint (Theorem~\ref{thm:general_move}), providing improved guidance toward optimal solutions.

	\item \textbf{Depth-Prioritized A* (DPA*):} Validating depth-based tie-breaking \cite{asai2016tiebreaking} within this domain and using it as the foundation for the class-conditional tie-breaker in Contribution~4.

	\item \textbf{Class-Based Heuristic Selection:} A formal kinematic taxonomy partitioning the state space into seven mutually exclusive, collectively exhaustive classes over $k \ge 1$ (where $k$ denotes the number of vacant units), each admitting a consistent heuristic (Theorem~\ref{thm:partition}).

	\item \textbf{Class-Conditional Tie-Breaker Selection:} A mechanism that dynamically selects depth or vertical-distance tie-breaking per class, further optimizing search performance.
\end{enumerate}

By formulating class-tailored admissible heuristics for this NP-complete
spatial microworld, this work advances the research tradition that
\cite{uehara2026first} and \cite{knuth2022art} identify as a primary engine
for algorithmic insight, using constrained spatial planning as a lens for
broader principles in autonomous robotics, navigation, logistics, and
multi-agent pathfinding.

\section{Problem Formulation and Properties}\label{sec:formulation}

Let $F = \{1,2\} \times \{1,\ldots,h\}$ be a rectangular frame of width~2 and height~$h$. Each piece $p_i$ is a polyomino occupying cells $C_P \subset F$; the target piece $P_t$ must be moved to a designated goal position. In Sections~\ref{subsec:class-c}--\ref{subsec:class-e}, we also write matrix coordinates $(r,c)$ with row $r$ increasing downward. In all figures, gray cells represent occupied spaces and white-colored cells indicate vacant units.

Piece shapes are polyominoes classified by linear symmetry (Definition~\ref{def:linear-symmetry}) into four exhaustive types (Figure~\ref{fig:piece_types}):

\begin{definition}[Linear symmetry]\label{def:linear-symmetry}
	A shape $S\subset\mathbb{Z}^2$ is \emph{linearly symmetric} if there exists a reflection $\rho$ of $\mathbb{R}^2$ (an isometry of order~2 fixing a line pointwise) such that $\rho(S)=S$.
\end{definition}

\begin{definition}[I-shaped piece]\label{def:i-shaped}
	A piece is \textbf{I-shaped with length $\ell$} if its shape equals $S^I_\ell = \{0\}\times\{0,\dots,\ell-1\}$ up to translation: a $1\times\ell$ rectangle occupying a single column.
\end{definition}

\begin{definition}[L-shaped piece]\label{def:l-shaped}
	A piece is \textbf{L-shaped} if its shape equals some $D_4$-image of the right tromino $S^L=\{(0,0),(1,0),(0,1)\}$---the unique non-rectangular 3-cell polyomino.
\end{definition}

\begin{definition}[Comb-shaped piece]\label{def:comb-shaped}
	A piece is \textbf{comb-shaped} with body height $H$ and $s$ free spaces if $C_P = \text{Body}\cup\text{Teeth}$, where $\text{Body} = \{x_0\}\times\{y_0,\dots,y_0+H-1\}$, $\text{Teeth} = \{x_1\}\times T$ with $\emptyset \neq T \subsetneq \{y_0,\dots,y_0+H-1\}$, and $\text{Free} = \{x_1\}\times(\{y_0,\dots,y_0+H-1\}\setminus T)$ with $s := |\text{Free}|$, subject to linear symmetry about the horizontal midline (Figure~\ref{fig:piece_types}(d)).
\end{definition}

\begin{definition}[Other linearly symmetric pieces]\label{def:other-type}
	A shape is of \textbf{other} type if it is linearly symmetric and not I-shaped, L-shaped, or comb-shaped (Figure~\ref{fig:piece_types}(e)).
\end{definition}

\begin{figure}[htbp]
	\centering
	\includegraphics[width=\textwidth]{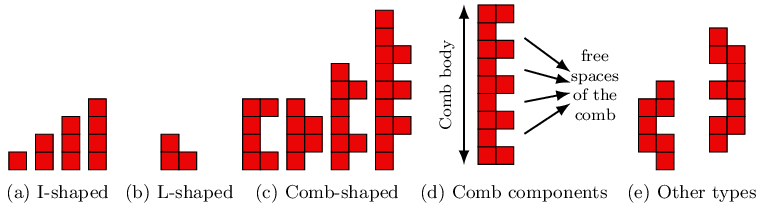}
	\caption{Categorization of polyomino pieces in the Flying Block puzzle.}
	\label{fig:piece_types}
\end{figure}

The objective is to find a minimum-length move sequence $\pi(s_0, s_g)$ placing $P_t$ at its designated target; all moves have unit cost, so $f^*(s_0) = \min_\pi N$ where $N$ is the sequence length. We distinguish \emph{jumping} moves ($C'_P \cap C_P = \emptyset$) from \emph{sliding} moves ($C'_P \cap C_P \neq \emptyset$, $|C'_P \cap C_P| < |C_P|$). This variant is NP-complete \cite{kanzaki2024computational}.

\begin{theorem}[General Move Constraint]
	\label{thm:general_move}
	Let $P$ be a piece of size $n$ with $k$ vacant cells available. Any valid move satisfies
	$|C_P \cap C'_P| \ge n - k$. Intuitively, a piece can vacate at most $k$ cells per move, since only vacant cells can be newly occupied (proof in Appendix~\ref{app:sn1-proofs}).
\end{theorem}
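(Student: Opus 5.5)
The plan is a counting argument on the cells that the moved piece newly occupies. I will fix a state and a piece $P$ of size $n$ whose current cells are $C_P$. I will let $V\subset F$ be the set of vacant cells, those covered by no piece, with $|V|=k$. A valid move carries $P$ to a congruent placement $C'_P\subset F$ with $|C'_P|=n$. The move must be collision-free with respect to all other pieces, which stay fixed during the move. The only modelling fact I need is this validity condition: every cell of $C'_P$ is either a cell $P$ already occupied or a cell that was vacant. In symbols, $C'_P \subseteq C_P \cup V$. I will state it explicitly as the formalisation of ``valid move'', because the theorem's informal justification (``only vacant cells can be newly occupied'') is exactly this inclusion.

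The key steps, in order, are as follows.

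First, I decompose $C'_P$ as the disjoint union $C'_P = (C'_P\cap C_P)\,\dot\cup\,(C'_P\setminus C_P)$. This gives $n = |C'_P\cap C_P| + |C'_P\setminus C_P|$.

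Second, I use the validity condition to get $C'_P\setminus C_P \subseteq V$. Cells of $V$ are disjoint from $C_P$, since $P$ occupies $C_P$, so the inclusion is consistent. It follows that $|C'_P\setminus C_P|\le |V| = k$.

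Third, substituting into the first step yields $|C'_P\cap C_P| \ge n-k$, which is the claim.

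As a remark, the symmetric statement also holds: $|C_P\setminus C'_P| = |C'_P\setminus C_P| \le k$, because both placements have size $n$. This is the ``vacates at most $k$ cells'' phrasing in the theorem.

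There is one consequence worth noting, which I do not need for the proof. If $k<n$, jumping moves with $C'_P\cap C_P=\emptyset$ are impossible, since they would force $n\le k$.

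The main obstacle is not the arithmetic but pinning down the move semantics precisely enough that $C'_P\subseteq C_P\cup V$ is justified. If a move is a single rigid translation to a final placement, I take the inclusion directly as the definition of legality. If instead a move is a continuous slide that may sweep through intermediate positions, I will argue that the final placement must still avoid other pieces' cells, which gives the same inclusion. The sweep only imposes extra constraints and never relaxes this one.

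I also have to make sure $k$ counts vacant cells of the whole frame in the current state, excluding $P$'s own cells. If $k$ were counted after removing $P$, the bound would read differently. With the former convention, the three steps go through with no case analysis on piece type, I-, L-, comb-shaped or other.
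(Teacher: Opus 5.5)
Your proof is correct and follows the paper's argument essentially line for line. The paper also uses the inclusion $C'_P\setminus C_P\subseteq V$ to get $|C'_P\setminus C_P|\le k$, then applies the disjoint decomposition $C'_P=(C_P\cap C'_P)\cup(C'_P\setminus C_P)$ to conclude $|C_P\cap C'_P|\ge n-k$; your explicit statement of the move semantics and of how $k$ is counted is a clarification the paper leaves implicit.
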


\begin{corollary}[Jumping Move Constraint]\label{cor:jumping}
	If $k < n$, piece $P$ cannot execute a jumping move (proof in Appendix~\ref{app:sn1-proofs}).
\end{corollary}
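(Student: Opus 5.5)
The plan is to derive Corollary~\ref{cor:jumping} directly from Theorem~\ref{thm:general_move} by contradiction, using only the definition of a jumping move. Suppose $k < n$ and that $P$ executes a jumping move from $C_P$ to $C'_P$. By definition, a jumping move satisfies $C'_P \cap C_P = \emptyset$, so $|C_P \cap C'_P| = 0$. Theorem~\ref{thm:general_move} applies to every valid move, so it gives $|C_P \cap C'_P| \ge n - k$. Since $k < n$ and both are integers, $n - k \ge 1$. Hence $0 \ge 1$, a contradiction, and no jumping move is possible.

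As a sanity check I would also give the direct counting argument behind the theorem, so the corollary does not depend on reading the theorem too loosely. A move relocates $P$ to a congruent translate $C'_P$ with $|C'_P| = |C_P| = n$. Every cell of $C'_P \setminus C_P$ must have been vacant before the move, because it cannot belong to another piece and it is not currently in $P$. Therefore $|C'_P \setminus C_P| \le k$. It follows that
\[
|C'_P \cap C_P| = n - |C'_P \setminus C_P| \ge n - k .
\]
For a jump, $C'_P \setminus C_P = C'_P$, so this would require $n \le k$, which contradicts $k<n$.

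The only point needing care is the convention for $k$. Here $k$ counts vacant cells of the frame excluding $P$'s own cells, which is how Theorem~\ref{thm:general_move} treats them, since $P$'s current cells are not vacant. Once that convention is fixed, there is no real obstacle, and the corollary is immediate.
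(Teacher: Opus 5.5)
Your proof is correct and is essentially the paper's argument. A jump forces $|C_P \cap C'_P| = 0$, while Theorem~\ref{thm:general_move} forces $|C_P \cap C'_P| \ge n-k \ge 1$ when $k<n$; your sanity-check counting, that $C'_P \setminus C_P$ consists of previously vacant cells and so $|C'_P \setminus C_P| \le k$, is exactly how the paper proves that theorem (minor point: moves may also rotate a piece, so ``congruent copy'' is more accurate than ``translate'', though only $|C'_P| = n$ is used).
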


\begin{corollary}[Crossing Constraint]\label{cor:crossing}
	A rectangular piece of size $n > 2$ with $k < n$ cannot cross from one column to the other through any sequence of moves (proof in Appendix~\ref{app:sn1-proofs}).
\end{corollary}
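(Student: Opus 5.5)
The plan is to combine the General Move Constraint (Theorem~\ref{thm:general_move}) with a conserved quantity: the number of cells of the piece in each column. The key observation is that in a width-2 frame a rectangular piece of size $n>2$ is either an I-shaped piece of length $n$, occupying a single column, or a $2\times(n/2)$ block occupying both columns. Only the first case makes ``crossing'' meaningful, since a $2\times m$ block already spans both columns and its column occupancy cannot change. So I would reduce to $P$ being I-shaped with $n\ge 3$ cells, lying in column $c\in\{1,2\}$, and show that every reachable placement of $P$ still lies entirely in column $c$.

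The argument is by induction on the length of the move sequence, with the invariant ``$P$ occupies a vertical segment of column $c$.'' For the inductive step I would take a single move $C_P\to C'_P$. Since $k<n$, Corollary~\ref{cor:jumping} rules out jumping moves, so $C_P\cap C'_P\neq\emptyset$. Since pieces move rigidly by translation, $C'_P=C_P+(dx,dy)$, and there are two cases.
\begin{itemize}
\item If $dx=0$, then $C'_P$ stays in column $c$ and the invariant is preserved.
\item If $dx=\pm1$, then $C'_P$ lies entirely in the other column, so $C'_P\cap C_P=\emptyset$. This contradicts the nonemptiness of the overlap.
\end{itemize}
Hence no single move changes the column, and by induction no sequence of moves does.

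The main obstacle is the interpretation of moves. If pieces may rotate, then an I-piece of length $n\ge3$ cannot be horizontal in a width-2 frame, so rotation is effectively excluded anyway; I would note this explicitly. I would also need $dx$ to be an integer in $\{-1,0,1\}$, which holds because placements are sets of frame cells. A subtler point is whether ``cross'' is also meant for the $2\times m$ block, for instance shifting so that some cells change column. Under translation this is impossible, since $dx=\pm1$ pushes part of the block outside the frame. I would state this case as trivial.

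Note that the hypothesis $n>2$ is only used to exclude the case of a $2\times1$ or $1\times 2$ piece. The $1\times1$ and horizontal domino pieces can change column occupancy even without jumping; for example, a horizontal domino is not single-column. The proof above needs only $k<n$ together with the single-column shape.
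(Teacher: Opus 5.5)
Your proof is correct and is essentially the paper's argument. Every valid placement of a $1\times n$ piece with $n>2$ is a vertical segment in one column (a horizontal placement does not fit, which also disposes of rotations), so a change of column forces $C_P\cap C'_P=\emptyset$, i.e.\ a jumping move, which Corollary~\ref{cor:jumping} rules out for $k<n$. Your induction on moves just rephrases the paper's ``some step changes the column index,'' and your extra $2\times m$ case is one the paper excludes by hypothesis.

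Your closing remark needs correcting. The argument needs \emph{every} valid placement of the piece to lie in a single column, and that is exactly where $n\ge 3$ enters once rotations are allowed. A vertical domino with $k=1$ is single-column, yet it can cross columns using only sliding moves by rotating through a horizontal placement; this is the paper's Class~C. Separately, a $1\times 1$ piece can never change column without jumping, since every move of a monomino is a jump.
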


Three solvability consequences follow: (I)~any piece with $k < n$ is restricted to sliding moves and cannot execute a jumping move (Corollary~\ref{cor:jumping}; Figure~\ref{fig:impossibility}(a)); (II)~a rectangular piece with $n > 2$ and $k < n$ cannot reach the opposite column (Corollary~\ref{cor:crossing}; Figure~\ref{fig:impossibility}(b)); (III)~a comb-shaped piece with $k < s$ cannot flip across its horizontal axis of symmetry (Theorem~\ref{thm:comb_flip}; Figure~\ref{fig:impossibility}(c)).

\begin{theorem}[Comb-Piece Flip Constraint]\label{thm:comb_flip}
	Let $P$ be a comb-shaped piece with $s:=|\text{Free}|$ free spaces and $k$ vacant cells. If $k<s$, then no sequence of valid moves can flip $P$ across its horizontal axis of symmetry. Intuitively, the flip would require the $s$ free spaces to become newly occupied, but only $k < s$ cells are available (proof in Appendix~\ref{app:sn2-comb-flip}).
\end{theorem}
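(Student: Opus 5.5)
The plan is to reduce the flip to the General Move Constraint (Theorem~\ref{thm:general_move}) and track an invariant across sequences of moves. Fix coordinates so that $P$ initially has Body $=\{x_0\}\times\{y_0,\dots,y_0+H-1\}$ and Teeth $=\{x_1\}\times T$. A flip across the horizontal axis of symmetry means reaching a placement in which the tooth column pattern is reflected. Concretely, the cells of $\{x_1\}$ that were Free become Teeth and vice versa, with the body still in column $x_0$, since a horizontal reflection preserves columns. Because the width is $2$ and $P$ spans both columns, any placement of $P$ is determined by a vertical offset together with an orientation (original or flipped). I would argue first that, since $k<s\le H-1<n$, Corollary~\ref{cor:jumping} rules out jumping. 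The piece therefore moves only by slides that keep overlap $|C_P\cap C'_P|\ge n-k$.

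The key step is to show that a single move cannot change orientation when $k<s$. Suppose one move goes from an original placement to a flipped placement, with vertical offset $d$. I would count the cells of the new placement that were not occupied by $P$ before: each such cell must have been vacant, so there are at most $k$ of them. At the same vertical alignment ($d=0$), every Free cell of the old placement in column $x_1$ becomes a tooth cell of the new one. That gives exactly $s$ newly occupied cells, contradicting $k<s$.

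For $d\neq0$, I would argue that the body shift in column $x_0$ alone newly occupies $|d|$ cells, and the column-$x_1$ mismatch contributes further. The aim is to bound the total from below by $s$, or else to note that the symmetric body overlap forces the count up. This is the main obstacle. The clean route I would try first is to compare the two column-$x_1$ patterns, $T$ and its reflection $T'$. The new cells in column $x_1$ number $|(T'+d)\setminus T|$, and in column $x_0$ they number $|d|$. I would then show $|(T'+d)\setminus T|+|d|\ge |T'\setminus T|$, using the fact that shifting by $d$ can fix at most $|d|$ mismatches. Under the midline-symmetry hypothesis of Definition~\ref{def:comb-shaped}, I would identify $|T'\setminus T|$ with $s$ (the free spaces that the reflection must fill).

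Given the single-move claim, orientation is an invariant of every valid move while $k$ is unchanged. Since $P$'s own moves do not change the number of vacant cells, and moves of other pieces do not change $P$'s orientation, induction on the length of the move sequence shows that $P$ never reaches the flipped orientation.
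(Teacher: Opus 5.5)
Your argument has a genuine gap at its first step: how you identify the flip. In this paper the flip is the mirror that carries the Body into the other column. The appendix proof uses $\text{Free}\subseteq\sigma(\text{Body})$, which requires $\sigma$ to map column $x_0$ onto column $x_1$. Class~F also paraphrases the theorem as ``the comb's Body cannot flip to the opposite column.'' You instead keep the body in column $x_0$ and reflect the tooth pattern vertically. But Definition~\ref{def:comb-shaped} imposes symmetry about the horizontal midline, so the reflected pattern is $T'=T$. Your ``flipped'' placement is therefore the original one, and $|T'\setminus T|=0$, not $s$; the midline hypothesis you invoke gives the opposite of what you need. Your sentence ``the cells that were Free become Teeth and vice versa'' describes complementation, not reflection. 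Moreover, the complement $\{y_0,\dots,y_0+H-1\}\setminus T$ is in general not a placement of $P$ at all. So both your $d=0$ count and your $d\neq 0$ analysis concern a move that does not exist. Your auxiliary principle that a shift by $d$ repairs at most $|d|$ mismatches is also false. With rows $0,\dots,8$, $T=\{1,3,5,7\}$ and $A=\{0,2,4,6,8\}$, one has $|A\setminus T|=5$ but $|(A+1)\setminus T|=1$.

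The repair is short and removes the offset difficulty entirely. Suppose a move takes the body from column $x_0$, rows $Y=\{y_0,\dots,y_0+H-1\}$, to column $x_1$, rows $Y+d$. Before the move, $P$ occupies only the rows $T$ of column $x_1$. So the newly occupied cells include $\{x_1\}\times\bigl((Y+d)\setminus T\bigr)$, which has at least $H-|T|=s>k$ elements for every $d$. This contradicts $|C'_P\setminus C_P|\le k$ from Theorem~\ref{thm:general_move}, and it makes your no-jumping step unnecessary. With this single-move lemma, your invariance argument does give the statement for arbitrary move sequences. The column containing the body is preserved by every move of $P$, untouched by moves of other pieces, and $k$ is constant. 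That is more than the paper's appendix proof spells out. Its inclusion $\text{Free}\subseteq\sigma(\text{Body})\subseteq C'_P$ is precisely the $d=0$ case of this count, and it treats the flip as a single move rather than proving invariance along a sequence.
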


\begin{figure}[htbp]
	\centering
	\includegraphics[width=.8\textwidth]{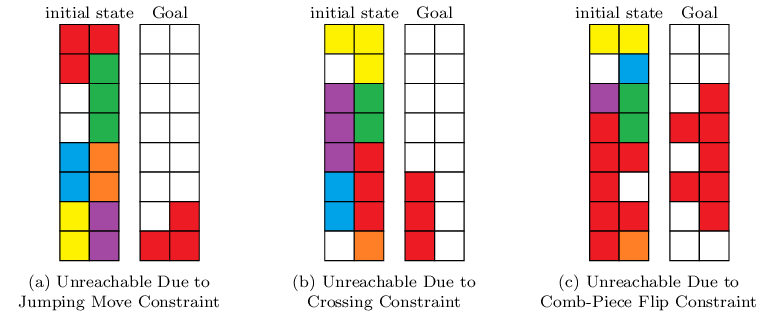}
	\caption{Unsolvable instances illustrating the three solvability constraints
	of Section~\ref{sec:formulation}. Each panel shows an initial state (left)
	and the target goal position (right).
	(a)~Unreachable due to the Jumping Move Constraint
	(Corollary~\ref{cor:jumping}): both the target piece and the blocking piece
	have size greater than the number of vacant units, so neither can execute a
	jumping move and the target piece cannot bypass the blocker to reach the goal.
	(b)~Unreachable due to the Crossing Constraint
	(Corollary~\ref{cor:crossing}): the target piece is an I-shaped piece of
	size~3 with fewer than three vacant units; it cannot cross to the opposite
	column where the goal position is located.
	(c)~Unreachable due to the Comb-Piece Flip Constraint
	(Theorem~\ref{thm:comb_flip}): the target piece is a comb-shaped piece with
	three free spaces, but the number of vacant units is strictly less than three,
	so the comb cannot flip across its horizontal axis of symmetry to align with
	the goal.}
	\label{fig:impossibility}
\end{figure}

\section{Heuristic Development for A* Algorithm}\label{sec:heuristics}

\subsection{Primitive Heuristic}

A baseline \emph{primitive heuristic} $h_p(s)=|P(s)|+\delta(s)$, where
$P(s)$ is the set of non-target pieces occupying at least one goal cell and
$\delta(s)\in\{0,1\}$ indicates whether $s$ is a non-goal state, provides
an admissible, consistent lower bound (Appendix~\ref{app:sn3-primitive}).  The general
heuristic below refines this estimate for the constrained sliding regime.

\subsection{General Heuristic}

When a blocking piece $p_i$ occupies $g_i$ goal cells and there are $k$ vacancies, Theorem~\ref{thm:general_move} implies at least $\lceil g_i / k \rceil$ moves are needed to clear it---motivating the following refinement (full derivation in Appendix~\ref{app:sn4-general}).

\begin{definition}[General Heuristic Function]
	\label{def:general_heuristic}
	Let $s$ be a state in the puzzle's state space. Let $G_{pos}$ be the set of cells that must be occupied by the target piece in the goal state. Let $P(s)$ be the set of non-target pieces that occupy at least one cell in $G_{pos}$ in state $s$. For each $p_i \in P(s)$, let $g_i$ be the number of goal cells in $G_{pos}$ occupied by $p_i$.

	Let $k$ denote the number of vacant units in the puzzle. The general heuristic function $h(s)$ is defined as
	\(
		h(s) = \sum_{p_i \in P(s)} \left\lceil \frac{g_i}{k} \right\rceil + \delta(s)
	\)
	where
	\[
		\delta(s) =
		\begin{cases}
			1 & \text{if } s \text{ is not a goal state} \\
			0 & \text{if } s \text{ is a goal state}.
		\end{cases}
	\]
\end{definition}

Figure~\ref{fig:heuristic-motivating-examples}(a)--(b) illustrates this heuristic.

\begin{figure}[p]
\centering
\includegraphics[width=\textwidth]{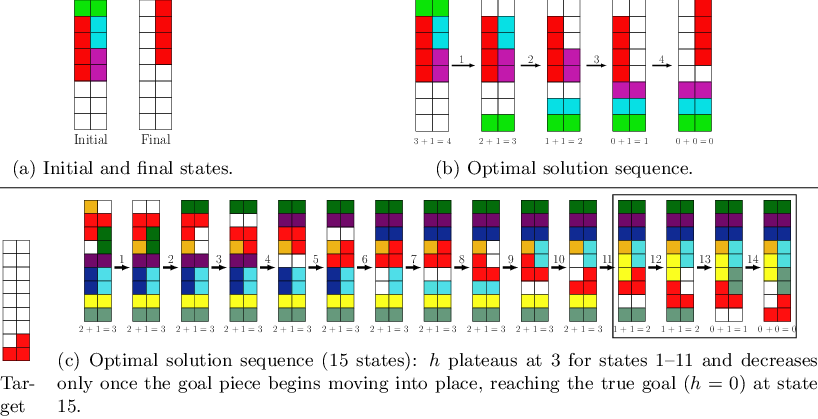}
\caption{Worked examples motivating the general and class-based heuristics.
\textbf{(a--b)} General heuristic: initial/final states and
the optimal-solution sequence (Section~\ref{sec:heuristics}, General Heuristic).
\textbf{(c)} Motivating example
for class-based heuristic selection: target state and the full 15-state
optimal sequence, showing the general heuristic plateaued at $h=3$ across
states~1--11 before decreasing over the final four states to the true goal
at state~15.}
\label{fig:heuristic-motivating-examples}
\end{figure}

\begin{theorem}[Consistency of $\hgeneral(s)$]
	\label{thm:general_consistency}
	$\hgeneral$ is consistent: a single move displaces at most $k$ cells of any blocking piece, reducing $\lceil g_p/k \rceil$ by at most $1$ (proof in Appendix~\ref{app:sn4-general}).
\end{theorem}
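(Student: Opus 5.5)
To prove consistency, I will check the two standard conditions: $\hgeneral(s_g)=0$ at every goal state $s_g$, and $\hgeneral(s)\le 1+\hgeneral(s')$ for every move $s\to s'$ (all moves have unit cost). At a goal state the target piece occupies all of $G_{pos}$, so $P(s_g)=\emptyset$ and $\delta(s_g)=0$, which gives $\hgeneral(s_g)=0$. Admissibility then follows from consistency by the usual induction along an optimal path. The real work is the one-step inequality.

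Fix a move of a single piece $q$ from $C_q$ to $C'_q$, and write $g_i$ and $g'_i$ for goal-cell counts before and after. Every piece other than $q$ keeps its cells, so its summand $\lceil g_i/k\rceil$ is unchanged. Moreover $k$ is invariant, since the total number of occupied cells is preserved. So the change in the sum comes only from $q$, and I split into cases.

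\emph{Case 1: $q$ is the target piece.} Then the sum does not change, and only $\delta$ can change. Since $\delta$ goes from $\delta(s)\le 1$ to $\delta(s')\ge 0$, we get $\hgeneral(s)-\hgeneral(s')\le 1$.

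\emph{Case 2: $q$ is a non-target piece.} The key bound comes from Theorem~\ref{thm:general_move}. The cells $q$ leaves, namely $C_q\setminus C'_q$, number $n-|C_q\cap C'_q|\le k$. Its goal-cell count can only drop by the goal cells among these, so $g'_q\ge g_q-k$, and hence
\[
\left\lceil \frac{g'_q}{k}\right\rceil \ge \left\lceil \frac{g_q-k}{k}\right\rceil = \left\lceil \frac{g_q}{k}\right\rceil - 1.
\]
This also covers $q$ leaving $P(s)$ entirely: in that case $g_q\le k$, so its term drops by exactly $1$. If $q$ enters or gains goal cells, its term only increases, which helps.

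The step I expect to be the main obstacle is $\delta$ in Case 2, because I need the sum to fall by $1$ and $\delta$ to fall by $1$ not to happen together. If $\delta(s')=0$, then $s'$ is a goal state. The move was by a non-target piece, so the target piece did not move, and $s$ already had the target on $G_{pos}$. Then $G_{pos}$ was fully covered by the target in $s$, so $P(s)=\emptyset$ and nothing could decrease. Therefore, whenever the sum drops by $1$, $\delta$ stays fixed, and the total drop is at most $1$.

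Combining the two cases gives $\hgeneral(s)\le 1+\hgeneral(s')$ for every move, which completes consistency.
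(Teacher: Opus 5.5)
Your proof is correct and follows the paper's route. Like the paper, you check $\hgeneral(s_g)=0$, split each move into target-piece versus non-target-piece cases, use the General Move Constraint to bound the drop of $g_q$ by $k$ (so $\lceil g_q/k\rceil$ falls by at most $1$), and observe that a non-target move cannot turn a non-goal state into a goal state; your small variations (bounding $\delta(s)-\delta(s')\le 1$ directly in the target case, and counting the vacated cells $C_q\setminus C'_q$ rather than $C'_q\setminus C_q$) are equivalent and, if anything, slightly cleaner.
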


\begin{algorithm}[H]
\caption{Heuristic Calculation for the General Heuristic}
\label{alg:general-heuristic}
\begin{algorithmic}[1]
\Require state $s$, goal $g$
\Ensure heuristic\_value
\State goal\_occupied\_positions $\leftarrow$ \{ $(i, j) \mid goal[i][j] \neq 0$ \} \Comment{Find goal positions}
\State pieces\_occupying\_goal $\leftarrow$ \{ board[$i$][$j$] $\mid$ $(i, j) \in$ goal\_occupied\_positions \textbf{and} board[$i$][$j$] $\neq 0$ \textbf{and} board[$i$][$j$] $\neq$ goal\_piece \} \Comment{Find non-target pieces occupying goal}
\State heuristic\_value $\leftarrow 0$ \Comment{Initialize the heuristic value}
\ForAll{$p \in$ pieces\_occupying\_goal}
    \State num\_goal\_positions\_occupied $\leftarrow$ number of $(i, j) \in$ goal\_occupied\_positions \textbf{where} board[$i$][$j$] $= p$
    \State heuristic\_value $\leftarrow$ heuristic\_value $+$ $\left\lceil \dfrac{\text{num\_goal\_positions\_occupied}}{\text{num\_vacant\_units}} \right\rceil$
\EndFor
\If{\textbf{not} is\_goal(board, goal)} \Comment{Add 1 if not goal state}
    \State heuristic\_value $\leftarrow$ heuristic\_value $+ 1$
\EndIf
\State \Return heuristic\_value
\end{algorithmic}
\end{algorithm}

\section{Depth-Prioritized A* Search}\label{sec4}

The general heuristic often assigns the same $f$-value to many frontier nodes (e.g., with 3 goal positions, $h$ ranges only from 0 to 4), leaving A* with insufficient guidance. Depth-based tie-breaking within $f$-value plateaus is well-established \cite{asai2016tiebreaking,pereira2016improved,correa2018analyzing}. Ranking tied nodes by greater $g$ is equivalent to smaller $h$, recovering the classical convention \cite{asai2016tiebreaking}; our contribution is the class-conditional extension in Section~\ref{subsec:tiebreaker-selection}.

Among several tie-breaking criteria tested (misplaced units, piece-to-goal distance, search depth), depth-priority performed best: when $f=g+h$ ties, we expand deeper nodes first. Figure~\ref{fig:dpa-walkthrough} shows a $6\times 2$ instance and illustrates how depth-first expansion resolves the large $f$-value plateau. Algorithm~\ref{alg:priority-comparison} implements this comparison; here depth is the root-to-node distance $g(n)$, which differs from the plateau-relative metric of Asai and Fukunaga \cite{asai2016tiebreaking}.

\begin{algorithm}[H]
\caption{Priority comparison function. Nodes with lower $f$ are preferred; ties are broken by greater $g$ (depth from root). See text for distinction from Asai and Fukunaga \cite{asai2016tiebreaking}.}
\label{alg:priority-comparison}
\begin{algorithmic}[1]
\Require node $a$, node $b$
\Ensure node with higher priority
\If{$a.f\_value < b.f\_value$}
    \State \Return $a$
\ElsIf{$a.f\_value > b.f\_value$}
    \State \Return $b$
\ElsIf{$a.f\_value = b.f\_value$}
    \If{$a.depth > b.depth$}
        \State \Return $a$
    \Else
        \State \Return $b$
    \EndIf
\EndIf
\end{algorithmic}
\end{algorithm}

\begin{figure}[!htbp]
	\centering
	\resizebox{0.72\textwidth}{!}{
		\begin{tikzpicture}

			\tikzstyle{every node}=[font=\LARGE]

			\draw [ fill={rgb,255:red,249; green,6; blue,6} ] (8.75,11.25) rectangle (7.5,10);
			\draw [ fill={rgb,255:red,249; green,6; blue,6} ] (10,11.25) rectangle (8.75,10);
			\draw [ fill={rgb,255:red,249; green,6; blue,6} ] (7.5,10) rectangle (8.75,8.75);
			\draw [ fill={rgb,255:red,233; green,250; blue,0} ] (10,10) rectangle (8.75,8.75);
			\draw [ fill={rgb,255:red,194; green,25; blue,172} ] (7.5,8.75) rectangle (8.75,7.5);
			\draw [ fill={rgb,255:red,194; green,25; blue,172} ] (10,8.75) rectangle (8.75,7.5);
			\draw [ fill={rgb,255:red,11; green,228; blue,7} ] (7.5,7.5) rectangle (8.75,6.25);
			\draw [ fill={rgb,255:red,7; green,224; blue,228} ] (10,7.5) rectangle (8.75,6.25);
			\draw [ fill={rgb,255:red,240; green,179; blue,45} ] (7.5,6.25) rectangle (8.75,5);
			\draw [ fill={rgb,255:red,0; green,42; blue,255} ] (10,6.25) rectangle (8.75,5);
			\draw  (7.5,5) rectangle (8.75,3.75);
			\draw  (10,5) rectangle (8.75,3.75);
			\draw [->, line width=2mm, >=latex] (7.25,7.5) -- (-1.5,16.75);
			\draw [ fill={rgb,255:red,249; green,6; blue,6} ] (-3.75,21.25) rectangle (-5,20);
			\draw [ fill={rgb,255:red,249; green,6; blue,6} ] (-2.5,21.25) rectangle (-3.75,20);
			\draw [ fill={rgb,255:red,249; green,6; blue,6} ] (-5,20) rectangle (-3.75,18.75);
			\draw  (-2.5,20) rectangle (-3.75,18.75);
			\draw [ fill={rgb,255:red,194; green,25; blue,172} ] (-5,18.75) rectangle (-3.75,17.5);
			\draw [ fill={rgb,255:red,194; green,25; blue,172} ] (-2.5,18.75) rectangle (-3.75,17.5);
			\draw [ fill={rgb,255:red,11; green,228; blue,7} ] (-5,17.5) rectangle (-3.75,16.25);
			\draw [ fill={rgb,255:red,7; green,224; blue,228} ] (-2.5,17.5) rectangle (-3.75,16.25);
			\draw [ fill={rgb,255:red,240; green,179; blue,45} ] (-5,16.25) rectangle (-3.75,15);
			\draw [ fill={rgb,255:red,0; green,42; blue,255} ] (-2.5,16.25) rectangle (-3.75,15);
			\draw [ fill={rgb,255:red,233; green,250; blue,0} ] (-5,15) rectangle (-3.75,13.75);
			\draw  (-2.5,15) rectangle (-3.75,13.75);
			\draw [->, line width=2mm, >=latex] (-5.5,17.5) -- (-10,22.5);
			\draw [->, line width=2mm, >=latex] (-5.5,17.5) -- (-9.75,14.5);
			\draw [ fill={rgb,255:red,249; green,6; blue,6} ] (-12.5,26.25) rectangle (-13.75,25);
			\draw [ fill={rgb,255:red,249; green,6; blue,6} ] (-11.25,26.25) rectangle (-12.5,25);
			\draw [ fill={rgb,255:red,249; green,6; blue,6} ] (-13.75,25) rectangle (-12.5,23.75);
			\draw [ fill={rgb,255:red,7; green,224; blue,228} ] (-11.25,25) rectangle (-12.5,23.75);
			\draw [ fill={rgb,255:red,194; green,25; blue,172} ] (-13.75,23.75) rectangle (-12.5,22.5);
			\draw [ fill={rgb,255:red,194; green,25; blue,172} ] (-11.25,23.75) rectangle (-12.5,22.5);
			\draw [ fill={rgb,255:red,11; green,228; blue,7} ] (-13.75,22.5) rectangle (-12.5,21.25);
			\draw  (-11.25,22.5) rectangle (-12.5,21.25);
			\draw [ fill={rgb,255:red,240; green,179; blue,45} ] (-13.75,21.25) rectangle (-12.5,20);
			\draw [ fill={rgb,255:red,0; green,42; blue,255} ] (-11.25,21.25) rectangle (-12.5,20);
			\draw [ fill={rgb,255:red,233; green,250; blue,0} ] (-13.75,20) rectangle (-12.5,18.75);
			\draw  (-11.25,20) rectangle (-12.5,18.75);
			\draw [ fill={rgb,255:red,249; green,6; blue,6} ] (-12.5,17.5) rectangle (-13.75,16.25);
			\draw [ fill={rgb,255:red,249; green,6; blue,6} ] (-11.25,17.5) rectangle (-12.5,16.25);
			\draw [ fill={rgb,255:red,194; green,25; blue,172} ] (-13.75,16.25) rectangle (-12.5,15);
			\draw [ fill={rgb,255:red,194; green,25; blue,172} ] (-11.25,16.25) rectangle (-12.5,15);
			\draw [ fill={rgb,255:red,11; green,228; blue,7} ] (-13.75,13.75) rectangle (-12.5,12.5);
			\draw [ fill={rgb,255:red,7; green,224; blue,228} ] (-11.25,13.75) rectangle (-12.5,12.5);
			\draw [ fill={rgb,255:red,240; green,179; blue,45} ] (-13.75,12.5) rectangle (-12.5,11.25);
			\draw [ fill={rgb,255:red,0; green,42; blue,255} ] (-11.25,12.5) rectangle (-12.5,11.25);
			\draw [ fill={rgb,255:red,233; green,250; blue,0} ] (-13.75,11.25) rectangle (-12.5,10);
			\draw  (-11.25,11.25) rectangle (-12.5,10);
			\draw [ color={rgb,255:red,255; green,51; blue,0} , line width=1.1pt ] (-15,8.75) rectangle (-1.75,26.75);
			\node [font=\Huge] at (-4,13) {\scalebox{1.3}{3 + 1 = 4}};
			\node [font=\Huge] at (-12.5,18.25) {\scalebox{1.3}{2 + 2 = 4}};
			\node [font=\Huge] at (-12.5,9.5) {\scalebox{1.3}{2 + 2 = 4}};
			\draw [->, line width=2mm, >=latex] (7.25,7.5) -- (-3.5,7);
			\draw [->, line width=2mm, >=latex] (7.25,7.5) -- (-3.25,5.25);
			\draw [ color={rgb,255:red,249; green,6; blue,6} , line width=1.1pt ] (-4.25,7.75) rectangle (-7.5,6.75);
			\draw [ fill={rgb,255:red,0; green,0; blue,0} , line width=1.1pt ] (-5,7.25) circle (0.25cm);
			\draw [ fill={rgb,255:red,0; green,0; blue,0} , line width=1.1pt ] (-6,7.25) circle (0.25cm);
			\draw [ fill={rgb,255:red,0; green,0; blue,0} , line width=1.1pt ] (-7,7.25) circle (0.25cm);
			\draw [ color={rgb,255:red,249; green,6; blue,6} , line width=1.1pt ] (-4.25,5.75) rectangle (-7.5,4.75);
			\draw [ fill={rgb,255:red,0; green,0; blue,0} , line width=1.1pt ] (-5,5.25) circle (0.25cm);
			\draw [ fill={rgb,255:red,0; green,0; blue,0} , line width=1.1pt ] (-6,5.25) circle (0.25cm);
			\draw [ fill={rgb,255:red,0; green,0; blue,0} , line width=1.1pt ] (-7,5.25) circle (0.25cm);
			\draw [->, line width=2mm, >=latex] (7.25,7.5) -- (-1.25,0.25);
			\draw [ fill={rgb,255:red,249; green,6; blue,6} ] (-3.75,3.75) rectangle (-5,2.5);
			\draw [ fill={rgb,255:red,249; green,6; blue,6} ] (-2.5,3.75) rectangle (-3.75,2.5);
			\draw [ fill={rgb,255:red,249; green,6; blue,6} ] (-5,2.5) rectangle (-3.75,1.25);
			\draw [ fill={rgb,255:red,233; green,250; blue,0} ] (-2.5,2.5) rectangle (-3.75,1.25);
			\draw [ fill={rgb,255:red,194; green,25; blue,172} ] (-5,1.25) rectangle (-3.75,0);
			\draw [ fill={rgb,255:red,194; green,25; blue,172} ] (-2.5,1.25) rectangle (-3.75,0);
			\draw [ fill={rgb,255:red,11; green,228; blue,7} ] (-5,0) rectangle (-3.75,-1.25);
			\draw [ fill={rgb,255:red,7; green,224; blue,228} ] (-2.5,0) rectangle (-3.75,-1.25);
			\draw [ fill={rgb,255:red,240; green,179; blue,45} ] (-5,-1.25) rectangle (-3.75,-2.5);
			\draw  (-2.5,-1.25) rectangle (-3.75,-2.5);
			\draw [ fill={rgb,255:red,0; green,42; blue,255} ] (-5,-2.5) rectangle (-3.75,-3.75);
			\draw  (-2.5,-2.5) rectangle (-3.75,-3.75);
			\draw [->, line width=2mm, >=latex] (-5.5,-2.25) -- (-11,1.25);
			\draw [->, line width=2mm, >=latex] (-5.5,-2.25) -- (-11,-7.25);
			\draw [ fill={rgb,255:red,249; green,6; blue,6} ] (-13.75,3.75) rectangle (-15,2.5);
			\draw [ fill={rgb,255:red,249; green,6; blue,6} ] (-12.5,3.75) rectangle (-13.75,2.5);
			\draw [ fill={rgb,255:red,249; green,6; blue,6} ] (-15,2.5) rectangle (-13.75,1.25);
			\draw [ fill={rgb,255:red,233; green,250; blue,0} ] (-12.5,2.5) rectangle (-13.75,1.25);
			\draw [ fill={rgb,255:red,194; green,25; blue,172} ] (-15,1.25) rectangle (-13.75,0);
			\draw [ fill={rgb,255:red,194; green,25; blue,172} ] (-12.5,1.25) rectangle (-13.75,0);
			\draw [ fill={rgb,255:red,11; green,228; blue,7} ] (-15,0) rectangle (-13.75,-1.25);
			\draw  (-12.5,0) rectangle (-13.75,-1.25);
			\draw [ fill={rgb,255:red,240; green,179; blue,45} ] (-15,-1.25) rectangle (-13.75,-2.5);
			\draw [ fill={rgb,255:red,7; green,224; blue,228} ] (-12.5,-1.25) rectangle (-13.75,-2.5);
			\draw [ fill={rgb,255:red,0; green,42; blue,255} ] (-15,-2.5) rectangle (-13.75,-3.75);
			\draw  (-12.5,-2.5) rectangle (-13.75,-3.75);
			\draw [ fill={rgb,255:red,249; green,6; blue,6} ] (-13.75,-5) rectangle (-15,-6.25);
			\draw [ fill={rgb,255:red,249; green,6; blue,6} ] (-12.5,-5) rectangle (-13.75,-6.25);
			\draw [ fill={rgb,255:red,249; green,6; blue,6} ] (-15,-6.25) rectangle (-13.75,-7.5);
			\draw [ fill={rgb,255:red,233; green,250; blue,0} ] (-12.5,-6.25) rectangle (-13.75,-7.5);
			\draw [ fill={rgb,255:red,194; green,25; blue,172} ] (-15,-7.5) rectangle (-13.75,-8.75);
			\draw [ fill={rgb,255:red,194; green,25; blue,172} ] (-12.5,-7.5) rectangle (-13.75,-8.75);
			\draw [ fill={rgb,255:red,11; green,228; blue,7} ] (-15,-8.75) rectangle (-13.75,-10);
			\draw  (-12.5,-8.75) rectangle (-13.75,-10);
			\draw [ fill={rgb,255:red,240; green,179; blue,45} ] (-15,-10) rectangle (-13.75,-11.25);
			\draw  (-12.5,-10) rectangle (-13.75,-11.25);
			\draw [ fill={rgb,255:red,0; green,42; blue,255} ] (-15,-11.25) rectangle (-13.75,-12.5);
			\draw [ fill={rgb,255:red,7; green,224; blue,255} ] (-12.5,-11.25) rectangle (-13.75,-12.5);
			\node [font=\Huge] at (-4.25,-4.5) {\scalebox{1.4}{3 + 1 = 4}};
			\node [font=\Huge] at (-14,-4.25) {\scalebox{1.4}{2 + 2 = 4}};
			\node [font=\Huge] at (-13.75,-13) {\scalebox{1.4}{2 + 2 = 4}};
			\draw [->, line width=2mm, >=latex] (10.25,7.5) -- (23.75,23.75);
			\draw [ fill={rgb,255:red,249; green,6; blue,6} ] (26.5,27.25) rectangle (25.25,26);
			\draw [ fill={rgb,255:red,249; green,6; blue,6} ] (27.75,27.25) rectangle (26.5,26);
			\draw [ fill={rgb,255:red,249; green,6; blue,6} ] (25.25,26) rectangle (26.5,24.75);
			\draw [ fill={rgb,255:red,233; green,250; blue,0} ] (27.75,26) rectangle (26.5,24.75);
			\draw [ fill={rgb,255:red,194; green,25; blue,172} ] (25.25,24.75) rectangle (26.5,23.5);
			\draw [ fill={rgb,255:red,194; green,25; blue,172} ] (27.75,24.75) rectangle (26.5,23.5);
			\draw [ fill={rgb,255:red,11; green,228; blue,7} ] (25.25,23.5) rectangle (26.5,22.25);
			\draw [ fill={rgb,255:red,7; green,224; blue,228} ] (27.75,23.5) rectangle (26.5,22.25);
			\draw  (25.25,22.25) rectangle (26.5,21);
			\draw [ fill={rgb,255:red,0; green,42; blue,255} ] (27.75,22.25) rectangle (26.5,21);
			\draw  (25.25,21) rectangle (26.5,19.75);
			\draw [ fill={rgb,255:red,240; green,179; blue,45} ] (27.75,21) rectangle (26.5,19.75);
			\draw [->, line width=2mm, >=latex] (28,23.5) -- (34.5,27.75);
			\draw [->, line width=2mm, >=latex] (28,23.5) -- (34.75,20);
			\draw [ fill={rgb,255:red,249; green,6; blue,6} ] (36.25,32) rectangle (35,30.75);
			\draw [ fill={rgb,255:red,249; green,6; blue,6} ] (37.5,32) rectangle (36.25,30.75);
			\draw [ fill={rgb,255:red,249; green,6; blue,6} ] (35,30.75) rectangle (36.25,29.5);
			\draw [ fill={rgb,255:red,233; green,250; blue,0} ] (37.5,30.75) rectangle (36.25,29.5);
			\draw [ fill={rgb,255:red,194; green,25; blue,172} ] (35,29.5) rectangle (36.25,28.25);
			\draw [ fill={rgb,255:red,194; green,25; blue,172} ] (37.5,29.5) rectangle (36.25,28.25);
			\draw [ fill={rgb,255:red,11; green,228; blue,7} ] (35,28.25) rectangle (36.25,27);
			\draw  (37.5,28.25) rectangle (36.25,27);
			\draw  (35,27) rectangle (36.25,25.75);
			\draw [ fill={rgb,255:red,0; green,42; blue,255} ] (37.5,27) rectangle (36.25,25.75);
			\draw [ fill={rgb,255:red,7; green,224; blue,255} ] (35,25.75) rectangle (36.25,24.5);
			\draw [ fill={rgb,255:red,240; green,179; blue,45} ] (37.5,25.75) rectangle (36.25,24.5);
			\draw [ fill={rgb,255:red,249; green,6; blue,6} ] (36.25,22.75) rectangle (35,21.5);
			\draw [ fill={rgb,255:red,249; green,6; blue,6} ] (37.5,22.75) rectangle (36.25,21.5);
			\draw [ fill={rgb,255:red,249; green,6; blue,6} ] (35,21.5) rectangle (36.25,20.25);
			\draw [ fill={rgb,255:red,233; green,250; blue,0} ] (37.5,21.5) rectangle (36.25,20.25);
			\draw [ fill={rgb,255:red,194; green,25; blue,172} ] (35,20.25) rectangle (36.25,19);
			\draw [ fill={rgb,255:red,194; green,25; blue,172} ] (37.5,20.25) rectangle (36.25,19);
			\draw [ fill={rgb,255:red,11; green,228; blue,7} ] (35,19) rectangle (36.25,17.75);
			\draw  (37.5,19) rectangle (36.25,17.75);
			\draw [ fill={rgb,255:red,7; green,224; blue,228} ] (35,17.75) rectangle (36.25,16.5);
			\draw [ fill={rgb,255:red,0; green,42; blue,255} ] (37.5,17.75) rectangle (36.25,16.5);
			\draw  (35,16.5) rectangle (36.25,15.25);
			\draw [ fill={rgb,255:red,240; green,179; blue,45} ] (37.5,16.5) rectangle (36.25,15.25);
			\node [font=\Huge] at (26.75,19) {\scalebox{1.4}{3 + 1 = 4}};
			\node [font=\Huge] at (36,23.75) {\scalebox{1.4}{2 + 2 = 4}};
			\node [font=\Huge] at (36,14.5) {\scalebox{1.4}{2 + 2 = 4}};
			\draw [->, line width=2mm, >=latex] (10.25,7.5) -- (21.5,7.25);
			\draw [ fill={rgb,255:red,249; green,6; blue,6} ] (25.25,8.75) rectangle (24,7.5);
			\draw [ fill={rgb,255:red,249; green,6; blue,6} ] (26.5,8.75) rectangle (25.25,7.5);
			\draw [ fill={rgb,255:red,249; green,6; blue,6} ] (24,7.5) rectangle (25.25,6.25);
			\draw [ fill={rgb,255:red,233; green,250; blue,0} ] (26.5,7.5) rectangle (25.25,6.25);
			\draw  (24,6.25) rectangle (25.25,5);
			\draw  (26.5,6.25) rectangle (25.25,5);
			\draw [ fill={rgb,255:red,11; green,228; blue,7} ] (24,5) rectangle (25.25,3.75);
			\draw [ fill={rgb,255:red,7; green,224; blue,228} ] (26.5,5) rectangle (25.25,3.75);
			\draw [ fill={rgb,255:red,240; green,179; blue,45} ] (24,3.75) rectangle (25.25,2.5);
			\draw [ fill={rgb,255:red,0; green,42; blue,255} ] (26.5,3.75) rectangle (25.25,2.5);
			\draw [ fill={rgb,255:red,194; green,25; blue,172} ] (24,2.5) rectangle (25.25,1.25);
			\draw [ fill={rgb,255:red,194; green,25; blue,172} ] (26.5,2.5) rectangle (25.25,1.25);
			\draw [->, line width=2mm, >=latex] (27,5) -- (28.25,5);
			\draw  (30.25,8.75) rectangle (29,7.5);
			\draw  (31.5,8.75) rectangle (30.25,7.5);
			\draw [ fill={rgb,255:red,249; green,6; blue,6} ] (29,7.5) rectangle (30.25,6.25);
			\draw [ fill={rgb,255:red,233; green,250; blue,0} ] (31.5,7.5) rectangle (30.25,6.25);
			\draw [ fill={rgb,255:red,249; green,6; blue,6} ] (29,6.25) rectangle (30.25,5);
			\draw [ fill={rgb,255:red,249; green,6; blue,6} ] (31.5,6.25) rectangle (30.25,5);
			\draw [ fill={rgb,255:red,11; green,228; blue,7} ] (29,5) rectangle (30.25,3.75);
			\draw [ fill={rgb,255:red,7; green,224; blue,228} ] (31.5,5) rectangle (30.25,3.75);
			\draw [ fill={rgb,255:red,240; green,179; blue,45} ] (29,3.75) rectangle (30.25,2.5);
			\draw [ fill={rgb,255:red,0; green,42; blue,255} ] (31.5,3.75) rectangle (30.25,2.5);
			\draw [ fill={rgb,255:red,194; green,25; blue,172} ] (29,2.5) rectangle (30.25,1.25);
			\draw [ fill={rgb,255:red,194; green,25; blue,172} ] (31.5,2.5) rectangle (30.25,1.25);
			\draw [->, line width=2mm, >=latex] (32,5) -- (33.5,5);
			\draw  (35.25,8.75) rectangle (34,7.5);
			\draw [ fill={rgb,255:red,7; green,224; blue,228} ] (36.5,8.75) rectangle (35.25,7.5);
			\draw [ fill={rgb,255:red,249; green,6; blue,6} ] (34,7.5) rectangle (35.25,6.25);
			\draw [ fill={rgb,255:red,233; green,250; blue,0} ] (36.5,7.5) rectangle (35.25,6.25);
			\draw [ fill={rgb,255:red,249; green,6; blue,6} ] (34,6.25) rectangle (35.25,5);
			\draw [ fill={rgb,255:red,249; green,6; blue,6} ] (36.5,6.25) rectangle (35.25,5);
			\draw [ fill={rgb,255:red,11; green,228; blue,7} ] (34,5) rectangle (35.25,3.75);
			\draw  (36.5,5) rectangle (35.25,3.75);
			\draw [ fill={rgb,255:red,240; green,179; blue,45} ] (34,3.75) rectangle (35.25,2.5);
			\draw [ fill={rgb,255:red,0; green,42; blue,255} ] (36.5,3.75) rectangle (35.25,2.5);
			\draw [ fill={rgb,255:red,194; green,25; blue,172} ] (34,2.5) rectangle (35.25,1.25);
			\draw [ fill={rgb,255:red,194; green,25; blue,172} ] (36.5,2.5) rectangle (35.25,1.25);
			\draw [->, line width=2mm, >=latex] (37,5) -- (38.5,5);
			\draw  (40.25,8.75) rectangle (39,7.5);
			\draw [ fill={rgb,255:red,7; green,224; blue,228} ] (41.5,8.75) rectangle (40.25,7.5);
			\draw  (39,7.5) rectangle (40.25,6.25);
			\draw [ fill={rgb,255:red,233; green,250; blue,0} ] (41.5,7.5) rectangle (40.25,6.25);
			\draw [ fill={rgb,255:red,249; green,6; blue,6} ] (39,6.25) rectangle (40.25,5);
			\draw [ fill={rgb,255:red,249; green,6; blue,6} ] (41.5,6.25) rectangle (40.25,5);
			\draw [ fill={rgb,255:red,11; green,228; blue,7} ] (39,5) rectangle (40.25,3.75);
			\draw [ fill={rgb,255:red,249; green,6; blue,6} ] (41.5,5) rectangle (40.25,3.75);
			\draw [ fill={rgb,255:red,240; green,179; blue,45} ] (39,3.75) rectangle (40.25,2.5);
			\draw [ fill={rgb,255:red,0; green,42; blue,255} ] (41.5,3.75) rectangle (40.25,2.5);
			\draw [ fill={rgb,255:red,194; green,25; blue,172} ] (39,2.5) rectangle (40.25,1.25);
			\draw [ fill={rgb,255:red,194; green,25; blue,172} ] (41.5,2.5) rectangle (40.25,1.25);
			\node [font=\Huge] at (25.25,0.5) {\scalebox{1.4}{2 + 1 = 3}};
			\node [font=\Huge] at (30.25,0.5) {\scalebox{1.4}{2 + 2 = 4}};
			\node [font=\Huge] at (35.25,0.5) {\scalebox{1.4}{1 + 3 = 4}};
			\node [font=\Huge] at (40.25,0.5) {\scalebox{1.4}{0 + 4 = 4}};
			\draw [->, line width=2mm, >=latex] (10.25,7.5) -- (19.75,-2.5);
			\draw [ fill={rgb,255:red,249; green,6; blue,6} ] (22.75,-4.25) rectangle (21.5,-5.5);
			\draw [ fill={rgb,255:red,249; green,6; blue,6} ] (24,-4.25) rectangle (22.75,-5.5);
			\draw [ fill={rgb,255:red,249; green,6; blue,6} ] (21.5,-5.5) rectangle (22.75,-6.75);
			\draw [ fill={rgb,255:red,233; green,250; blue,0} ] (24,-5.5) rectangle (22.75,-6.75);
			\draw [ fill={rgb,255:red,194; green,25; blue,172} ] (21.5,-6.75) rectangle (22.75,-8);
			\draw [ fill={rgb,255:red,194; green,25; blue,172} ] (24,-6.75) rectangle (22.75,-8);
			\draw [ fill={rgb,255:red,11; green,228; blue,7} ] (21.5,-8) rectangle (22.75,-9.25);
			\draw [ fill={rgb,255:red,7; green,224; blue,228} ] (24,-8) rectangle (22.75,-9.25);
			\draw  (21.5,-9.25) rectangle (22.75,-10.5);
			\draw [ fill={rgb,255:red,0; green,42; blue,255} ] (24,-9.25) rectangle (22.75,-10.5);
			\draw [ fill={rgb,255:red,240; green,179; blue,45} ] (21.5,-10.5) rectangle (22.75,-11.75);
			\draw  (24,-10.5) rectangle (22.75,-11.75);
			\draw [->, line width=2mm, >=latex] (24.25,-8.75) -- (29.75,-4.25);
			\draw [->, line width=2mm, >=latex] (24.25,-8.75) -- (28.5,-14.25);
			\draw [ fill={rgb,255:red,249; green,6; blue,6} ] (31.25,-2.5) rectangle (30,-3.75);
			\draw [ fill={rgb,255:red,249; green,6; blue,6} ] (32.5,-2.5) rectangle (31.25,-3.75);
			\draw [ fill={rgb,255:red,249; green,6; blue,6} ] (30,-3.75) rectangle (31.25,-5);
			\draw [ fill={rgb,255:red,233; green,250; blue,0} ] (32.5,-3.75) rectangle (31.25,-5);
			\draw [ fill={rgb,255:red,194; green,25; blue,172} ] (30,-5) rectangle (31.25,-6.25);
			\draw [ fill={rgb,255:red,194; green,25; blue,172} ] (32.5,-5) rectangle (31.25,-6.25);
			\draw [ fill={rgb,255:red,11; green,228; blue,7} ] (30,-6.25) rectangle (31.25,-7.5);
			\draw  (32.5,-6.25) rectangle (31.25,-7.5);
			\draw  (30,-7.5) rectangle (31.25,-8.75);
			\draw [ fill={rgb,255:red,0; green,42; blue,255} ] (32.5,-7.5) rectangle (31.25,-8.75);
			\draw [ fill={rgb,255:red,240; green,179; blue,45} ] (30,-8.75) rectangle (31.25,-10);
			\draw [ fill={rgb,255:red,7; green,224; blue,228} ] (32.5,-8.75) rectangle (31.25,-10);
			\draw [ fill={rgb,255:red,249; green,6; blue,6} ] (31.25,-11.75) rectangle (30,-13);
			\draw [ fill={rgb,255:red,249; green,6; blue,6} ] (32.5,-11.75) rectangle (31.25,-13);
			\draw [ fill={rgb,255:red,249; green,6; blue,6} ] (30,-13) rectangle (31.25,-14.25);
			\draw [ fill={rgb,255:red,233; green,250; blue,0} ] (32.5,-13) rectangle (31.25,-14.25);
			\draw [ fill={rgb,255:red,194; green,25; blue,172} ] (30,-14.25) rectangle (31.25,-15.5);
			\draw [ fill={rgb,255:red,194; green,25; blue,172} ] (32.5,-14.25) rectangle (31.25,-15.5);
			\draw [ fill={rgb,255:red,11; green,228; blue,7} ] (30,-15.5) rectangle (31.25,-16.75);
			\draw  (32.5,-15.5) rectangle (31.25,-16.75);
			\draw [ fill={rgb,255:red,7; green,224; blue,228} ] (30,-16.75) rectangle (31.25,-18);
			\draw [ fill={rgb,255:red,0; green,42; blue,255} ] (32.5,-16.75) rectangle (31.25,-18);
			\draw [ fill={rgb,255:red,240; green,179; blue,45} ] (30,-18) rectangle (31.25,-19.25);
			\draw  (32.5,-18) rectangle (31.25,-19.25);
			\node [font=\Huge] at (22.75,-12.75) {\scalebox{1.4}{3 + 1 = 4}};
			\node [font=\Huge] at (31.25,-10.75) {\scalebox{1.4}{2 + 2 = 4}};
			\node [font=\Huge] at (31.25,-20.25) {\scalebox{1.4}{2 + 2 = 4}};
			\draw [ color={rgb,255:red,249; green,6; blue,6} , line width=1.1pt ] (-1.75,4.5) rectangle (-16.25,-14);
			\draw [ color={rgb,255:red,249; green,6; blue,6} , line width=1.1pt ] (38.5,32.5) rectangle (24.25,13.25);
			\draw [ color={rgb,255:red,249; green,6; blue,6} , line width=1.1pt ] (33.5,-2) rectangle (20.5,-21.25);
			\node [font=\Huge] at (32.75,10.25) {\scalebox{1.6}{Optimal solution}};
			\draw [ color={rgb,255:red,11; green,228; blue,7} , line width=1.1pt ] (42.75,12.5) rectangle (22,-1.25);
			\node [font=\Huge] at (8.5,12.25) {\scalebox{1.6}{Initial state}};

		\end{tikzpicture}
	}

	\caption{Illustration of a puzzle solved in 4 moves, highlighting the optimal path. The figure shows how depth prioritization improves search efficiency by prioritizing deeper states with the same $f$-value, allowing the algorithm to focus on more promising paths and reach the goal faster.}
	\label{fig:dpa-walkthrough}

\end{figure}

\section{Kinematic Taxonomy and State-Space Partitioning}\label{sec5}

The general heuristic $\hgeneral$ plateaus at $h = 3$ across the first eleven of fifteen states on the sliding-regime motivating instance of Figure~\ref{fig:heuristic-motivating-examples}(c), providing no discriminating guidance until the final four steps. This motivates a kinematic partition of the state space by the pair (vacancy ratio $k/n$, goal-piece geometry), from which class-specific admissible heuristics are derived. For each class, we designed a specialized consistent heuristic; CBHA* detects the class from the initial configuration and applies the corresponding heuristic, using graph-search mode with duplicate detection for all classes (Section~\ref{sec:methods}).

\subsection{Sufficiency: The Partition Tree}\label{subsec:partition-sufficiency}

We consider only states with $k \ge 1$ (at $k = 0$ no move is possible). Algorithm~\ref{alg:state-type-assignment} implements the partition tree; each branch is forced by a named result:
(1)~Theorem~\ref{thm:general_move} separates $k \ge n$ (\textbf{Class~A}, jumping available) from $k < n$ (sliding only).
(2)~Definitions~\ref{def:i-shaped}--\ref{def:other-type} partition shapes into I, L, comb, and other.
(3)~For I-shaped: Corollary~\ref{cor:crossing} with $n \ge 3$ yields \textbf{Class~B}; the remaining case $n=2$ yields \textbf{Class~C} (rotation enables column-crossing).
(4)~For L-shaped ($n=3$): $k=1$ yields \textbf{Class~D}; $k=2$ yields \textbf{Class~E} (size-1 blockers can be bypassed).
(5)~For comb-shaped: Theorem~\ref{thm:comb_flip} with $s > k$ pins the body to one side $\rightarrow$ \textbf{Class~F}; otherwise $\rightarrow$ \textbf{Class~G}.
(6)~``Other'' shapes also fall to \textbf{Class~G}.
Thus Class~G is reached by two independent routes, as reflected in the \texttt{Else} branch of Algorithm~\ref{alg:state-type-assignment}.

\paragraph{Necessity of the seven-class partition.}
The partition into exactly seven classes is not merely a design choice but a
formal necessity.  Certain class boundaries are admissibility-critical: applying
a neighbouring class's heuristic across a boundary produces inadmissible
estimates, as the borrowed rule fails to account for the kinematic freedoms
specific to the target class.  Other boundaries arise from guidance rather than
admissibility considerations, since the general heuristic remains consistent
across the entire state space yet becomes uninformative in the sliding-only
regime.

\begin{theorem}[Kinematic partition]\label{thm:partition}
	For puzzle states with $k \ge 1$, Algorithm~\ref{alg:state-type-assignment} partitions the state space into exactly seven classes A--G. For each class $X$, there exists a heuristic $h_X$ that is admissible and consistent over every state in $X$ (Theorem~\ref{thm:general_consistency} for A; Theorems~\ref{thm:hB_consistency}--\ref{thm:hE_consistency} for B--E; Sections~\ref{subsec:class-f} and~\ref{subsec:class-g} for F and~G).
\end{theorem}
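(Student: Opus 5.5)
The proof has two parts: the partition claim, which concerns the decision tree of Algorithm~\ref{alg:state-type-assignment}, and the existence claim, which is a case-by-case appeal to results established for each class.

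\textbf{Partition.} I would show that the tree is a total function from states with $k\ge 1$ to $\{A,\dots,G\}$. It then follows that the preimages are pairwise disjoint and cover the state space. The key observation is that every branch tests a predicate together with its complement: $k\ge n$ versus $k<n$, then shape type, then the type-specific test. So each state follows exactly one root-to-leaf path. The nontrivial point is exhaustiveness at the shape level. Definitions~\ref{def:i-shaped}--\ref{def:other-type} give four mutually exclusive types by construction, since ``other'' is defined as the complement within linearly symmetric shapes. I would state, as the paper's standing modelling assumption, that every target piece is linearly symmetric, so the four types are exhaustive.

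Within the sliding regime $k<n$, the type-specific tests are also exhaustive:
\begin{itemize}
\item For I-shaped pieces, $n\ge 2$ because $k\ge 1$ and $k<n$, so $n\ge3$ gives Class~B and $n=2$ gives Class~C.
\item For L-shaped pieces, $n=3$ and $1\le k\le 2$, so $k=1$ gives Class~D and $k=2$ gives Class~E.
\item For comb-shaped pieces, the test is $s>k$ for Class~F versus $s\le k$ for Class~G.
\item Other shapes go to Class~G.
\end{itemize}
Class~G is a union of two disjoint preimages, and this does not break disjointness from the remaining classes. Each of the seven classes is nonempty, which I would witness by one small instance per leaf, for example the figures already given. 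So there are exactly seven classes.

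\textbf{Heuristics.} For Class~A, take $h_A=\hgeneral$. Theorem~\ref{thm:general_consistency} gives consistency on the whole state space, hence on A. Consistency together with $h(\text{goal})=0$ implies admissibility by the standard telescoping argument along an optimal path. For Classes~B--E, invoke Theorems~\ref{thm:hB_consistency}--\ref{thm:hE_consistency} directly, and again deduce admissibility from consistency plus goal-zero. For F and G, cite the constructions in Sections~\ref{subsec:class-f} and~\ref{subsec:class-g}. If those sections only supply a heuristic without a full consistency proof, there is a fallback: $\hgeneral$ is consistent everywhere, so $\max(h_X,\hgeneral)$ is consistent whenever $h_X$ is. In the worst case $h_X=\hgeneral$ itself already witnesses existence.

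\textbf{Closure subtlety.} Consistency is a statement about edges $s\to s'$, so I must check that successors stay in the same class, or else that consistency is only required along edges within $X$. Moves preserve $n$, $k$, the piece shape, and $s$, so the class label is invariant along every edge. Each $h_X$ is therefore consistent on the subgraph induced by $X$, which is all A* uses, since CBHA* fixes the class from the initial state.

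\textbf{Main obstacle.} I expect the hardest step to be exactly this invariance together with the exhaustiveness assumption on shapes. In particular, I would need to argue that the comb free-space count $s$ and the shape type are intrinsic to the piece and unchanged under translation and the allowed motions. If flips or rotations are permitted, I would check that the relevant $D_4$-images stay within the same type definition, which holds because each definition is closed under the relevant symmetries.
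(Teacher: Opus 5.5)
Your proposal is correct and takes essentially the paper's route: the paper likewise walks the branches of Algorithm~\ref{alg:state-type-assignment} (using $1\le k<n$ to force $n=2$ for Class~C and $k\in\{1,2\}$ for L-shaped pieces, with Class~G reached by two routes), and it discharges the heuristic claim by citing Theorems~\ref{thm:general_consistency} and~\ref{thm:hB_consistency}--\ref{thm:hE_consistency} and Sections~\ref{subsec:class-f}--\ref{subsec:class-g}. Your additional checks go beyond what the paper writes and only strengthen it: nonemptiness of each leaf, invariance of the class label along moves, and the remark that $\hgeneral$ alone already witnesses the bare existence claim. One caveat is that the invariance step for Class~C needs Definition~\ref{def:i-shaped} read up to rotation, not merely up to translation, since the domino has to rotate in that class.
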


\begin{remark}[Class A and informativeness]\label{rem:class-a-informativeness}
	Merging Class~A with any other class does not yield inadmissibility for $h_{\mathrm{general}}$---Theorem~\ref{thm:general_consistency} proves consistency on the entire state space. Class~A is nevertheless distinguished because $h_{\mathrm{general}}$ becomes uninformative once $k<n$ (Figure~\ref{fig:heuristic-motivating-examples}(c)), so retaining Class~A is a \emph{guidance} necessity rather than an admissibility necessity.
\end{remark}

\subsection{Class Assignment Algorithm}\label{subsec:class-detection}

\begin{algorithm}[H]
\caption{State Type Assignment Algorithm. Each branch corresponds to a split
in the kinematic partition ($k \ge 1$; $n$ = goal-piece size; $s = |\text{Free}|$):
$k \ge n \Rightarrow$ Class~A; I-shaped with $n \ge 3 \Rightarrow$ Class~B;
$n = 2 \Rightarrow$ Class~C; L-shaped with $k=1 \Rightarrow$ Class~D, $k=2
\Rightarrow$ Class~E; comb-shaped with $s>k \Rightarrow$ Class~F; remaining
cases (comb with $s \le k$, or other symmetric shape) $\Rightarrow$ Class~G
via two independent routes.
For I-shaped pieces, $n \ge 3$ yields Class~B; $n = 2$ yields Class~C (since the enclosing condition enforces $k < n = 2$, and $k \ge 1$ together force $k = 1$, enabling rotation and column-crossing). The case $n = 1$ is excluded: a single-cell I-shaped piece always satisfies $k \ge n = 1$, placing any such state in Class~A by the outer branch.}
\label{alg:state-type-assignment}
\begin{algorithmic}[1]
\Require Number of vacant units $num\_of\_vacants$, Goal piece size $goal\_piece\_size$, Goal piece type $goal\_piece\_type$, Number of comb free spaces $comb\_free\_spaces$
\Ensure State type $state\_type$
\If{$num\_of\_vacants \geq goal\_piece\_size$}
    \State $state\_type \gets \text{A}$
\Else
    \If{$goal\_piece\_type = \text{``I-shaped''}$}
        \If{$goal\_piece\_size \geq 3$}
            \State $state\_type \gets \text{B}$
        \Else \Comment{$n = 2$; since $k < n = 2$ and $k \ge 1$, we have $k = 1$}
            \State $state\_type \gets \text{C}$
        \EndIf
    \ElsIf{$goal\_piece\_type = \text{``L-shaped''}$}
        \If{$num\_of\_vacants = 1$}
            \State $state\_type \gets \text{D}$
        \Else
            \State $state\_type \gets \text{E}$
        \EndIf
    \ElsIf{$goal\_piece\_type = \text{``comb-shaped''} \land comb\_free\_spaces > num\_of\_vacants$}
        \State $state\_type \gets \text{F}$
    \Else
        \State \Comment{Reached by two independent cases: comb-shaped with $s \le k$, and ``other'' linearly symmetric shapes}
        \State $state\_type \gets \text{G}$
    \EndIf
\EndIf
\end{algorithmic}
\end{algorithm}

Algorithm~\ref{alg:state-type-assignment} runs in $O(1)$ via a fixed sequence of comparisons on initial-state parameters.

\subsection{Class A}
\label{subsec:class-a}

When $k \ge n$, the goal piece can execute jumping moves (Theorem~\ref{thm:general_move}), so the general heuristic of Section~\ref{sec:heuristics} provides accurate guidance. We apply $\hgeneral$ directly for Class~A.

\subsection{Class B}
\label{subsec:class-b}

Class~B arises when the goal piece is I-shaped with $n \ge 3$ and $k < n$. By Corollary~\ref{cor:crossing}, the piece cannot cross columns; all blocking pieces on the same column must be cleared. Let $d(s)$ be the vertical distance, $k(s)$ the vacancy count, $P_B(s)$ the blocking pieces on the goal path, and $g_p(s)$ the cells of $p$ on the path. The Class~B heuristic is
\[
	h_B(s) = \left\lceil \frac{d(s)}{k(s)} \right\rceil + \sum_{p \in P_B(s)} \left\lceil \frac{g_p(s)}{k(s)} \right\rceil .
\]

\begin{theorem}[Consistency of $h_B$]\label{thm:hB_consistency}
	$h_B$ is consistent: in every case $h_B(s) - h_B(s') \le 1$ (proof in Appendix~\ref{app:class-b-proof}). The same case-based methodology (advance goal piece / clear blocker / other move) serves as template for Classes~C--E.
\end{theorem}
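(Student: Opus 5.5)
Since every move has unit cost, consistency of $h_B$ amounts to showing that for any state $s$ and any successor $s'$ produced by one valid move, $h_B(s)-h_B(s')\le 1$, and that $h_B$ vanishes on goal states. The goal condition is immediate: at the goal the target piece sits on its target cells, so $d=0$ and no blocker occupies the path, giving $h_B=0$.

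Two invariants make the argument additive. First, $k(s)=k(s')=k$, because the number of vacant units is preserved by every move. Second, the terms of $h_B$ depend on disjoint pieces: the distance term depends only on the goal piece, and each blocker term $\lceil g_p/k\rceil$ depends only on the position of $p$. A single move relocates exactly one piece, so at most one of these terms can change. It therefore suffices to show that the one affected term drops by at most $1$. The path is fixed by the column, since by Corollary~\ref{cor:crossing} the goal piece never leaves its column. Hence the set of path cells is the same in $s$ and $s'$, and moving a non-target piece does not change which cells count toward any $g_p$.

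The proof then splits into the three cases named in the statement.
\begin{enumerate}
\item \emph{Advance the goal piece.} Here $k<n$, so by Theorem~\ref{thm:general_move} and Corollary~\ref{cor:jumping} the move is a sliding move within the column with $|C_P\cap C'_P|\ge n-k$. For an I-shaped piece of length $n$ in one column, this forces a vertical translation of at most $k$ cells. Thus $d(s')\ge d(s)-k$, and $\lceil d(s)/k\rceil-\lceil d(s')/k\rceil\le 1$, using the elementary fact that $\lceil x/k\rceil-\lceil (x-k)/k\rceil=1$ together with monotonicity of the ceiling.
\item \emph{Move a blocker $p$.} A single move newly occupies only vacant cells, and there are $k$ of them. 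So $p$ vacates at most $k$ of its cells, and in particular at most $k$ path cells. Hence $g_p(s')\ge g_p(s)-k$ and the term drops by at most $1$. If $p$ leaves the path entirely, then $g_p(s)\le k$, so its term was $1$ and the drop is exactly $1$.
\item \emph{Any other move.} This moves a non-blocker off the path, or into it. Then every term is unchanged or increases, including a piece newly entering $P_B$.
\end{enumerate}

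The main obstacle is case 1. We must justify rigorously that an I-shaped piece with $k<n$ can only translate vertically by at most $k$. A rotation would require a horizontal placement, which does not fit in width $2$ when $n\ge3$. A column switch is excluded by Corollary~\ref{cor:crossing}, and a translation by $t$ shares $n-t$ cells, so the bound $n-t\ge n-k$ gives $t\le k$. We also have to check that $d$ is measured so that the goal piece's progress changes $d$ by exactly the translation amount. If the motion overshoots, $d$ could increase, which is harmless.

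Admissibility then follows from consistency together with $h_B(\text{goal})=0$, by the standard telescoping argument along an optimal path.
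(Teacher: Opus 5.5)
Your proposal is correct and follows the paper's proof: it shows $h_B(s_{\text{goal}})=0$ and then uses the same three-way case split (goal-piece move, blocker move, any other move), in which at most one ceiling term drops, and by at most one, while also justifying what the paper only asserts (that $k$ is invariant, and that the General Move Constraint limits the column-locked I-piece to a vertical translation of at most $k$ rows). One small imprecision: if the path is read as the stretch between the goal piece and its target, then your claim that the path cells are identical in $s$ and $s'$ fails for goal-piece moves. The conclusion you need (no $g_p$ changes) still holds, because the only cells entering or leaving the path are cells that switch between vacant and goal-piece-occupied.
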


\subsection{Class C}
\label{subsec:class-c}

Class~C arises when the goal piece is I-shaped with $n=2$ and $k=1$. Unlike Class~B, the piece can rotate and cross columns, invalidating the single-column premise.

Let $\text{top\_piece}(s)$, $\text{bot\_piece}(s)$, $\text{top\_goal}(s)$,
and $\text{bot\_goal}(s)$ denote the extremal row indices of the goal piece
and goal position.  The piecewise vertical distance is
\[
  d_v(s)=
  \begin{cases}
    |\text{top\_piece}(s)-\text{top\_goal}(s)|, & \text{goal above},\\
    |\text{bot\_piece}(s)-\text{bot\_goal}(s)|, & \text{goal below},\\
    0,                                           & \text{otherwise}.
  \end{cases}
\]
The orientation penalty $R(s)=1$ if the goal position is horizontal and the
goal piece must perform a terminal rotation after vertical alignment; otherwise
$R(s)=0$ (see Appendix~\ref{app:sn5-class-c-indicators}).  The frame-side discrepancy $F(s)=1$ if the goal piece and its
vertical goal position are on opposite frame sides; otherwise $F(s)=0$.
Set $d(s)=d_v(s)+R(s)+F(s)$.  The advance-ready indicator $M(s)=1$ if the
goal piece can advance without an extra clearing step (the front-facing unit in
the movement direction is already vacant); otherwise $M(s)=0$.

With $k=1$, the goal piece advances at most one unit per move, and a clearing move is needed before each advance, giving a lower bound of $2d$ moves. The Class~C heuristic is
\(
	h_C(s) = 2\bigl(d_v(s) + R(s) + F(s)\bigr) - M(s).
\)

\begin{theorem}[Consistency of $h_C$]\label{thm:hC_consistency}
	$h_C$ is consistent: in every case $h_C(s) - h_C(s') \le 1$ (proof in Appendix~\ref{app:class-c-proof}).
\end{theorem}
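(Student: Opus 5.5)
We verify consistency edge by edge. For every state $s$ and every successor $s'$ obtained by one unit-cost move, we show $h_C(s)-h_C(s')\le 1$. We also note $h_C(s_g)=0$ at the goal, since there $d_v=R=F=0$ and $M=0$. Admissibility then follows by the usual telescoping argument along an optimal path. Following the template announced in Theorem~\ref{thm:hB_consistency}, we split the moves into three cases: (i) moves of the goal piece, (ii) moves of a non-goal piece that vacate the cell in front of the goal piece, and (iii) all other moves. In Class~C, $k=1$, so by Theorem~\ref{thm:general_move} any move of the goal piece ($n=2$) preserves at least one of its cells. Every move of the goal piece is therefore a unit slide or a pivot (rotation) about one of its cells. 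This is the structural fact the whole case analysis rests on.

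\textbf{Case (i).} A unit slide toward the goal lowers $d_v$ by at most $1$. It consumes the vacancy directly in front of the piece, so $M(s)=1$ before the move, and afterwards the cell ahead is occupied (the only vacancy is now behind the piece), so $M(s')=0$. Hence $h_C(s)-h_C(s')\le 2-1=1$. A rotation or a column-changing pivot can reduce at most one of the three quantities $d_v$, $R$, $F$, and only by $1$. Such a move also requires the single vacancy to sit at the pivot target, which is exactly the situation encoded by $M(s)=1$ under the appendix definitions of $R$ and $F$. So the same accounting gives a drop of at most $2-1=1$. Slides away from the goal increase $d_v$ and are harmless. I expect to check, using Appendix~\ref{app:sn5-class-c-indicators}, that a pivot never changes $d_v$ and $F$ simultaneously in the favourable direction. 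If it did, the bound would fail, so this is the delicate point.

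\textbf{Cases (ii) and (iii).} If the goal piece does not move, then $d_v$, $R$ and $F$ are unchanged, and only $M$ can change, by at most $1$. A clearing move that brings the vacancy to the front of the goal piece makes $M(s')=1$, so $h_C$ drops by exactly $1$. Any other move either leaves $M$ unchanged or turns it from $1$ to $0$, which increases $h_C$. In all subcases $h_C(s)-h_C(s')\le 1$.

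The main obstacle is the indicator bookkeeping in case (i). We must confirm that every move reducing $d=d_v+R+F$ by one starts from a state with $M(s)=1$ and ends with $M(s')=0$, so that the factor $2$ is always offset by the drop in $M$. This has to hold for rotations and column crossings, not only for vertical slides. I would settle it by enumerating, for a domino in a $2\times h$ frame with one vacancy, the finitely many local configurations around the piece in which a goal-piece move is legal: vertical slide up or down, and the pivots that turn a vertical domino horizontal or move it across columns. For each configuration I would check the before/after values of $(d_v,R,F,M)$ directly against the definitions. The case $d_v=0$ (the ``otherwise'' branch), where only $R$ or $F$ remains, needs separate attention.
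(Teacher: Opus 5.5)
Your case split matches the paper's. Its Case A (clearing the front) is your case (ii). Its Case B (advance when ready) is your slide. Its Cases C and D (removing $F$, removing $R$) are your pivots, and both proofs end with an ``all other moves'' clause.

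On the pivots, your bookkeeping is the one that works, and the paper's is not. The paper asserts that these moves \emph{gain} readiness ($M=0$, $M'=1$) and writes $2(F-F')-(M'-M)=2-1=1$. But $h_C(s)-h_C(s')=2(F-F')-(M-M')$, which equals $3$ for those values. Also, for a terminal rotation into the goal, the paper's own Condition~1 gives $M(s')=0$. The bound needs exactly your requirement: every move lowering $d=d_v+R+F$ goes from $M=1$ to $M=0$. So you cannot borrow the missing step from the paper's Cases C and D.

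That requirement is, however, the entire content of the theorem, and you only announce it (``I expect to check'', ``I would settle it by enumerating''). The appendix you plan to consult adds no definitions, so you must fix the conventions yourself, and the result depends on them.

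\begin{itemize}
\item If a horizontal domino gets $F=1$: take goal cells in rows $g,g+1$ with $g\le r-1$, and pivot the horizontal domino in row $r$ into the goal column at rows $r-1,r$. This lowers $d_v$ and $F$ together, so $h_C$ drops by at least $3$ whatever $M$ does.
\item If a horizontal domino gets $F=0$: take a wrong-column vertical domino with the goal above, whose only vacancy is beside its top cell. It pivots to horizontal, lowering $F$ with $d_v$ unchanged. Under the literal reading of $M$ (the unit ahead in the direction of travel is vacant), $M(s)=M(s')=0$, so $h_C$ drops by $2$.
\end{itemize}

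So you need two things.

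(a) Fix the conventions: $F=0$ for horizontal dominoes, $R=1$ exactly for vertical ones when the goal is horizontal, ``goal above'' meaning the goal's top row is strictly above the piece's top row, and symmetrically for ``below''. Then check the vertical slides and the vertical--horizontal pivots to show $d$ changes by at most one per move.

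(b) Define $M(s)=1$ iff the vacancy is the target cell of some goal-piece move that lowers $d$.

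With (b), $M(s')=0$ after every goal move needs no enumeration. With $k=1$, $n=2$, each goal move is $\{a,b\}\to\{a,v\}$ with $v$ the vacancy adjacent to $a$. Afterwards the only vacancy is $b$, which is adjacent to $a$ but not to $v$. So the only legal goal move is the exact reverse, which raises $d$. The cases then close as follows:
\begin{itemize}
\item $d$-lowering moves give $2-1=1$;
\item $d$-neutral moves give $-M(s)\le 0$;
\item $d$-raising moves give at most $-1$.
\end{itemize}
Your cases (ii)--(iii) then finish the proof.
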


\subsection{Class D}
\label{subsec:class-d}

Class~D arises when the goal piece is L-shaped ($n=3$) and $k=1$. All blocking pieces on the path must be fully cleared.

Let $d_v(s) = |r^{\text{piece}}_H(s) - r^{\text{goal}}_H(s)|$ be the vertical offset of the horizontal arm:
\[
	h_D(s) = d_v(s) + \sum_{p \in P_D(s)} g_p(s) \quad \text{(since $k=1$)}.
\]

\begin{theorem}[Consistency of $h_D$]\label{thm:hD_consistency}
	$h_D$ is consistent: the argument follows Class~B with $k=1$ and row distance $d_v$ (proof in Appendix~\ref{app:class-d-proof}).
\end{theorem}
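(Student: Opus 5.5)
We prove consistency of $h_D$ by showing that for every valid move $s \to s'$ (unit cost) we have $h_D(s) - h_D(s') \le 1$. Together with $h_D(\text{goal}) = 0$, this gives consistency, and hence admissibility. We follow the case template of Theorem~\ref{thm:hB_consistency}. Since $k=1$ in Class~D, Theorem~\ref{thm:general_move} gives $|C_P \cap C'_P| \ge n-1$ for every moved piece $P$. So any move changes the cells of exactly one piece by relocating a single cell into the unique vacancy. In particular every move is a unit slide or a single-cell relocation, and at most one cell of the board changes its occupant on each side of the move.

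We split on which piece moves. \textbf{Case 1: the goal (L-shaped) piece moves.} Then every blocker's $g_p$ is unchanged, unless the goal piece's path region itself is redefined. We take $P_D(s)$ and the path cells to be determined by the current row offset between the piece and the goal, so that a move of the goal piece can only shrink the path by the cells it just passed. We must show $d_v$ drops by at most $1$. The horizontal arm row $r_H$ changes by at most one per move, because the new cell set overlaps the old in $2$ of $3$ cells, and an L-tromino moved with overlap $2$ shifts its horizontal arm by at most one row. We would verify this by enumerating the $D_4$-orientations of $S^L$ within the width-2 frame. The blocker sum is also nonincreasing in a way that cannot compound: the path is shortened, not re-blocked, so $\Delta h_D \le 1$.

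\textbf{Case 2: a blocking piece $p \in P_D(s)$ moves.} Then $d_v$ is unchanged. Only one cell changes occupancy, so $p$ can vacate at most one path cell, and $g_p$ decreases by at most $1$. Meanwhile no other blocker's count decreases, and one may even increase if the vacated cell is filled elsewhere on the path. Hence $\Delta h_D \le 1$; this is the $\lceil g_p/k\rceil$ argument of Theorem~\ref{thm:general_consistency} specialized to $k=1$. \textbf{Case 3: a non-blocking piece moves.} Then $d_v$ is unchanged and the sum can only stay equal or grow, since the piece may enter a path cell. Hence $h_D(s) - h_D(s') \le 0$.

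The main obstacle is Case~1. There we must rule out a single goal-piece move that simultaneously lowers $d_v$ and removes a blocker's path cell from the count. This could happen if the path set $P_D$ were defined relative to the current piece position and shrank by one row at the same time a blocker cell left it. We would handle it by noting that when the goal piece advances into a cell, that cell was vacant, being the unique vacancy. So no blocker cell is lost from the sum by the piece's own advance, and the only change in $h_D$ is the unit change in $d_v$. We would also check the rotation-type moves of the L-piece, where the arm row is fixed but the corner cell shifts. There $d_v$ is unchanged and the sum changes by at most one vacated cell.
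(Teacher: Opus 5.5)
Your proposal is correct and takes essentially the paper's route. The paper also uses the Class~B three-case template (goal piece moves, a blocker moves, any other move) specialized to $k=1$, so that $d_v$ and each $g_p$ drop by at most one per move. Your overlap-two argument that the L's arm row shifts by at most one row is more careful than the paper's appeal to the Crossing Constraint, which only covers rectangular pieces. One small correction: some in-box re-orientations of the L do move the arm row by one, contrary to your remark, but your general Case~1 bound already covers them.
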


\subsection{Class E}
\label{subsec:class-e}

Class~E arises when the goal piece is L-shaped ($n=3$) and $k=2$. The key structural difference from Class~D is the bypass move: the goal piece can slide past a size-1 blocker without removing it (Figure~\ref{fig:class-e-bypass-slide}). Let $\delta(s) = 1$ if the bypass precondition is active, $0$ otherwise:
\[
	h_E(s) = d_v(s) + \sum_{p \in P_E(s)} \left\lceil \tfrac{g_p(s)}{2} \right\rceil - \delta(s).
\]

\begin{theorem}[Consistency of $h_E$]\label{thm:hE_consistency}
	$h_E$ is consistent. The genuinely new case is the bypass move: $d_v$ decreases by~2 while $\delta$ toggles from~1 to~0, giving $h_E(s) - h_E(s') = 2 - 1 = 1$ (proof in Appendix~\ref{app:class-e-proof}).
\end{theorem}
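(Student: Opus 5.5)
We argue directly from the definition of consistency: for every state $s$ in Class~E and every successor $s'$ reached by one unit-cost move, $h_E(s) - h_E(s') \le 1$, together with $h_E(s_g)=0$ at the goal. Following the template stated after Theorem~\ref{thm:hB_consistency}, we split the moves into four cases: (i) a non-bypass move of the goal piece, (ii) a move of a blocking piece $p \in P_E(s)$, (iii) a move of any other piece, and (iv) the bypass move. At the goal, $d_v = 0$, $P_E = \emptyset$ and $\delta = 0$, so $h_E(s_g)=0$. Admissibility then follows from consistency by the usual induction along an optimal path.

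\textbf{Case (i): ordinary goal-piece move.} By Theorem~\ref{thm:general_move} with $n=3$ and $k=2$, any goal-piece move keeps at least one cell. Since the L-tromino cannot rotate within the sliding regime without violating the two-column frame, we argue that a non-bypass slide shifts the horizontal arm by at most one row. Hence $d_v$ drops by at most $1$. Advancing into a cell cannot remove any blocker's path cells, so the sum term does not increase its decrease. The indicator $\delta$ can only go from $0$ to $1$ or stay fixed, which only raises $h_E(s')$. The net drop is therefore at most $1$.

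\textbf{Case (ii): blocker move.} Only two vacant cells exist, so a blocker can newly occupy at most $2$ cells. It therefore vacates at most $2$ path cells, and $\lceil g_p/2\rceil$ falls by at most $1$, exactly as in the proof of Theorem~\ref{thm:general_consistency}. Other blockers are unaffected, and $d_v$ is unchanged. The remaining issue is $\delta$ switching from $0$ to $1$, which would cost an extra unit. I will handle this by showing that clearing a blocker and activating the bypass precondition cannot happen in one move. The bypass precondition requires a size-1 blocker adjacent to the arm plus vacant cells beside it. If a move both removes path cells and creates that configuration, the removed cells would have to be the vacancies needed for the bypass, which is a contradiction.

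\textbf{Case (iii): other pieces.} These leave $d_v$ and $P_E$ unchanged and can at most toggle $\delta$ on, giving a drop of at most $1$. If necessary, I will refine $\delta$ so that it only activates in configurations where it certifies a saving of one move.

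\textbf{Case (iv): bypass move.} This is the new case. Here the goal piece slides two rows past a size-1 blocker, so $d_v$ decreases by $2$. I need to show that the passed blocker was not counted in $P_E(s)$: the blocker lies beside the arm's path column rather than on the path cells, so the sum is unchanged. At the same time, $\delta$ goes from $1$ to $0$. The net change is $2-1=1$.

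The main obstacle is the bookkeeping of $\delta$ in cases (ii)--(iv). In particular, I must verify that a bypass cannot leave $\delta(s')=1$ because a second size-1 blocker follows, since that would give a drop of $2$. I would first try to define the precondition so that a new bypass is never immediately available after one, because the two vacancies now sit behind the piece. Failing that, I would cap the discount by counting at most one bypass per pair of rows.
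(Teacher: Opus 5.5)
Your case split is the paper's own: advance the goal piece, bypass, clear a blocker, switch $\delta$ on, everything else. You have also found the weak point the paper's appendix passes over. There, clearing a blocker (Case~C) and activating $\delta$ (Case~D) are treated as separate moves, and no move that does both is examined. The latter is even recorded as a drop of $-1$, although with the $-\delta$ in $h_E$ it is $+1$.

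Your argument for excluding this combination in case~(ii) does not work. That the cells a blocker vacates ``would have to be the vacancies needed for the bypass'' is no contradiction; it is exactly how a bypass becomes available. Take a $2\times 4$ frame in matrix coordinates:
\begin{itemize}
\item the L at $\{(1,1),(1,2),(2,1)\}$;
\item a monomino at $(2,2)$;
\item a horizontal domino $Q$ on row~3;
\item row~4 vacant, so $k=2$;
\item goal $\{(2,1),(3,1),(3,2)\}$.
\end{itemize}
Then $d_v=2$, $Q$ contributes $\lceil 2/2\rceil=1$, and no bypass is legal, so $h_E(s)\ge 3$. Jumping $Q$ to row~4 is legal since $|Q|=k$. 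It takes $Q$ off the goal cells and creates, in rows 1--3, the bypass configuration of Figure~\ref{fig:class-e-bypass-slide}. So $\delta$ switches on and $h_E$ drops by~$2$. The bypass then reaches the goal, so $h^*(s)=2<h_E(s)$. If $\delta(s)=1$ means ``a bypass slide is currently legal'', $h_E$ is not even admissible, and no bookkeeping of $\delta$ can rescue case~(ii).

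Suppose you instead read $\delta$ as the weaker pointing condition: the vertical arm is directed at the goal with a monomino beside it. Then the failure moves to case~(i), where your sign is reversed. $\delta$ switching on \emph{lowers} $h_E(s')$ and so adds to the drop. Take a $2\times 5$ frame:
\begin{itemize}
\item the L at $\{(1,1),(1,2),(2,1)\}$;
\item vacancies at $(2,2)$ and $(3,1)$;
\item a monomino at $(3,2)$;
\item dominoes on rows~4 and~5;
\item goal $\{(3,1),(4,1),(4,2)\}$.
\end{itemize}
The one-row advance to $\{(2,1),(2,2),(3,1)\}$ lowers $d_v$ by $1$ and turns the pointing condition on at the same time. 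Moreover $h_E(s)\ge 3+1-0=4$, while three moves suffice: this advance, a jump of the row-4 domino to row~1, and the bypass. This holds under either reading of $\delta$.

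Your justification in (i) that the L ``cannot rotate'' is also wrong, since the bypass itself is a reflection. The conclusion you drew there, that non-bypass moves change $d_v$ by at most one, does hold.

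Your fallbacks of redefining $\delta$ or capping the discount would prove consistency of a different heuristic, not of $h_E$ as stated. Some such change is genuinely needed, and the paper's case-by-case proof, which never examines a move that both clears a blocker and activates $\delta$, does not supply it either.
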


\begin{table}[htbp]
	\centering
	\caption{Closed-form heuristic definitions for Classes B--E. All heuristics assume unit
		step costs. $k(s)$ is the number of vacant units; $d(s)$ or $d_v(s)$ is the relevant
		vertical distance; $P_X(s)$ (for $X \in \{B, D, E\}$) is the set of blocking pieces on the goal path for class $X$;
		$g_p(s)$ is the number of path-cells occupied by piece $p$. Class C indicators
		$R(s)$, $F(s)$, $M(s)$ are defined in Section~\ref{subsec:class-c}; the Class E
		indicator $\delta(s)$ is defined in Section~\ref{subsec:class-e}.
		Class C does not use an explicit blocking-piece set $P_C(s)$; obstruction costs for this class are subsumed in the orientation penalty $R(s)$ and the frame-side discrepancy $F(s)$ (Section~\ref{subsec:class-c}).}
	\label{tab:heuristic-summary}
	\renewcommand{\arraystretch}{2.0}
	\begin{tabular}{@{}llll@{}}
		\toprule
		\textbf{Class} & \textbf{Conditions}                                          & \textbf{Heuristic formula} & \textbf{Theorem} \\
		\midrule
		B              & I-shaped, $n \ge 3$, $k < n$
		               & $h_B(s)=\displaystyle\Bigl\lceil\frac{d(s)}{k(s)}\Bigr\rceil
			+\sum_{p\in P_B(s)}\Bigl\lceil\frac{g_p(s)}{k(s)}\Bigr\rceil$
		               & \ref{thm:hB_consistency}                                                                                     \\
		C              & I-shaped, $n=2$, $k=1$
		               & $h_C(s)=2\bigl(d_v(s)+R(s)+F(s)\bigr)-M(s)$
		               & \ref{thm:hC_consistency}                                                                                     \\
		D              & L-shaped, $n=3$, $k=1$
		               & $h_D(s)=d_v(s)+\displaystyle\sum_{p\in P_D(s)}g_p(s)$
		               & \ref{thm:hD_consistency}                                                                                     \\
		E              & L-shaped, $n=3$, $k=2$
		               & $h_E(s)=d_v(s)+\displaystyle\sum_{p\in P_E(s)}
			\Bigl\lceil\tfrac{g_p(s)}{2}\Bigr\rceil-\delta(s)$
		               & \ref{thm:hE_consistency}                                                                                     \\
		\bottomrule
	\end{tabular}
\end{table}

\subsection{Class F}\label{subsec:class-f}

Class~F arises when the goal piece is comb-shaped and $s > k$. By Theorem~\ref{thm:comb_flip}, the comb's Body cannot flip to the opposite column, so all blocking pieces on the body side must be cleared:
\[
	h_F(s) = \left\lceil \frac{d}{k} \right\rceil + \sum_{p_i \in \text{Blocking pieces}} \left\lceil \frac{g_i}{k} \right\rceil.
\]
Consistency follows identically to Class~B (Appendix~\ref{app:class-b-proof}).

\subsection{Class G}
\label{subsec:class-g}

Class~G collects the residual cases: comb-shaped pieces with $s \le k$, and ``other'' linearly symmetric shapes. Due to the high variety of shapes, the heuristic uses only the vertical distance:
\(
	h_G = \lceil d/k \rceil.
\)
Consistency follows from the Class~B argument restricted to the distance term. Table~\ref{tab:heuristic-summary} summarizes the closed-form heuristic expressions for Classes~B--E; Classes~F and~G are defined in Sections~\ref{subsec:class-f} and~\ref{subsec:class-g}.

\subsection{Class-Based Tie-Breaker Selection}\label{subsec:tiebreaker-selection}

Depth tie-breaking is used for Classes~A--E; vertical distance tie-breaking is used for Classes~F and~G. In F and G, $\lceil d/k \rceil$ remains unchanged for up to $k-1$ successive vertical advances, so depth no longer correlates with progress; the vertical distance $d$ is a more direct guide. To our knowledge, class-conditional tie-breaker switching has not been explored previously; it is the key novelty of our tie-breaking contribution \cite{asai2016tiebreaking, correa2018analyzing}.

\section{Results and Discussion}\label{sec:results}

We compare BFS (baseline), Standard A* (SA*), Depth-Prioritized A* (DPA*), and Class-Based Heuristic A* (CBHA*) across seven puzzle classes.

\subsection{Results}

CBHA* solves 93.4\% of cases versus 64\% (DPA*), 39\% (SA*), and 17\% (BFS) (Figure~\ref{fig:merged-overall}(a)). Performance gaps are conservative: memory-exhausted runs (which disproportionately affected BFS and Standard A*) were excluded from averages. CBHA* achieves an average EBF of 3---roughly $7\times$ below SA* and $26\times$ below BFS---by pruning the search tree to kinematically relevant trajectories via class-specific heuristics. This reduces node expansions by 87.98\% compared to Standard~A* and 91.31\% compared to BFS, effectively curtailing the exponential search-space growth characteristic of NP-complete spatial reasoning. The depth analysis (Figure~\ref{fig:merged-overall}(d)) shows CBHA* reaches average depths of 12, versus 4--7 for the baselines, confirming focused rather than broad exploration.

\begin{figure}[!htbp]
	\centering
	\includegraphics[width=\textwidth]{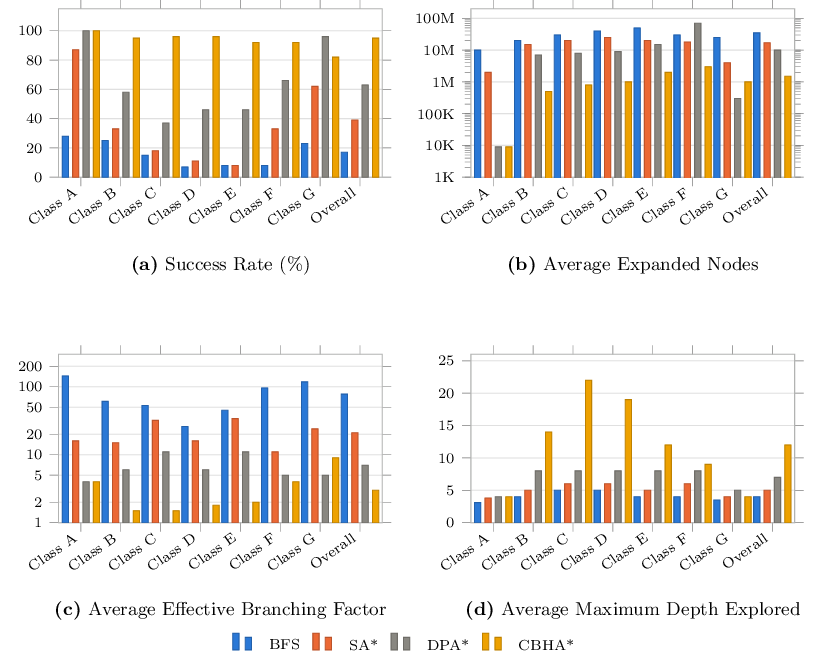}
	\caption{Performance comparison of BFS, SA*, DPA*, and CBHA* across the seven
kinematic classes (A--G) and overall. Panel~(a): success rate (\%); (b):
average expanded nodes (log scale); (c): average effective branching factor
(log scale); (d): average maximum depth explored.}
	\label{fig:merged-overall}
\end{figure}

\begin{table}[htbp]
	\centering
	\caption{Full experimental results by algorithm and class: number of
		instances, success rate, average expanded nodes, effective branching
		factor (EBF), and average maximum solution depth. Overall success rates
		cited in the text are weighted averages over all 146 instances.}
	\label{tab:per-class-full-results}
	\small
	\begin{tabular}{llrrrrr}
		\toprule
		Class & Algorithm & $N$ & Success (\%) & Avg.\ nodes          & EBF & Avg.\ max depth \\
		\midrule
		A     & BFS       & \multirow{4}{*}{32} & 28           & $\geq$8{,}955{,}252  & 144 & 3               \\
		A     & SA*       &                     & 87           & 1{,}626{,}488        & 16  & 3.8             \\
		A     & DPA*      &                     & 100          & 8{,}941              & 4   & 4               \\
		A     & CBHA*     &                     & 100          & 8{,}941              & 4   & 4               \\
		\midrule
		B     & BFS       & \multirow{4}{*}{24} & 25           & $\geq$13{,}681{,}738 & 61  & 4               \\
		B     & SA*       &                     & 33           & $\geq$10{,}765{,}203 & 15  & 5               \\
		B     & DPA*      &                     & 58           & 3{,}635{,}561        & 6   & 8               \\
		B     & CBHA*     &                     & 95           & 572{,}268            & 1.5 & 14              \\
		\midrule
		C     & BFS       & \multirow{4}{*}{27} & 15           & $\geq$17{,}224{,}895 & 53  & 5               \\
		C     & SA*       &                     & 18           & $\geq$14{,}668{,}560 & 32  & 6               \\
		C     & DPA*      &                     & 37           & 6{,}205{,}728        & 11  & 8               \\
		C     & CBHA*     &                     & 96           & 657{,}010            & 1.5 & 22              \\
		\midrule
		D     & BFS       & \multirow{4}{*}{26} & 7            & 18{,}861{,}834       & 26  & 5               \\
		D     & SA*       &                     & 11           & 16{,}445{,}783       & 16  & 6               \\
		D     & DPA*      &                     & 46           & 5{,}403{,}719        & 6   & 8               \\
		D     & CBHA*     &                     & 96           & 564{,}743            & 1.8 & 19              \\
		\midrule
		E     & BFS       & \multirow{4}{*}{13} & 8            & 20{,}292{,}069       & 45  & 4               \\
		E     & SA*       &                     & 8            & 17{,}994{,}491       & 34  & 5               \\
		E     & DPA*      &                     & 46           & 6{,}535{,}319        & 11  & 8               \\
		E     & CBHA*     &                     & 92           & 1{,}565{,}849        & 2   & 12              \\
		\midrule
		F     & BFS       & \multirow{4}{*}{12} & 8            & 16{,}906{,}471       & 96  & 4               \\
		F     & SA*       &                     & 33           & 11{,}833{,}676       & 11  & 6               \\
		F     & DPA*      &                     & 66           & 38{,}448{,}871       & 5   & 8               \\
		F     & CBHA*     &                     & 92           & 2{,}663{,}743        & 4   & 9               \\
		\midrule
		G     & BFS       & \multirow{4}{*}{12} & 23           & 14{,}598{,}308       & 118 & 3.5             \\
		G     & SA*       &                     & 62           & 6{,}624{,}414        & 24  & 4               \\
		G     & DPA*      &                     & 96           & 32{,}816             & 5   & 5               \\
		G     & CBHA*     &                     & 82           & 3{,}575{,}216        & 9   & 5               \\
		\bottomrule
	\end{tabular}
\end{table}

Table~\ref{tab:per-class-full-results} reports the full per-class breakdown across all 146 instances (Classes A--G). Overall success rates cited in the text are instance-weighted averages: e.g., DPA* achieves 64\% overall ($\frac{32{\times}100+24{\times}58+27{\times}37+26{\times}46+13{\times}46+12{\times}66+12{\times}96}{146}\approx63.9\%$), not the unweighted class mean of 58\%. In Class~A, the large vacancy-to-size ratio keeps solution depth shallow, making all algorithms relatively effective. Class~C has the deepest average solutions (${\sim}22$ for CBHA*) due to the single-vacancy alternation constraint that doubles solution length. Classes~D and~E illustrate the bypass-move distinction: the only structural difference ($k{=}1$ vs.\ $k{=}2$) explains why $h_D$ is inadmissible on Class~E states and why CBHA* solves 92\% of Class~E versus 96\% of Class~D. Class~F is dominated by the flip-impossibility (Theorem~\ref{thm:comb_flip}), requiring multi-step coordination per unit of vertical progress. Class~G is the sole class where DPA* outperforms CBHA*: at small vertical distances $h_G \approx 0$, making the class-specific heuristic nearly uninformative.

Section~\ref{sec:hard-configurations} identifies configurations where large blocking pieces must be displaced in coordination with the target piece, creating solution depths beyond the reach of all algorithms under the resource budget.

\subsubsection{Challenging Configurations Beyond the Reach of Our Algorithms}\label{sec:hard-configurations}

These configurations occur in Classes~B--G, where blocking pieces of size exceeding $k$ are positioned in the target piece's path, requiring coordinated sliding sequences. Illustrative examples are shown in Figure~\ref{fig:hard-configurations}.

\subsection{Discussion}

As observed, the standard $A^*$ algorithm (SA*) implemented with our general heuristic performs significantly better than BFS in Class~A examples. However, in Classes~B--G, this heuristic fails to accurately represent the distance to the goal. Because the goal positions often remain occupied until the final moves, the heuristic's reliance on goal-occupancy metrics causes the algorithm to operate almost blindly in constrained spaces. DPA* outperforms SA* by prioritizing deeper nodes in $f$-value plateaus. This approach works effectively because deeper nodes that retain favorable heuristic values are often closer to the solution. Despite this improvement, DPA* still fails on Classes~B--G with larger frames, frequently exceeding time or memory limits, motivating the need for class-specific heuristics.

Three principles emerge from our findings: (1)~tie-breaking materially affects efficiency when heuristic range is small and the search space is massive; (2)~a kinematic taxonomy enables class-specific admissible heuristics that significantly outperform $\hgeneral$~\cite{russell2020artificial}; and (3)~the optimal tie-breaking rule is class-dependent rather than universal. Together, these constitute an instantiation of the algorithm-selection paradigm at the heuristic level.

By dynamically adapting to the structure of each puzzle class, Class-Based Heuristic Selection (CBHS) delivers highly accurate estimates. Furthermore, class-conditional tie-breaker selection proved essential: while the depth tie-breaker (prioritizing deeper nodes) produced the best results in Classes~A--E, vertical distance tie-breaking proved more effective in Classes~F and~G.

In Class~G, $h_G$ underperforms at small vertical distances, assigning near-zero values across a wide range of states (which allowed DPA* to temporarily outperform CBHA* in these specific shallow-depth cases). While $h_G$ theoretically improves as vertical distance and solution depth increase, refining Class~G into subclasses with tighter heuristics, or formulating a strictly consistent piecewise heuristic specifically tailored to capture constraints in small-$d$ regimes, are immediate extensions.

From another perspective, SA* and DPA* represent more general-purpose approaches. Importantly, the general heuristic can be used for all versions of the flying block puzzle. In contrast, the heuristics developed for CBHA* are specifically tailored for the NP-complete subclass analyzed in this study, highlighting a fundamental trade-off: CBHA* achieves its superior performance by sacrificing algorithmic generality for deep, domain-specific knowledge.

\textbf{Limitations.} Despite CBHA*'s strong overall performance, extreme structural complexity remains a challenge. In configurations where large blocking pieces must be moved in coordination with the target piece, exceptionally deep and highly interdependent solution paths are created that expose the limits of heuristic-guided search. Additionally, our 12~GB RAM constraint artificially diminished the observed performance differences between algorithms; without it, the true disparity in node expansions and efficiency between CBHA*, DPA*, and the baselines would be considerably more pronounced.

Furthermore, the CBHA* framework requires domain-specific expert effort to derive the kinematic taxonomy and class heuristics. Developing a novel, provably consistent heuristic is a knowledge-intensive task requiring rigorous mathematical proofs. The framework's performance gains are tied to the structural regularity of the 2-column variant; extensions to wider frames or irregular domains would require re-deriving the class partition. Automating this taxonomy derivation and heuristic design (e.g., via machine learning, pattern recognition, or meta-learning) is an open challenge and a longer-term direction for generalizing CBHS concepts to novel puzzle families.

\textbf{Generalization and Domain-Transfer of CBHS.} Theorem~\ref{thm:general_move}'s overlap bound is domain-agnostic: whenever $k < n$, any valid transition preserves $n-k$ cells from the previous position. In Sokoban~\cite{junghanns2001sokoban}, narrow-corridor LA-MAPF~\cite{li2019multi}, and BRP~\cite{caserta2012mathematical}, analogous clearance-to-size constraints produce the same plateau structure, making class-triggered heuristic switching and tie-breaker selection directly relevant. The kinematic taxonomy finds structural parallels in industrial planning constraints; a detailed correspondence is provided in Appendix~\ref{app:sn8-industrial-mapping}. Appendix~\ref{app:sn8-industrial-mapping} describes a representative puzzle instance for each class alongside its physical analogue, concretising how the kinematic constraints manifest in both domains. Appendices~\ref{sn10-1-multi-class-transition-narrative-pipelines} and~\ref{sn10-2-detailed-operational-transitions} extend this mapping with integrated multi-class transition narrative pipelines---including an automated warehousing logistics fleet and an articulated turbine inspection arm---in which a single continuous mission traverses Classes~A, B, C, E, and~G, or Classes~A, D, E, and~F sequentially. These transitions make explicit that CBHS's runtime class detection is not merely a static classification tool: as an active agent moves from an unconstrained dispatch zone (Class~A) through narrow aisles (Class~B) and tight turning junctions (Class~C), it must dynamically adapt its heuristic and tie-breaking strategy on-the-fly, as required by the kinematic constraints of each class established in Theorem~\ref{thm:partition}. Deriving admissible class-specific heuristics for these domains requires adapting the geometry-specific terms ($d_v$, $R$, $F$, $\delta$) to each domain's piece kinematics.

\section{Methods}\label{sec:methods}

\subsection{Puzzle Instance Generation}

A total of 146 test instances were constructed across the seven kinematic classes
(A--G) defined in Section~\ref{sec5}, designed to span a wide range of
structural configurations and solution depths. Instance diversity was ensured
along three axes: piece positions, piece types and sizes, and the number of
vacant units; pseudo-random placement was used throughout subject to the
solvability constraints established in Section~\ref{sec:formulation}. For each
class, the goal-piece type and size were fixed to satisfy the class
definition (Definitions~\ref{def:i-shaped}--\ref{def:other-type} and
Algorithm~\ref{alg:state-type-assignment}), while blocking pieces were placed
randomly within feasible frame configurations.

Within each class, instances were ordered by increasing difficulty. In
Class~A, difficulty increases as the difference $k - n$ between the number of
vacant units $k$ and the goal-piece size $n$ decreases; in Classes~B--G,
difficulty is primarily controlled by the vertical offset between the goal
piece's initial position and its target location---larger offsets require longer
sliding sequences and more complex coordination, resulting in deeper optimal
solution paths. Two Class~A outlier instances (frame heights 200 and 500) were
excluded from aggregate statistics because their atypically large legal-move
sets distorted EBF estimates; failure counts still include these cases.

\subsection{Algorithm Implementation}

All four algorithms---BFS, Standard A* (SA*), Depth-Prioritized A* (DPA*), and
Class-Based Heuristic A* (CBHA*)---were implemented in Python and share a common
state-representation and move-generation layer, ensuring a fair basis for
comparison. The open list in SA*, DPA*, and CBHA* is maintained as a min-heap.
SA* orders by $f$ alone. DPA* orders lexicographically by $(f, -g)$ to
implement depth-based tie-breaking (Section~\ref{sec4}). CBHA* uses the same
$(f, -g)$ ordering for Classes~A--E; for Classes~F and~G it orders by
$(f, d)$, preferring smaller vertical distance~$d$ within $f$-value ties
(Section~\ref{subsec:tiebreaker-selection}). BFS uses a standard FIFO queue.

Because every class heuristic developed in this work is consistent
(Theorems~\ref{thm:general_consistency}, \ref{thm:hB_consistency}--\ref{thm:hE_consistency},
and Sections~\ref{subsec:class-f}--\ref{subsec:class-g}), graph-search mode with
duplicate detection via a closed set is used for all classes; consistent
heuristics guarantee that a state first expanded via graph search carries the
optimal $g$-value, so the closed set can be maintained without loss of
optimality. The class of a puzzle instance is determined at initialisation by
Algorithm~\ref{alg:state-type-assignment} in $O(1)$ time; the corresponding
heuristic function and tie-breaking rule are then fixed for the entire search.
Three candidate tie-breaking criteria were evaluated prior to selecting
the class-conditional rules above: (i)~number of misplaced units,
(ii)~sum of piece-to-goal distances, and (iii)~depth in the search tree
($g$-value). Depth prioritization yielded the best empirical performance in
Classes~A--E, while vertical-distance ordering was preferred for Classes~F
and~G (Section~\ref{subsec:tiebreaker-selection}), consistent with the
observation that deeper nodes with equal $f$-values are more likely to lie on a
path leading to the goal in the former regime (Section~\ref{sec4}). Source code and datasets are
available at \url{https://github.com/Pedyi/Flying-Block-Puzzle}.

\subsection{Evaluation Protocol}

Each instance was solved independently by all four algorithms under a resource
budget of 12~GB RAM and 30 minutes of wall-clock time. An instance was recorded
as a failure if either limit was reached before a solution was found; the number
of expanded nodes and the EBF at the moment the limit was hit were recorded
separately and used when computing aggregate failure statistics, but excluded
from per-algorithm averages of node expansions and EBF (which cover successful
runs only). The following metrics were recorded for each run:

\begin{itemize}
    \item \textbf{Success rate}: percentage of instances solved within the
    resource budget.
    \item \textbf{Average node expansions}: mean number of states expanded
    before reaching the goal, averaged over successful runs only.
    \item \textbf{Effective branching factor (EBF)}: the branching factor $b^*$
    satisfying $N + 1 = b^* + b^{*2} + \cdots + b^{*d}$, where $N$ is the
    number of nodes expanded and $d$ is the solution depth, solved numerically;
    lower EBF values indicate more focused search.
    \item \textbf{Average maximum depth}: mean depth of the deepest node
    expanded during search, over all runs (successful and failed); this
    reflects the breadth of state-space exploration rather than solution quality,
    since all algorithms are guaranteed to return optimal solutions on success.
\end{itemize}

Per-class averages exclude Class~A from aggregate comparisons owing to the
shallow, single-step solution structure of Class~A instances. Failure counts
(memory or time limit exceeded) are reported separately in
Table~\ref{tab:per-class-full-results}.

Aggregate figures quoted in the abstract and Section~\ref{sec:results} --- including average EBF, node-expansion reduction percentages, and overall success rate --- are instance-weighted means over all runs in Classes~B--G; Class~A is excluded as described in Section~\ref{sec:methods}, and the two Class~A outlier instances (frame heights 200 and 500) excluded from aggregate statistics per Section~\ref{sec:methods} are likewise excluded here. Success rate is computed as the fraction of instances solved within the resource budget summed over all non-Class-A classes, weighted by the number of instances per class.


\section*{Acknowledgements}
Not applicable.

\section*{Declarations}

\begin{itemize}
\item \textbf{Author contributions}: Sanyar Ahmadi and Pedram Asadzadeh
contributed equally to the algorithm design, analysis and implementation under
supervision of Amanj Khorramian.
\item \textbf{Funding}: Not applicable.
\item \textbf{Conflict of interest}: S.A., P.A., and A.K. declare no competing interests.
\item \textbf{Ethics approval}: Not applicable.
\item \textbf{Code availability}: See the Data Availability statement below.
\end{itemize}

\section*{Data Availability}
The datasets and source code used in this study are publicly available at \url{https://github.com/Pedyi/Flying-Block-Puzzle}.



\appendix

\renewcommand{\thefigure}{\thesection\arabic{figure}}
\renewcommand{\thetable}{\thesection\arabic{table}}
\renewcommand{\thealgorithm}{\thesection\arabic{algorithm}}
\setcounter{figure}{0}
\setcounter{table}{0}
\setcounter{algorithm}{0}

\section{Proofs of Theorem~\ref{thm:general_move} (General Move Constraint), Corollary~\ref{cor:jumping} (Jumping Move Constraint), and Corollary~\ref{cor:crossing} (Crossing Constraint)}\label{app:sn1-proofs}

\begin{theorem}[General Move Constraint]
	Let $P$ be a piece of size $n$. Let its position before a move be the set of cells $C_P$ and its position after the move be $C'_P$, where $|C_P| = |C'_P| = n$. If the total number of vacant cells available in the puzzle frame is $k$, then any valid move must satisfy the condition
	$|C_P \cap C'_P| \ge n - k$.
\end{theorem}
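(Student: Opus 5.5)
The argument is a counting argument on the destination cells. I will use the following model of a valid move: the piece $P$ is lifted from $C_P$ and placed at $C'_P$. The cells it may occupy are its own former cells or cells that were vacant before the move, since no two pieces may overlap. Formally, $C'_P \subseteq C_P \cup V$, where $V$ is the set of vacant cells in the frame before the move, $|V| = k$, and $V \cap C_P = \emptyset$. I will state this explicitly as the modeling assumption, because the whole proof rests on it. It holds for both sliding and jumping moves as defined in Section~\ref{sec:formulation}, since both are defined purely by the relation between $C_P$ and $C'_P$.

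The steps are as follows.

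\begin{enumerate}
\item Partition the destination as $C'_P = (C'_P \cap C_P) \,\dot\cup\, (C'_P \setminus C_P)$.
\item The newly occupied part satisfies $C'_P \setminus C_P \subseteq V$ by the validity condition. Hence $|C'_P \setminus C_P| \le k$.
\item Therefore
\[
n = |C'_P| = |C'_P \cap C_P| + |C'_P \setminus C_P| \le |C'_P \cap C_P| + k,
\]
which rearranges to the claimed bound $|C_P \cap C'_P| \ge n - k$.
\item Remark that the bound is trivially true when $k \ge n$, since the right-hand side is then nonpositive. It only constrains moves when $k < n$, which is the regime used in Corollaries~\ref{cor:jumping} and~\ref{cor:crossing}.
\end{enumerate}

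The main obstacle is step~2, namely justifying $C'_P \setminus C_P \subseteq V$. The paper never spells out the move semantics formally. I will argue it from the physical rule that pieces are rigid, stay inside $F$, and never overlap. Any cell in $C'_P \setminus C_P$ lies in $F$ and is not occupied by $P$ before the move. It also cannot be occupied by any other piece, since only $P$ moves in one move. So it must be one of the $k$ vacant cells.

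One might worry about intermediate positions during a slide. This does not matter: only the endpoints enter the statement, and any intermediate overlap constraint only restricts moves further, so the inequality still holds.
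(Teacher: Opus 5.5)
Your proof is correct and follows the paper's argument: every newly occupied cell was vacant, so $|C'_P \setminus C_P| \le k$, and the disjoint decomposition $C'_P = (C_P \cap C'_P) \cup (C'_P \setminus C_P)$ then gives $|C_P \cap C'_P| \ge n-k$. Your explicit statement of the move semantics (only $P$ moves in a single step, and pieces never overlap) is a useful addition, since the paper takes that inclusion as given.
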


\begin{proof}
	In any valid move, every newly occupied cell must have been vacant before the
	move, so $C'_P \setminus C_P \subseteq V$, where $V$ is the set of vacant
	cells with $|V| = k$. Hence $|C'_P \setminus C_P| \le k$. Since
	$|C'_P| = n$ and $C'_P$ is the disjoint union of $C_P \cap C'_P$ and
	$C'_P \setminus C_P$, we obtain
	\begin{equation*}
		|C_P \cap C'_P| = n - |C'_P \setminus C_P| \ge n - k. \qedhere
	\end{equation*}
\end{proof}

\begin{corollary}[Jumping Move Constraint]
	Let $P$ be a piece of size $n$, and let $k$ be the number of vacant units in
	the frame. If $k < n$, then $P$ cannot execute a jumping move.
\end{corollary}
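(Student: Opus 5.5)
This follows directly from the General Move Constraint (Theorem~\ref{thm:general_move}), which was just proved, combined with the definition of a jumping move from Section~\ref{sec:formulation}. A jumping move is one with $C'_P \cap C_P = \emptyset$, so its overlap is $|C_P \cap C'_P| = 0$. I will argue by contradiction: assume $k < n$ and that $P$ performs a valid jumping move from $C_P$ to $C'_P$.

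\emph{Step 1.} Apply Theorem~\ref{thm:general_move} to this move. This gives $|C_P \cap C'_P| \ge n - k$.

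\emph{Step 2.} Since $k < n$ and both are integers, $n - k \ge 1$. Hence $|C_P \cap C'_P| \ge 1$.

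\emph{Step 3.} This contradicts $C_P \cap C'_P = \emptyset$. Equivalently, in the direct form used in the proof of Theorem~\ref{thm:general_move}: a jump makes all $n$ cells of $C'_P$ newly occupied, so $C'_P \subseteq V$. That forces $n \le |V| = k$, which is impossible.

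The only point needing care is the meaning of $k$. It must count the vacant cells \emph{before} the move and exclude the cells of $C_P$ itself, since those are occupied by $P$ until it moves. This is the same convention Theorem~\ref{thm:general_move} already uses, because its proof only needs newly occupied cells to lie in $V$. Once that convention is fixed, nothing else in the argument is delicate.
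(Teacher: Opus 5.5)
Your proof is correct and is essentially the paper's argument: a jump has $C_P \cap C'_P = \emptyset$, while Theorem~\ref{thm:general_move} forces $|C_P \cap C'_P| \ge n-k \ge 1$ when $k<n$, a contradiction. Your direct variant ($C'_P \subseteq V$ forces $n \le k$) and your remark on how $k$ is counted are consistent with the paper's proof of the General Move Constraint.
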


\begin{proof}
	A jumping move requires $C_P \cap C'_P = \emptyset$, whereas
	the General Move Constraint requires
	$|C_P \cap C'_P| \ge n - k \ge 1$ when $k < n$.
\end{proof}

\begin{corollary}[Crossing Constraint]
	Let $P$ be a rectangular piece of size $n > 2$ within a $2 \times h$ puzzle
	frame $F$, and let $k < n$ be the total number of vacant units. Suppose $P$
	initially occupies a set of cells $C_P \subset F$ contained entirely in one
	column of the frame, and let $C_P^{\text{goal}} \subset F$ be a target
	configuration contained entirely in the other column, so that
	$C_P \cap C_P^{\text{goal}} = \emptyset$. Then no sequence of valid moves
	transforms $C_P$ into $C_P^{\text{goal}}$.
\end{corollary}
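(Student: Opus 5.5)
We prove this by an invariant argument: the column a rectangular piece of size $n>2$ occupies can never change under the hypothesis $k<n$. A rectangular piece in a $2\times h$ frame is either a $1\times n$ vertical bar in one column or a $2\times (n/2)$ block spanning both columns. It is "contained entirely in one column" exactly when it is a vertical bar, and the hypothesis says $P$ starts in that form. Our invariant is that, after any number of valid moves, $P$ is still a vertical bar in the same column $c$ it started in. Once this is shown, $C_P^{\text{goal}}$, which lies in the other column, is never reached.

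We prove the invariant by induction on the number of moves. Suppose before a move $P$ occupies $C_P=\{c\}\times\{y,\dots,y+n-1\}$. Pieces move rigidly, so $C'_P$ is a translate of $C_P$; in the $D_4$-reading used for L-pieces it could also be a rotated copy, and we handle that case separately. By Theorem~\ref{thm:general_move}, $|C_P\cap C'_P|\ge n-k\ge 1$, which is also Corollary~\ref{cor:jumping}. So $C'_P$ shares at least one cell with $C_P$, and hence contains a cell of column $c$.

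\emph{Pure translation.} If $C'_P$ is a translate, it is still a $1\times n$ bar, so it lies in a single column. Since it contains a cell of column $c$, that column is $c$, and the invariant is preserved.

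\emph{Rotation.} If a $90^\circ$ rotation were permitted, the image would be an $n\times 1$ horizontal bar. This needs width $n\ge 3>2$, so it cannot fit in the frame. This is exactly where the hypothesis $n>2$ is used; for $n=2$ the rotated domino fits, which is why Class~C is excluded.

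The main point to get right is this modelling step: a move must send $P$ to a congruent placement inside $F$. The only way for $P$ to change column is a jumping move, which is forbidden by the overlap bound, or a reorientation, which is forbidden by the frame width. I would therefore state the admissible move set explicitly, as rigid placements of the same shape within $F$, before running the induction.
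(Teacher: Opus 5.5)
Your proposal is correct and follows essentially the same route as the paper. Both arguments observe that for $n>2$ every valid placement of $P$ is a vertical $1\times n$ bar lying in a single column, since the horizontal orientation does not fit in width $2$. Both then note that a change of column would require a move whose placements are disjoint, which the General Move Constraint (equivalently, the Jumping Move Constraint) forbids when $k<n$; your induction on the invariant is just the step-by-step form of the paper's ``first step where the column index changes'' argument.
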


\begin{proof}
	Since $P$ is rectangular, occupies $n > 2$ cells, and fits inside a single
	column of a frame of width $2$, $P$ is a $1 \times n$ piece. A horizontal
	placement of a $1 \times n$ piece with $n > 2$ does not fit in a frame of
	width $2$, so every valid placement of $P$ in $F$ is a vertical segment
	contained entirely in column $1$ or entirely in column $2$.

	Consider any sequence of valid placements
	$C_P^{(0)} = C_P, C_P^{(1)}, \dots, C_P^{(T)} = C_P^{\text{goal}}$, and
	assign to each placement the index of the column containing it. The first
	and last placements lie in different columns, so there exists a step $t$ at
	which the column index changes. The two columns are disjoint sets of cells,
	hence $C_P^{(t)} \cap C_P^{(t+1)} = \emptyset$, i.e.\ step $t$ is a jumping
	move. By the Jumping Move Constraint, this is impossible when $k < n$.
\end{proof}

\section{Proof of Theorem~\ref{thm:comb_flip} (Comb-Piece Flip Constraint)}\label{app:sn2-comb-flip}

\begin{theorem}[Comb-Piece Flip Constraint]
	Let $P$ be a comb-shaped piece with occupied cells $C_P=\text{Body}\cup\text{Teeth}$ and $s:=|\text{Free}|$. Let $V\subset F$ denote the vacant cells in the frame and $k:=|V|$. If $k<s$, then no sequence of valid moves transforms $C_P$ into a configuration $C'_P$ obtained by reflecting $P$ across its horizontal axis of symmetry.
\end{theorem}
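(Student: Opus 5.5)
The plan is to prove a stronger invariant: every placement of $P$ reachable from $C_P$ has its Body in the same column $x_0$. The reflected configuration has its Body in column $x_1$, so it is then unreachable. As in the proof of Corollary~\ref{cor:crossing}, we look at a sequence of placements $C_P^{(0)}=C_P,\dots,C_P^{(T)}$. If the Body column ever changed, there would be a first step $t$ where $C_P^{(t)}$ has Body in $x_0$ and $C_P^{(t+1)}$ has Body in $x_1$. It therefore suffices to show that no single valid move switches the Body column.

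First I pin down the placements of $P$ in the $2\times h$ frame. I assume pieces move rigidly by translation, possibly composed with the horizontal-axis reflection. That reflection leaves the shape unchanged by Definition~\ref{def:comb-shaped}, so it does not matter. Under this assumption, the placements are the vertical translates of $P$ and the vertical translates of its left--right mirror, whose Body lies in $x_1$ and whose Teeth lie in $x_0$ at the same relative rows $T$. If $90^\circ$ rotations are also allowed, I would note the following. A horizontal Body of length $H$ fits in width $2$ only when $H\le 2$. Since $\emptyset\ne T\subsetneq$ Body rows, this case is $H=2$, $s=1$, $k<1$, i.e.\ $k=0$, where no move is possible at all. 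Either way, only the two vertical orientations matter.

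The key counting step goes as follows. Suppose a move goes from $C=\{x_0\}\times Y\cup\{x_1\}\times T$ to $C'=\{x_1\}\times Y'\cup\{x_0\}\times T'$. Here $|Y|=|Y'|=H$ and $|T|=|T'|=H-s$. Restrict attention to column $x_1$. There $C'$ occupies $Y'$ and $C$ occupies only $T$, so
\[
|C'\setminus C|\;\ge\;|Y'\setminus T|\;\ge\;|Y'|-|T|\;=\;H-(H-s)\;=\;s.
\]
By the argument of Theorem~\ref{thm:general_move}, the newly occupied cells satisfy $C'\setminus C\subseteq V$, so $|C'\setminus C|\le k<s$. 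This is a contradiction, so no single move switches the Body column. Together with the first-switch argument above, the invariant holds and the theorem follows.

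The counting itself is routine. The main obstacle I expect is the placement-classification step: making precise which rigid images of $P$ count as legal placements in this variant, and confirming that the ``flip'' in the statement means exactly the configuration with the Body in the opposite column. I would handle the first point by stating the transformation group explicitly and disposing of the rotated case via $H\le 2\Rightarrow k=0$ as above.
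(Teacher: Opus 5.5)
Your proof is correct, and it is more complete than the paper's. The paper treats the flip as one move taking $C_P$ directly to $\sigma(C_P)$, the column-swapped image at the same rows. It argues that $\text{Free}\subseteq\sigma(\text{Body})\subseteq C'_P$ and $\text{Free}\cap C_P=\emptyset$. So that move newly occupies at least $s$ cells, all of which had to be vacant, giving $s\le k$.

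That argument never handles the ``sequence of valid moves'' in the statement, nor a column switch combined with a vertical shift. Your route covers both:
\begin{itemize}
\item an invariant: the Body column never changes;
\item a first-switch reduction, as in the Crossing Constraint proof;
\item the bound $|C'\setminus C|\ge|Y'\setminus T|\ge H-(H-s)=s$, valid for an arbitrary offset $Y'$.
\end{itemize}
When $Y'=Y$, the set $Y'\setminus T$ is exactly the Free rows, so the paper's inequality is the special case of yours. The price is the placement-classification step, which the paper sidesteps by staying in the one-move setting.

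Three small repairs, none affecting the logic:
\begin{itemize}
\item Translations composed with the horizontal-axis reflection generate only vertical translates of $P$. To obtain the left--right mirror (and so make the statement non-vacuous), allow the vertical-axis reflection explicitly. Equivalently, allow the $180^\circ$ rotation, which agrees with that reflection on this shape because of the midline symmetry.
\item Your rotated case is actually empty, not just trivial. No nonempty proper subset $T$ of a two-row set is invariant under the midline reflection. So every comb has $H\ge 3$, and no $90^\circ$-rotated placement fits in width~$2$.
\item Say explicitly that $k$ stays constant along the sequence, because total piece area is fixed. Your bound at the switching step uses the vacancy count at that moment.
\end{itemize}
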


\begin{proof}
	We use proof by contradiction. Assume that a vertical flip from $C_P$ to $C'_P$ is possible despite $k<s$. Let $\sigma$ denote reflection across the comb's horizontal axis of symmetry, so $C'_P=\sigma(C_P)$.

	For this move to be valid, the set of newly occupied cells, $C'_P\setminus C_P$, must lie in the available vacant cells $V$. Hence
	$|C'_P\setminus C_P|\le k$.
	This also follows from the General Move Constraint: $|C'_P\setminus C_P|=n-|C_P\cap C'_P|$ and the constraint gives $n-k\le |C_P\cap C'_P|$, which rearranges to $|C'_P\setminus C_P|\le k$ \quad (a).

	By the comb symmetry, the reflected body satisfies $\text{Free}\subseteq\sigma(\text{Body})\subseteq C'_P$. Since $\text{Free}\cap C_P=\emptyset$ (free cells are not occupied by the piece) and $\text{Free}\subseteq C'_P$, we have $\text{Free}\subseteq C'_P\setminus C_P$. Therefore
	$s=|\text{Free}|\le |C'_P\setminus C_P|$ \quad (b).
	Combining (a) and (b) yields $s\le k$, contradicting $k<s$. Hence no such flip is possible.
\end{proof}

\section{Primitive Heuristic --- Definition, Algorithm, and Consistency Proof}\label{app:sn3-primitive}

\begin{definition}[Primitive Heuristic Function]
  \label{def:primitive_heuristic}
  Let $G_{pos}$ be the goal cells and $P(s)$ the set of non-target pieces
  occupying at least one cell in $G_{pos}$.  The primitive heuristic is
  $h_p(s)=|P(s)|+\delta(s)$, where $\delta(s)=1$ if $s$ is not a goal state
  and $0$ otherwise.
\end{definition}

\begin{algorithm}[H]
\caption{Heuristic Calculation for the Primitive Heuristic}
\label{alg:primitive-heuristic}
\begin{algorithmic}[1]
\Require state $s$, goal $g$
\Ensure heuristic\_value
\State goal\_occupied\_positions $\leftarrow$ \{ $(i, j) \mid goal[i][j] \neq 0$ \}
\State Unique\_pieces $\leftarrow$ empty set
\ForAll{$(i, j) \in$ goal\_occupied\_positions}
    \If{board[$i$][$j$] $\neq 0$ \textbf{and} board[$i$][$j$] $\neq$ goal\_piece}
        \State Unique\_pieces.add(board[$i$][$j$])
    \EndIf
\EndFor
\State heuristic\_value $\leftarrow$ size of Unique\_pieces
\If{\textbf{not} is\_goal(board, goal)}
    \State heuristic\_value $\leftarrow$ heuristic\_value + 1
\EndIf
\State \Return heuristic\_value
\end{algorithmic}
\end{algorithm}

\begin{remark}
Algorithm~\ref{alg:primitive-heuristic} (primitive heuristic) and Algorithm~\ref{alg:general-heuristic}
(general heuristic) share the same outer structure --- iterating over goal
positions and collecting a blocking set --- but differ materially in how
the accumulated value is computed.  Algorithm~\ref{alg:primitive-heuristic} builds a set of piece
identifiers and returns its cardinality, yielding $|P(s)|$.  Algorithm~\ref{alg:general-heuristic}
iterates over that same set and accumulates the
weighted sum $\sum_{p_i} \lceil g_i / k \rceil$.  The structural
similarity is intentional: the general heuristic is a strict refinement of
the primitive heuristic, replacing the unit contribution per piece with a
move-count lower bound derived from the General Move Constraint
(Theorem~\ref{thm:general_move}).
\end{remark}

\begin{theorem}[Consistency of $h_p(s)$]
	\label{thm:consistency}
	The primitive heuristic function $h_p(s) = |P(s)| + \delta(s)$ is consistent.
\end{theorem}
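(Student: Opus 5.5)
We must show $h_p(s)\le 1+h_p(s')$ for every successor $s'$ of $s$ (every move has unit cost), together with $h_p(s_g)=0$ at goal states. The goal-state condition is immediate. At a goal state the target piece occupies every cell of $G_{pos}$, so no non-target piece meets $G_{pos}$. Hence $P(s_g)=\emptyset$, and since $\delta(s_g)=0$ we get $h_p(s_g)=0$.

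For the edge inequality, I would split $h_p(s)-h_p(s')$ into two terms, $\bigl(|P(s)|-|P(s')|\bigr)+\bigl(\delta(s)-\delta(s')\bigr)$, and bound each using the fact that a single move relocates exactly one piece $q$.

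\textbf{Blocking-set term.} Every piece other than $q$ occupies the same cells in $s$ and $s'$. Its membership in $P(\cdot)$ therefore does not change, so $P(s)\,\triangle\,P(s')\subseteq\{q\}$. It follows that $|P(s)|-|P(s')|\le 1$. This can equal $1$ only if $q$ is a non-target piece that meets $G_{pos}$ in $s$ and no longer meets it in $s'$. If $q$ is the target piece, then $P(s)=P(s')$.

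\textbf{Case analysis.} Since $\delta\in\{0,1\}$, the second term $\delta(s)-\delta(s')$ is at most $1$. The only dangerous case is when both terms equal $1$, which means $s$ is not a goal, $s'$ is a goal, and the moved piece $q$ is a non-target blocker leaving $G_{pos}$. This case cannot occur. In $s'$ the target piece has not moved, so it occupies exactly the cells it occupied in $s$. If $s'$ is a goal, then the target already covered $G_{pos}$ in $s$. Then $s$ itself was a goal, and also no blocker could have been in $G_{pos}$ in $s$. Either fact contradicts the assumptions of this case.

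In every other case at most one of the two terms equals $1$, and the other is at most $0$. Hence $h_p(s)-h_p(s')\le 1$.

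The main obstacle is excluding this simultaneous-decrease case. Its resolution depends on the goal being defined purely by the target piece's position. If goal states were defined more loosely, I would instead argue that $\delta(s')=0$ forces $P(s')=\emptyset$ and that $q$ cannot be the target.

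Finally, admissibility follows from consistency together with $h_p(s_g)=0$, by the standard telescoping argument along an optimal path.
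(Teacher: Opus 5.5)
Your proof is correct and follows essentially the same route as the paper. Both arguments rest on two facts: a move relocates exactly one piece, so $|P|$ drops by at most one and not at all when the target moves; and a non-target move cannot create a goal state. The paper organizes this as a case split on which piece moves, while you split $h_p(s)-h_p(s')$ into the $|P|$-term and the $\delta$-term and rule out both decreasing at once. That is a repackaging of the same argument, though you also state the justification the paper only asserts: the unmoved target would already cover $G_{pos}$ in $s$.
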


\begin{proof}
	We verify the two conditions for consistency: $h(s_{\text{goal}}) = 0$, and $h(s) \le c(s, a, s') + h(s')$ for every
	state--action--successor triple $(s, a, s')$, where $c(s, a, s') = 1$ for unit step costs.

	\textbf{Condition 1.} In a goal state, the target piece occupies all of
	$G_{pos}$, so no non-target piece occupies a cell of $G_{pos}$ and
	$P(s_{\text{goal}}) = \emptyset$; moreover $\delta(s_{\text{goal}}) = 0$.
	Hence $h_p(s_{\text{goal}}) = 0$.

	\textbf{Condition 2.} If $s$ is a goal state, then $h_p(s) = 0$ and the
	inequality is immediate; assume therefore $\delta(s) = 1$. A single move
	displaces exactly one piece, so $|P(s)|$ changes by at most one.

	\emph{Case A: a non-target piece moves and $|P|$ does not decrease.} Then $|P(s')| \ge |P(s)|$. A move of a
	non-target piece cannot create a goal state, so $\delta(s') = 1$ and
	$h_p(s) - h_p(s') \le 0$.

	\emph{Case B: a non-target piece moves entirely out of $G_{pos}$.} Then
	$|P(s')| = |P(s)| - 1$, and $s'$ is not a goal state, so $\delta(s') = 1$ and $h_p(s) - h_p(s') = 1$.

	\emph{Case C: the target piece moves.} Moving the target does not change
	which non-target pieces occupy $G_{pos}$, so $|P(s')| = |P(s)|$. If $s'$ is
	a goal state, then in $s$ every cell of $G_{pos}$ was vacant or occupied by
	the target, so $|P(s)| = 0$; hence $h_p(s) = 1$, $h_p(s') = 0$, and
	$h_p(s) - h_p(s') = 1$. Otherwise $\delta(s') = 1$ and
	$h_p(s) - h_p(s') = 0$.

	In every case $h_p(s) - h_p(s') \le 1$, so $h_p$ is consistent. This case-analysis structure is reused in all subsequent consistency proofs.
\end{proof}

\section{Proof of General Heuristic Consistency and Derivation}\label{app:sn4-general}

\subsection*{Derivation of the General Heuristic}

The primitive heuristic provides an effective estimate of the cost to solve the puzzle, especially when the pieces occupying the goal positions are small or can be cleared with minimal effort. However, in scenarios where the pieces occupying the goal positions are large and the number of vacant cells is limited, it may be impossible to clear an individual blocking piece from the goal region in a single move. As a result, the heuristic underestimates the actual cost and becomes less effective in guiding the search.

To improve the accuracy of our heuristic, we developed a more sophisticated general heuristic based on the observation that some pieces occupying the goal positions may require more than one move to be cleared. This improved heuristic is derived from the General Movement Constraint (Theorem~\ref{thm:general_move}). According to this rule, the number of moves required to clear a piece depends on the number of vacant units in the puzzle relative to the size of the piece.

By the General Move Constraint, for a piece of size $p$, if there are $k$ vacant units, the piece must overlap at least $p-k$ units of its previous position in every move. Each valid move displaces at most $k$ cells, while at least $p - k$ cells remain in place. This process repeats until the piece is fully cleared from the goal positions. Consequently, to fully clear a piece occupying $p$ goal positions using $k$ vacant units, the minimum number of moves required is $\left\lceil \frac{p}{k} \right\rceil$, where the ceiling function accounts for the fractional remainder after division.

\subsection*{Consistency Proof}

\begin{theorem}[Consistency of $\hgeneral(s)$]
	\emph{(Establishes Theorem~\ref{thm:general_consistency}.)}
	The general heuristic function $\hgeneral(s) = \sum_{p_i \in P(s)} \lceil g_i/k
	\rceil + \delta(s)$ is consistent: $h(s) \le c(s, a, s') + h(s')$ for all state--action--successor triples, with
	$c(s, a, s') = 1$.
\end{theorem}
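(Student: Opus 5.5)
The argument mirrors the case analysis used for the primitive heuristic (Theorem~\ref{thm:consistency}). First, the goal condition: in a goal state the target piece fills $G_{pos}$, so $P(s_{\text{goal}})=\emptyset$ and $\delta(s_{\text{goal}})=0$, which gives $\hgeneral(s_{\text{goal}})=0$. For Condition~2 we may assume $s$ is not a goal state, so $\delta(s)=1$. We must show $\hgeneral(s)-\hgeneral(s')\le 1$ for every successor $s'$. The key fact is that a move displaces exactly one piece, so at most one summand $\lceil g_i/k\rceil$ can change. The vacancy count $k$ stays fixed, because pieces keep their size and so the total number of occupied cells is invariant.

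\emph{Case A: the target piece moves.} No non-target piece changes its cells, so $P(s')=P(s)$ and every $g_i$ is unchanged. If $s'$ is a goal state, then before the move every goal cell was vacant or held by the target, so $P(s)=\emptyset$. Hence $\hgeneral(s)=1$ and $\hgeneral(s')=0$. Otherwise $\delta(s')=1$ and the difference is $0$.

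\emph{Case B: a non-target piece $p$ moves.} A non-target move cannot fill $G_{pos}$ with the target if it was not already filled, so $\delta(s')=1$, and only $p$'s term can change. By Theorem~\ref{thm:general_move}, the cells of $p$ vacated by the move, $C_p\setminus C'_p$, number at most $|C'_p\setminus C_p|\le k$. Therefore $p$ leaves at most $k$ goal cells, giving $g_p'\ge g_p-k$ (with $g_p'=0$ if $p$ exits $G_{pos}$). The needed inequality is then the elementary fact $\lceil a/k\rceil-\lceil b/k\rceil\le 1$ whenever $b\ge a-k$ and $b\ge 0$, which holds because $\lceil (a-k)/k\rceil=\lceil a/k\rceil-1$ and the ceiling is monotone. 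If $g_p'$ increases, the difference is nonpositive. If $p$ newly enters $G_{pos}$, a term is added and the difference is again nonpositive.

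The main obstacle is the counting step in Case~B. Theorem~\ref{thm:general_move} bounds newly occupied cells, whereas we need a bound on vacated goal cells. I would bridge this with $|C_p\setminus C'_p|=|C'_p\setminus C_p|$, which holds because $|C_p|=|C'_p|$, and then note that vacated goal cells form a subset of $C_p\setminus C'_p$. The remaining steps are routine, and combining the cases gives $\hgeneral(s)\le 1+\hgeneral(s')$, which is consistency.
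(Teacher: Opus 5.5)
Your proof is correct and takes the same route as the paper. Both show the goal value is zero, then split into two cases. For a target move, $P$ is unchanged and the only possible drop comes from $\delta$, with $P(s)=\emptyset$ forced when $s'$ is a goal. For a non-target move, the General Move Constraint limits the lost goal cells to $k$, so the single affected ceiling term falls by at most $1$. Your bridge $|C_p\setminus C'_p| = |C'_p\setminus C_p|$ (from $|C_p|=|C'_p|$) and your note that $k$ is invariant make explicit two steps the paper's proof takes for granted.
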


\begin{proof}
	\textbf{Condition 1.} In a goal state, $P(s_{\text{goal}}) = \emptyset$ and
	$\delta(s_{\text{goal}}) = 0$, so $\hgeneral(s_{\text{goal}}) = 0$.

	\textbf{Condition 2.} If $s$ is a goal state the inequality is immediate;
	assume $\delta(s) = 1$. A single move displaces exactly one piece $p$.

	\emph{Case A: $p$ is a non-target piece.} Only the summand of $p$ can
	change. By the General Move Constraint, the number of cells vacated by
	$p$ in one move equals $|C'_p \setminus C_p| \le k$, so $g_p$ decreases by
	at most $k$ and $\lceil g_p / k \rceil$ decreases by at most $1$. A non-target move cannot create a goal state, so
	$\delta(s') = 1$ and $\hgeneral(s) - \hgeneral(s') \le 1$.

	\emph{Case B: $p$ is the target piece.} The summation is unchanged, since
	the occupancy of $G_{pos}$ by non-target pieces is unaffected. If $s'$ is a
	goal state, then in $s$ every cell of $G_{pos}$ was vacant or occupied by
	the target, so $P(s) = \emptyset$ and $\hgeneral(s) = 1$, while
	$\hgeneral(s') = 0$; the difference is $1$. Otherwise $\delta(s') = 1$ and
	the difference is $0$.

	In every case $\hgeneral(s) - \hgeneral(s') \le 1$, so $\hgeneral$ is
	consistent.
\end{proof}

\section{Class C Heuristic Indicator Functions}\label{app:sn5-class-c-indicators}

The indicator functions $R(s)$, $F(s)$, $M(s)$ and the piecewise vertical
distance $d_v(s)$ used in $h_C$ are defined in Section~\ref{subsec:class-c}
of the main manuscript.  This appendix is retained for cross-reference continuity.

\section{Heuristic Formula Summary for Classes B--G}\label{app:heuristic-algorithms}

Table~\ref{tab:heuristic-summary} in the main text collects the closed-form heuristic definitions for Classes~B--E; Classes~F and~G are defined in Sections~\ref{subsec:class-f} and~\ref{subsec:class-g}. This appendix provides the detailed notation reference: $k(s)$ is the number of vacant units; $d(s)$ or $d_v(s)$ is the relevant vertical distance; $P_X(s)$ is the set of blocking pieces on the goal path for class~$X$; $g_p(s)$ is the number of path-cells occupied by piece~$p$. Class~C indicators $R(s)$, $F(s)$, $M(s)$ are defined in Section~\ref{subsec:class-c}; the Class~E indicator $\delta(s)$ is defined in Section~\ref{subsec:class-e}.

\section{Examples of Classes A Through G}\label{app:class-examples}

To illustrate the implementation of our proposed heuristics, we present detailed worked examples for Classes A and C below. These two classes serve as representative samples that demonstrate the underlying mechanics and step-by-step execution of our approach. To maintain conciseness, the full walkthroughs for the remaining classes have been omitted, as they follow analogous analytical principles.

\subsection{Class A}

\begin{figure}[h]
	\centering
	\resizebox{0.15\textwidth}{!}{
		\begin{tikzpicture}

			\tikzstyle{every node}=[font=\Huge]
			\draw [ fill={rgb,255:red,102; green,153; blue,124} ] (-6.25,-5) rectangle (-5,-6.25);
			\draw [ fill={rgb,255:red,16; green,42; blue,147} ] (-3.75,-5) rectangle (-5,-6.25);
			\draw [ fill={rgb,255:red,102; green,153; blue,124} ] (-6.25,-6.25) rectangle (-5,-7.5);
			\draw [ fill={rgb,255:red,4; green,108; blue,11} ] (-3.75,-6.25) rectangle (-5,-7.5);
			\draw  (-6.25,-7.5) rectangle (-5,-8.75);
			\draw [ fill={rgb,255:red,4; green,108; blue,11} ] (-3.75,-7.5) rectangle (-5,-8.75);
			\draw [ fill={rgb,255:red,251; green,255; blue,31} ] (-6.25,-8.75) rectangle (-5,-10);
			\draw  (-3.75,-8.75) rectangle (-5,-10);
			\draw [ fill={rgb,255:red,255; green,15; blue,15} ] (-6.25,-10) rectangle (-5,-11.25);
			\draw  (-3.75,-10) rectangle (-5,-11.25);
			\draw [ fill={rgb,255:red,255; green,15; blue,15} ] (-6.25,-11.25) rectangle (-5,-12.5);
			\draw [ fill={rgb,255:red,255; green,15; blue,15} ] (-3.75,-11.25) rectangle (-5,-12.5);
			\draw [ fill={rgb,255:red,255; green,15; blue,15} ] (-2.5,-5) rectangle (-1.25,-6.25);
			\draw [ fill={rgb,255:red,255; green,15; blue,15} ] (0,-5) rectangle (-1.25,-6.25);
			\draw  (-2.5,-6.25) rectangle (-1.25,-7.5);
			\draw [ fill={rgb,255:red,255; green,15; blue,15} ] (0,-6.25) rectangle (-1.25,-7.5);
			\draw  (-2.5,-7.5) rectangle (-1.25,-8.75);
			\draw  (0,-7.5) rectangle (-1.25,-8.75);
			\draw  (-2.5,-8.75) rectangle (-1.25,-10);
			\draw  (0,-8.75) rectangle (-1.25,-10);
			\draw  (-2.5,-10) rectangle (-1.25,-11.25);
			\draw  (0,-10) rectangle (-1.25,-11.25);
			\draw  (-2.5,-11.25) rectangle (-1.25,-12.5);
			\draw  (0,-11.25) rectangle (-1.25,-12.5);
			\node [font=\Huge] at (-5,-13) {\scalebox{1.2}{Initial}};
			\node [font=\Huge] at (-1.25,-13) {\scalebox{1.2}{Target}};

		\end{tikzpicture}
	}
	\caption{An example of Class A. The goal is to find the shortest sequence of moves to reach the target state.}
	\label{fig:class-a-example}
\end{figure}

\begin{figure}[htbp]
	\centering
	\resizebox{0.62\textwidth}{!}{
		\begin{tikzpicture}

			\tikzstyle{every node}=[font=\Huge]

			\draw [ fill={rgb,255:red,102; green,153; blue,124} ] (-7.5,-5) rectangle (-6.25,-6.25);
			\draw [ fill={rgb,255:red,16; green,42; blue,147} ] (-5,-5) rectangle (-6.25,-6.25);
			\draw [ fill={rgb,255:red,102; green,153; blue,124} ] (-7.5,-6.25) rectangle (-6.25,-7.5);
			\draw [ fill={rgb,255:red,4; green,108; blue,11} ] (-5,-6.25) rectangle (-6.25,-7.5);
			\draw  (-7.5,-7.5) rectangle (-6.25,-8.75);
			\draw [ fill={rgb,255:red,4; green,108; blue,11} ] (-5,-7.5) rectangle (-6.25,-8.75);
			\draw [ fill={rgb,255:red,251; green,255; blue,31} ] (-7.5,-8.75) rectangle (-6.25,-10);
			\draw  (-5,-8.75) rectangle (-6.25,-10);
			\draw [ fill={rgb,255:red,255; green,15; blue,15} ] (-7.5,-10) rectangle (-6.25,-11.25);
			\draw  (-5,-10) rectangle (-6.25,-11.25);
			\draw [ fill={rgb,255:red,255; green,15; blue,15} ] (-7.5,-11.25) rectangle (-6.25,-12.5);
			\draw [ fill={rgb,255:red,255; green,15; blue,15} ] (-5,-11.25) rectangle (-6.25,-12.5);
			\draw [ fill={rgb,255:red,255; green,15; blue,15} ] (17.5,-5) rectangle (18.75,-6.25);
			\draw [ fill={rgb,255:red,255; green,15; blue,15} ] (20,-5) rectangle (18.75,-6.25);
			\draw [ fill={rgb,255:red,102; green,153; blue,124} ] (17.5,-6.25) rectangle (18.75,-7.5);
			\draw [ fill={rgb,255:red,16; green,42; blue,147} ] (20,-6.25) rectangle (18.75,-7.5);
			\draw [ fill={rgb,255:red,4; green,108; blue,11} ] (17.5,-7.5) rectangle (18.75,-8.75);
			\draw  (20,-7.5) rectangle (18.75,-8.75);
			\draw [ fill={rgb,255:red,251; green,255; blue,31} ] (17.5,-8.75) rectangle (18.75,-10);
			\draw [ fill={rgb,255:red,4; green,108; blue,11} ] (20,-8.75) rectangle (18.75,-10);
			\draw  (17.5,-10) rectangle (18.75,-11.25);
			\draw  (20,-10) rectangle (18.75,-11.25);
			\draw  (17.5,-11.25) rectangle (18.75,-12.5);
			\draw  (20,-11.25) rectangle (18.75,-12.5);
			\draw [ fill={rgb,255:red,255; green,15; blue,15} ] (5,-5) rectangle (6.25,-6.25);
			\draw [ fill={rgb,255:red,255; green,15; blue,15} ] (7.5,-5) rectangle (6.25,-6.25);
			\draw [ fill={rgb,255:red,102; green,153; blue,124} ] (5,-6.25) rectangle (6.25,-7.5);
			\draw [ fill={rgb,255:red,16; green,42; blue,147} ] (7.5,-6.25) rectangle (6.25,-7.5);
			\draw [ fill={rgb,255:red,4; green,108; blue,11} ] (5,-7.5) rectangle (6.25,-8.75);
			\draw [ fill={rgb,255:red,102; green,153; blue,124} ] (7.5,-7.5) rectangle (6.25,-8.75);
			\draw [ fill={rgb,255:red,251; green,255; blue,31} ] (5,-8.75) rectangle (6.25,-10);
			\draw [ fill={rgb,255:red,4; green,108; blue,11} ] (7.5,-8.75) rectangle (6.25,-10);
			\draw  (5,-10) rectangle (6.25,-11.25);
			\draw  (7.5,-10) rectangle (6.25,-11.25);
			\draw  (5,-11.25) rectangle (6.25,-12.5);
			\draw  (7.5,-11.25) rectangle (6.25,-12.5);
			\draw [ fill={rgb,255:red,255; green,15; blue,15} ] (11.25,-5) rectangle (12.5,-6.25);
			\draw [ fill={rgb,255:red,255; green,15; blue,15} ] (13.75,-5) rectangle (12.5,-6.25);
			\draw [ fill={rgb,255:red,255; green,15; blue,15} ] (11.25,-6.25) rectangle (12.5,-7.5);
			\draw [ fill={rgb,255:red,102; green,153; blue,124} ] (13.75,-6.25) rectangle (12.5,-7.5);
			\draw [ fill={rgb,255:red,4; green,108; blue,11} ] (11.25,-7.5) rectangle (12.5,-8.75);
			\draw [ fill={rgb,255:red,16; green,42; blue,147} ] (13.75,-7.5) rectangle (12.5,-8.75);
			\draw [ fill={rgb,255:red,251; green,255; blue,31} ] (11.25,-8.75) rectangle (12.5,-10);
			\draw [ fill={rgb,255:red,4; green,108; blue,11} ] (13.75,-8.75) rectangle (12.5,-10);
			\draw [ fill={rgb,255:red,255; green,15; blue,15} ] (11.25,-10) rectangle (12.5,-11.25);
			\draw [ fill={rgb,255:red,16; green,42; blue,147} ] (13.75,-10) rectangle (12.5,-11.25);
			\draw [ fill={rgb,255:red,255; green,15; blue,15} ] (11.25,-11.25) rectangle (12.5,-12.5);
			\draw [ fill={rgb,255:red,255; green,15; blue,15} ] (13.75,-11.25) rectangle (12.5,-12.5);
			\draw [->, line width=2mm, >=latex] (-4.75,-8.75) -- (-2,-8.75);
			\draw [->, line width=2mm, >=latex] (1.75,-8.75) -- (4.5,-8.75);
			\draw [->, line width=2mm, >=latex] (8,-8.75) -- (10.5,-8.75);
			\draw [->, line width=2mm, >=latex] (14.25,-8.75) -- (17,-8.75);
			\node [font=\Huge] at (-3.5,-8.25) {1};
			\node [font=\Huge] at (3,-8.25) {2};
			\node [font=\Huge] at (9.25,-8.25) {3};
			\node [font=\Huge] at (15.5,-8.25) {4};
			\node [font=\Huge] at (-6.25,-13) {3 + 1 = 4};
			\node [font=\Huge] at (0,-13) {2 + 1 = 3};
			\node [font=\Huge] at (6.25,-13) {1 + 1 = 2 };
			\node [font=\Huge] at (12.5,-13) {0 + 1 = 1};
			\node [font=\Huge] at (18.75,-13) {0 + 0 = 0};

		\end{tikzpicture}
	}
	\caption{This sequence demonstrates the optimal solution for moving the target piece from its initial state (0) to the final target state (4), with the computed heuristic value for Class A provided below each state.}
	\label{fig:class-a-optimal-sequence}
\end{figure}

\subsection{Class C}

\begin{figure}[h]
	\centering
	\resizebox{0.13\textwidth}{!}{
		\begin{tikzpicture}

			\tikzstyle{every node}=[font=\Huge]

			\draw [ fill={rgb,255:red,240; green,179; blue,45} ] (-21.25,-56.25) rectangle (-20,-57.5);
			\draw [ fill={rgb,255:red,240; green,179; blue,45} ] (-18.75,-56.25) rectangle (-20,-57.5);
			\draw [ fill={rgb,255:red,240; green,179; blue,45} ] (-21.25,-57.5) rectangle (-20,-58.75);
			\draw [ fill={rgb,255:red,249; green,6; blue,6} ] (-18.75,-57.5) rectangle (-20,-58.75);
			\draw [ fill={rgb,255:red,11; green,228; blue,7} ] (-21.25,-58.75) rectangle (-20,-60);
			\draw [ fill={rgb,255:red,249; green,6; blue,6} ] (-18.75,-58.75) rectangle (-20,-60);
			\draw [ fill={rgb,255:red,7; green,224; blue,228} ] (-21.25,-60) rectangle (-20,-61.25);
			\draw [ fill={rgb,255:red,7; green,224; blue,228} ] (-18.75,-60) rectangle (-20,-61.25);
			\draw  (-21.25,-61.25) rectangle (-20,-62.5);
			\draw [ fill={rgb,255:red,238; green,255; blue,0} ] (-18.75,-61.25) rectangle (-20,-62.5);
			\draw [ fill={rgb,255:red,226; green,167; blue,167} ] (-21.25,-62.5) rectangle (-20,-63.75);
			\draw [ fill={rgb,255:red,194; green,25; blue,172} ] (-18.75,-62.5) rectangle (-20,-63.75);
			\draw [ fill={rgb,255:red,12; green,50; blue,12} ] (-21.25,-63.75) rectangle (-20,-65);
			\draw [ fill={rgb,255:red,194; green,25; blue,172} ] (-18.75,-63.75) rectangle (-20,-65);
			\draw [ fill={rgb,255:red,0; green,42; blue,255} ] (-21.25,-65) rectangle (-20,-66.25);
			\draw [ fill={rgb,255:red,194; green,25; blue,172} ] (-18.75,-65) rectangle (-20,-66.25);
			\draw  (-17.5,-56.25) rectangle (-16.25,-57.5);
			\draw  (-15,-56.25) rectangle (-16.25,-57.5);
			\draw  (-17.5,-57.5) rectangle (-16.25,-58.75);
			\draw  (-15,-57.5) rectangle (-16.25,-58.75);
			\draw  (-17.5,-58.75) rectangle (-16.25,-60);
			\draw  (-15,-58.75) rectangle (-16.25,-60);
			\draw  (-17.5,-60) rectangle (-16.25,-61.25);
			\draw  (-15,-60) rectangle (-16.25,-61.25);
			\draw  (-17.5,-61.25) rectangle (-16.25,-62.5);
			\draw  (-15,-61.25) rectangle (-16.25,-62.5);
			\draw  (-17.5,-62.5) rectangle (-16.25,-63.75);
			\draw  (-15,-62.5) rectangle (-16.25,-63.75);
			\draw  (-17.5,-63.75) rectangle (-16.25,-65);
			\draw  (-15,-63.75) rectangle (-16.25,-65);
			\draw [ fill={rgb,255:red,249; green,6; blue,6} ] (-17.5,-65) rectangle (-16.25,-66.25);
			\draw [ fill={rgb,255:red,249; green,6; blue,6} ] (-15,-65) rectangle (-16.25,-66.25);
			\node [font=\Huge] at (-20,-67) {\scalebox{1.5}{Initial}};
			\node [font=\Huge] at (-16.25,-67) {\scalebox{1.5}{Target}};

		\end{tikzpicture}
	}
	\caption{An example of Class C. The goal is to find the shortest sequence of moves to reach the target state.}
	\label{fig:class-c-example}
\end{figure}

\section{Full Consistency Proof for the Class B Heuristic}\label{app:class-b-proof}

This appendix provides the full proof of Theorem~\ref{thm:hB_consistency} (Consistency of $h_B$); the statement and proof sketch appear in the Class~B subsection of the main text.

\begin{proof}
	\textbf{Condition 1 (Goal value).}

	In any goal state $s_{\text{goal}}$, the goal piece already occupies the target location, hence $d(s_{\text{goal}})=0$.
	No piece blocks the path, so $P_B(s_{\text{goal}}) = \varnothing$. Therefore,
	\[
		h_B(s_{\text{goal}}) = \left\lceil \frac{d(s_{\text{goal}})}{k(s_{\text{goal}})} \right\rceil
		+ \sum_{p \in P_B(s_{\text{goal}})} \left\lceil \frac{g_p(s_{\text{goal}})}{k(s_{\text{goal}})} \right\rceil = 0.
	\]

	\textbf{Condition 2 (One-step change bound).}

	Let $s \to s'$ be any valid move. We show $h_B(s) - h_B(s') \leq 1$ by cases.

	\medskip
	\emph{\textbf{Case A} (Move the goal piece toward the target).}
	In one move, the goal piece can advance by at most $k(s)$ cells along the path. Thus
	$0 \leq d(s) - d(s') \leq k(s)$, so
	$\lceil d(s)/k(s) \rceil - \lceil d(s')/k(s) \rceil \leq 1$,
	while all $g_p$ terms are unchanged. Hence $h_B(s) - h_B(s') \leq 1$.

	\medskip
	\emph{\textbf{Case B} (Move a single blocking piece on the path).}
	A single move can remove from the path at most $k(s)$ occupied units of the moved piece $p$. Thus
	$0 \leq g_p(s) - g_p(s') \leq k(s)$, and so
	$\lceil g_p(s)/k(s) \rceil - \lceil g_p(s')/k(s) \rceil \leq 1$,
	with all other terms unchanged. Therefore $h_B(s) - h_B(s') \leq 1$.

	\medskip
	\emph{\textbf{Case C} (Any other move).}
	The move neither advances the goal piece along the path nor removes occupied path-cells of any blocking piece. Hence neither $\lceil d/k \rceil$ nor any $\lceil g_p/k \rceil$ decreases; thus $h_B(s') \geq h_B(s)$ and $h_B(s) - h_B(s') \leq 0$.

	\medskip
	In all cases, $h_B(s) - h_B(s') \leq 1$. Together with $h_B(s_{\text{goal}})=0$, $h_B$ is consistent.
\end{proof}

\begin{corollary}[Admissibility]
	With unit positive costs and $h_B(s_{\text{goal}}) = 0$, consistency implies admissibility; hence $h_B(s) \leq h^*(s)$ for all $s$.
\end{corollary}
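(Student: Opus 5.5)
The statement immediately above is the Admissibility corollary for $h_B$. The plan is the standard induction along an optimal path, using only the two conditions established in the proof of Theorem~\ref{thm:hB_consistency}. First, dispose of the trivial cases. If $s$ is a goal state, $h_B(s)=0=h^*(s)$ by Condition~1. If no goal state is reachable from $s$, then $h^*(s)=\infty$ and the inequality holds vacuously. This second case does occur here, for example in the unsolvable configurations of Corollary~\ref{cor:crossing} and Theorem~\ref{thm:comb_flip}.

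Otherwise, fix an optimal move sequence $s=s_0\to s_1\to\cdots\to s_N=s_{\text{goal}}$ with $N=h^*(s)$, which is possible because all moves have unit cost. The main step is induction on $N$, or equivalently a telescoping sum. Condition~2 gives $h_B(s_{i})-h_B(s_{i+1})\le 1=c(s_i,a_i,s_{i+1})$ for each $i$. Summing over $i=0,\dots,N-1$ yields $h_B(s_0)-h_B(s_N)\le N$, and $h_B(s_N)=0$ by Condition~1, so $h_B(s)\le N=h^*(s)$.

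One point needs checking: every state on the path must stay inside the domain where $h_B$ and Condition~2 apply. The class is determined by $k$, $n$ and the goal-piece shape. These are invariant under moves, because piece sizes and the total number of occupied cells are preserved, so $k$ never changes. Hence every $s_i$ remains in Class~B and the one-step bound applies at each step.

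The only subtle point I expect is that Condition~1 should be read as holding for every goal state, not just a single designated $s_{\text{goal}}$. This matters when several configurations place $P_t$ on its target. The argument in the consistency proof depends only on the target piece occupying the goal cells, so it covers all such states.
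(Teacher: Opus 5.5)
Your proof is correct and is the argument the paper relies on. The paper gives no separate derivation; it invokes the standard fact that a consistent heuristic with zero goal value and unit costs is admissible, which is exactly your telescoping of Condition~2 along an optimal path, closed off by Condition~1. Your side checks are sound: the $h^*(s)=\infty$ case, the invariance of $k$, $n$ and the goal-piece type so that every state on the path stays in Class~B, and Condition~1 holding at every goal state. One small correction: of your two unsolvability examples, only the Crossing Constraint applies here, since comb-shaped pieces never fall in Class~B.
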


\begin{figure}[htbp]
	\centering
	\includegraphics[width=\textwidth]{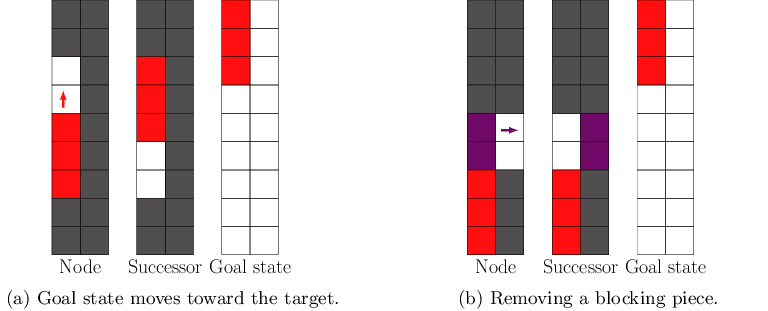}
	\caption{Class B heuristic adjustments.}
	\label{fig:class_b_merged}
\end{figure}

\section{Full Consistency Proof for the Class C Heuristic}\label{app:class-c-proof}

This appendix provides the full case analysis for Theorem~\ref{thm:hC_consistency} (Consistency of $h_C$); the statement and proof sketch appear in Section~\ref{subsec:class-c}.

\begin{proof}
	\textbf{Condition 1: Heuristic at the Goal State.}

	In any goal state $s_{\text{goal}}$, the goal piece occupies the target location with the correct orientation and frame side. Hence
	$d_v(s_{\text{goal}}) = 0$, $R(s_{\text{goal}}) = 0$, $F(s_{\text{goal}}) = 0$, $M(s_{\text{goal}}) = 0$.
	Therefore $h_C(s_{\text{goal}}) = 2(0 + 0 + 0) - 0 = 0$.

	\textbf{Condition 2: The Triangle Inequality.}

	Recall $h_C(s) = 2(d_v(s) + R(s) + F(s)) - M(s)$.
	We need to show $h_C(s) - h_C(s') \leq 1$ for any single transition $s \to s'$.

	\emph{\textbf{Case A} (Clearing the front when blocked).}
	The move clears the front-facing unit. Thus $d_v = d_v'$ and $R$, $F$ are unchanged, while $M'=1$ and $M=0$. Hence
	$h_C(s) - h_C(s') = -(M - M') = 1$.

	\emph{\textbf{Case B} (Advance the goal piece by one unit when ready).}
	One advance reduces vertical distance by exactly one: $d_v' = d_v - 1$. Readiness is lost: $M=1$, $M'=0$. Therefore
	$h_C(s) - h_C(s') = 2(d_v - (d_v-1)) - (1 - 0) = 2 - 1 = 1$.

	\emph{\textbf{Case C} (Eliminate a frame-side discrepancy).}
	$F$ decreases by~1 and readiness is gained ($M'=1$, $M=0$), giving $2(F-F')-(M'-M)=2-1=1$.

	\emph{\textbf{Case D} (Final horizontal placement rotation requirement).}
	$R$ decreases by~1 and readiness is gained ($M'=1$, $M=0$), giving $2(R-R')-(M'-M)=2-1=1$.

	\emph{All other moves.}
	Each such case changes at most one term by one and never increases $d_v$ by more than one per move under $k=1$. In each case the net change is at most~1.

	In all possible single moves, $h_C(s) - h_C(s') \leq 1$. Together with $h_C(s_{\text{goal}})=0$, $h_C$ is consistent.
\end{proof}

\begin{figure}[htbp]
    	\centering
		\includegraphics[width=\textwidth]{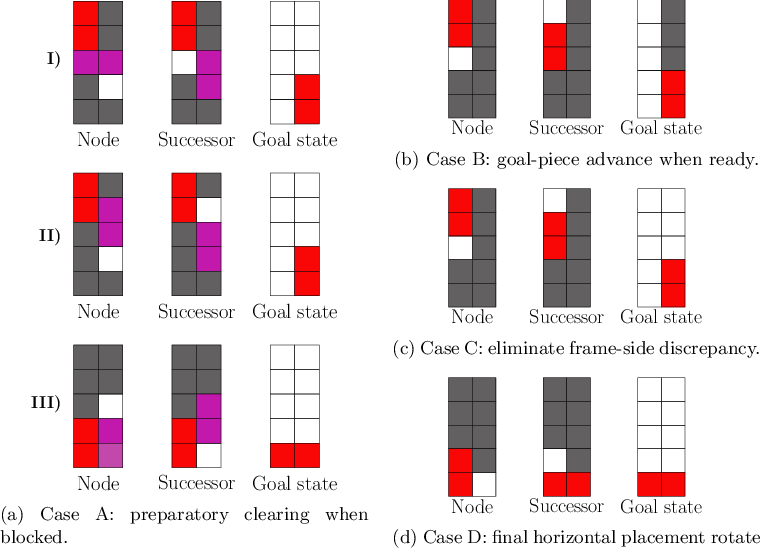}
    	\caption{Four cases of the Class C consistency proof (Cases A--D).}
    	\label{fig:class_c_merged}
\end{figure}

\section{Full Consistency Proof for the Class D Heuristic}\label{app:class-d-proof}

This appendix provides the full case analysis for Theorem~\ref{thm:hD_consistency} (Consistency of $h_D$); the statement and proof sketch appear in Section~\ref{subsec:class-d}.

\begin{proof}
	Condition 1 is identical to the Class~B goal-state argument in
	Appendix~\ref{app:class-b-proof}, Condition~1, with $d(s_{\text{goal}})=0$
	and $P_B(s_{\text{goal}})=\varnothing$ replaced by $d_v(s_{\text{goal}})=0$
	and $P_D(s_{\text{goal}})=\varnothing$; hence $h_D(s_{\text{goal}})=0$.
	For Condition 2, the argument is structurally identical to the Class~B
	proof (Cases A--C), with two substitutions: (a) the raw cell-count
	distance $d$ and its ceiling bound
	$\lceil d/k\rceil - \lceil d'/k\rceil \le 1$ are replaced by the vertical
	row distance $d_v$ with $k=1$ fixed, so the ceiling degenerates to
	$d_v - d_v' \le 1$ directly; (b) by the Crossing Constraint Corollary, no piece
	can cross columns when $k<n$, so no orientation-change or frame-side terms
	arise, and the three cases remain exhaustive. Cases A--C then follow the same
	ceiling argument as the Class~B proof with $k=1$.
\end{proof}

\section{Full Consistency Proof for the Class E Heuristic}\label{app:class-e-proof}

This appendix provides the full case analysis for Theorem~\ref{thm:hE_consistency} (Consistency of $h_E$); the statement and proof sketch appear in Section~\ref{subsec:class-e}.

\begin{proof}
	\textbf{Condition 1: Heuristic at the Goal State.}
	In a goal state $s_{\text{goal}}$, $d_v(s_{\text{goal}}) = 0$, $P_E(s_{\text{goal}}) = \varnothing$, and $\delta(s_{\text{goal}}) = 0$. Therefore $h_E(s_{\text{goal}}) = 0 + 0 + 0 = 0$.

	\textbf{Condition 2: The Triangle Inequality.}
	We need to show $h_E(s) - h_E(s') \leq 1$ for any single move $s \to s'$.

	\begin{itemize}
		\item \textbf{Case A:} The goal piece moves one unit vertically toward the target. The goal piece advances at most one row per move under the two-vacancy bound, so $d_v(s)-d_v(s')\le 1$ and all other terms are unchanged.

		\item \textbf{Case B:} The goal piece slides past a size-1 blocking piece.
		      By the two vacancies, a legal slide bypasses the single-cell blocker without removing it; this reduces the vertical offset by two: $d_v(s') = d_v(s) - 2$. The bypass requires the ``pointing'' precondition, so $\delta(s) = 1$ before the slide and $\delta(s') = 0$ after. No $\lceil g_p/2 \rceil$ term changes. Therefore
		      \[
			      h_E(s) - h_E(s') = (d_v(s) - d_v(s')) - (\delta(s) - \delta(s')) = (2 - 1) = 1 \leq 1.
		      \]

		\item \textbf{Case C:} Remove up to $k = 2$ occupied cells of a blocking piece with size $\geq 2$. One move clears at most $k=2$ occupied cells of a single blocking piece, so $\lceil g_p(s)/2\rceil-\lceil g_p(s')/2\rceil \le 1$ and $d_v$ is unchanged.

		\item \textbf{Case D:} Activate the bypass condition $\delta$. $d_v$ and every $\lceil g_p/2\rceil$ term are unchanged, but $\delta$ toggles $0\to1$ instead of $1\to0$, giving $h_E(s)-h_E(s')=-1\le1$.

		\item \textbf{All other moves.} No term decreases, so $h_E(s)-h_E(s')\le 0 \le 1$.
	\end{itemize}

	In every possible single move, $h_E(s) - h_E(s') \leq 1$. Together with $h_E(s_{\text{goal}}) = 0$, $h_E$ is consistent.
\end{proof}

\begin{figure}[h]
	\centering
	\resizebox{0.3\textwidth}{!}{
		\begin{tikzpicture}

			\tikzstyle{every node}=[font=\LARGE]

			\draw [ fill={rgb,255:red,249; green,6; blue,6} ] (-2.5,22.5) rectangle (-1.25,21.25);
			\draw [ fill={rgb,255:red,249; green,6; blue,6} ] (0,22.5) rectangle (-1.25,21.25);
			\draw [ fill={rgb,255:red,249; green,6; blue,6} ] (-2.5,21.25) rectangle (-1.25,20);
			\draw [ fill={rgb,255:red,194; green,25; blue,172} ] (0,21.25) rectangle (-1.25,20);
			\draw  (-2.5,20) rectangle (-1.25,18.75);
			\draw  (0,20) rectangle (-1.25,18.75);
			\draw [ fill={rgb,255:red,98; green,96; blue,96} ] (-2.5,18.75) rectangle (-1.25,17.5);
			\draw [ fill={rgb,255:red,98; green,96; blue,96} ] (0,18.75) rectangle (-1.25,17.5);
			\draw [ fill={rgb,255:red,98; green,96; blue,96} ] (-2.5,17.5) rectangle (-1.25,16.25);
			\draw [ fill={rgb,255:red,98; green,96; blue,96} ] (0,17.5) rectangle (-1.25,16.25);
			\draw  (2.5,22.5) rectangle (3.75,21.25);
			\draw  (5,22.5) rectangle (3.75,21.25);
			\draw [ fill={rgb,255:red,249; green,6; blue,6} ] (2.5,21.25) rectangle (3.75,20);
			\draw [ fill={rgb,255:red,194; green,25; blue,172} ] (5,21.25) rectangle (3.75,20);
			\draw [ fill={rgb,255:red,249; green,6; blue,6} ] (2.5,20) rectangle (3.75,18.75);
			\draw [ fill={rgb,255:red,249; green,6; blue,6} ] (5,20) rectangle (3.75,18.75);
			\draw [ fill={rgb,255:red,98; green,96; blue,96} ] (2.5,18.75) rectangle (3.75,17.5);
			\draw [ fill={rgb,255:red,98; green,96; blue,96} ] (5,18.75) rectangle (3.75,17.5);
			\draw [ fill={rgb,255:red,98; green,96; blue,96} ] (2.5,17.5) rectangle (3.75,16.25);
			\draw [ fill={rgb,255:red,98; green,96; blue,96} ] (5,17.5) rectangle (3.75,16.25);
			\draw  (10,22.5) rectangle (11.25,21.25);
			\draw  (12.5,22.5) rectangle (11.25,21.25);
			\draw  (10,21.25) rectangle (11.25,20);
			\draw  (12.5,21.25) rectangle (11.25,20);
			\draw  (10,20) rectangle (11.25,18.75);
			\draw  (12.5,20) rectangle (11.25,18.75);
			\draw  (10,18.75) rectangle (11.25,17.5);
			\draw [ fill={rgb,255:red,249; green,6; blue,6} ] (12.5,18.75) rectangle (11.25,17.5);
			\draw [ fill={rgb,255:red,249; green,6; blue,6} ] (10,17.5) rectangle (11.25,16.25);
			\draw [ fill={rgb,255:red,249; green,6; blue,6} ] (12.5,17.5) rectangle (11.25,16.25);
			\node [font=\Huge] at (-1.25,15.75) {\scalebox{1.5}{Node}};
			\node [font=\Huge] at (3.75,15.75) {\scalebox{1.5}{Successor}};
			\node [font=\Huge] at (11.25,15.75) {\scalebox{1.5}{Goal state}};
			\draw [->, line width=1mm, >=latex] (-2,21) -- (-2,20.25);
			\draw [->, line width=1mm, >=latex] (3.25,20.25) -- (3.25,21);

		\end{tikzpicture}
	}
	\caption{Example of the goal piece sliding past a blocking piece of size 1, demonstrating the corresponding decrease in heuristic estimation.}
	\label{fig:class-e-bypass-slide}
\end{figure}

\section{Per-Class Experimental Results}\label{app:per-class-results}

This appendix collects the per-instance comparison figures for each of the seven puzzle classes. The condensed analysis for each class appears in the main body; the figures below provide the full per-instance breakdowns of EBF and node expansions.

\subsection{Class A Results}

\begin{figure}[H]
	\centering
	\includegraphics[scale=0.65]{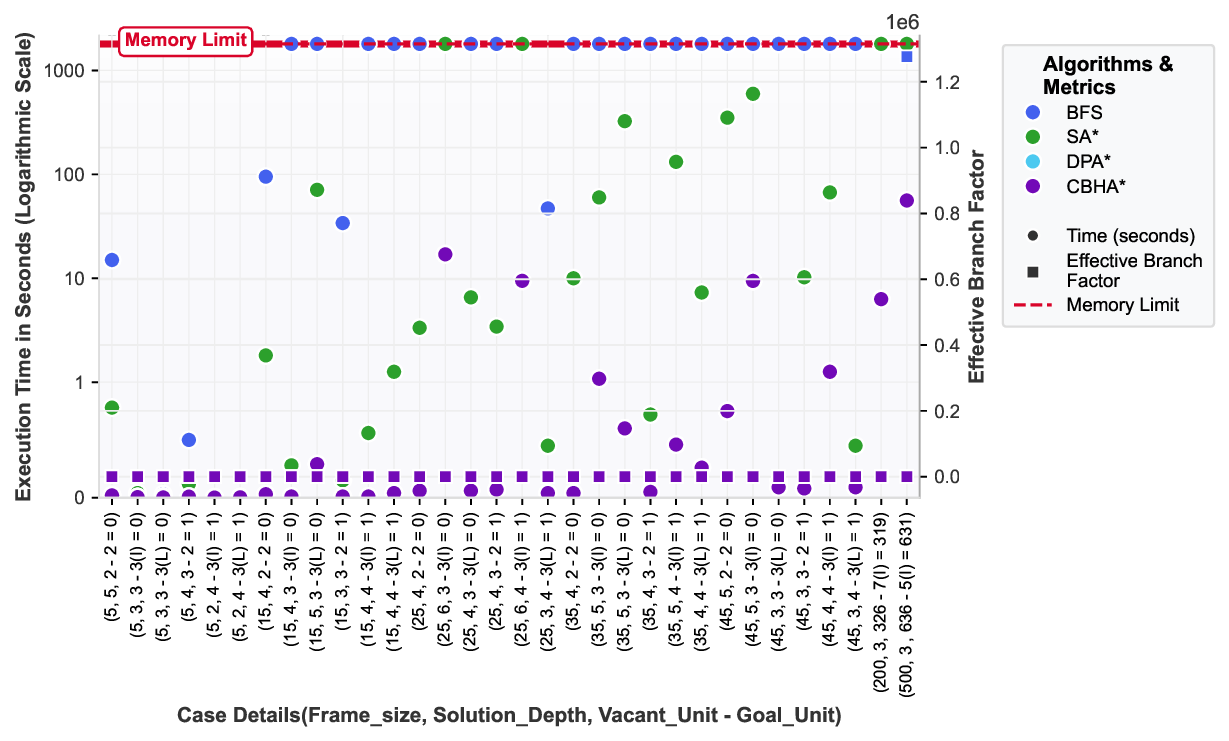}
	\caption{Comparison of execution times and effective branch factors for different algorithms (BFS, SA*, DPA*, and CBHA*) across various case scenarios of Class A.}
	\label{fig:results-class-a}
\end{figure}

\subsection{Class B Results}

\begin{figure}[H]
	\centering
	\includegraphics[scale=0.65]{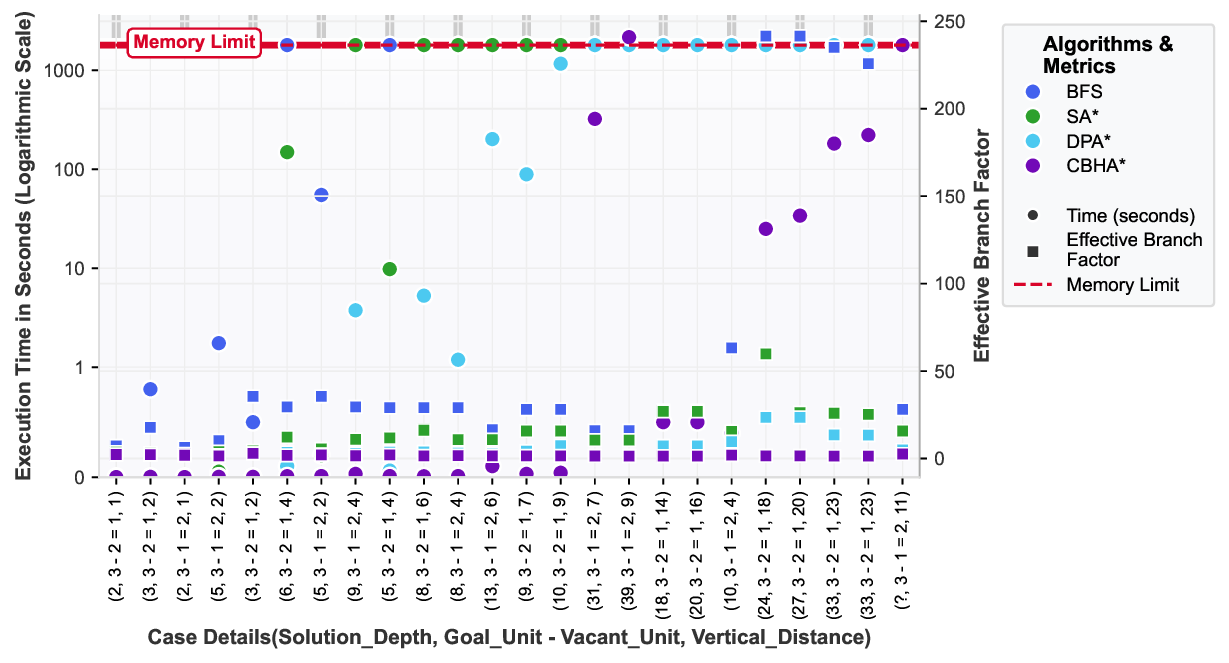}
	\caption{Comparison of execution times and effective branch factors for different algorithms (BFS, SA*, DPA*, and CBHA*) across various case scenarios of Class B.}
	\label{fig:results-class-b}
\end{figure}

\subsection{Class C Results}

\begin{figure}[H]
	\centering
	\includegraphics[scale=0.65]{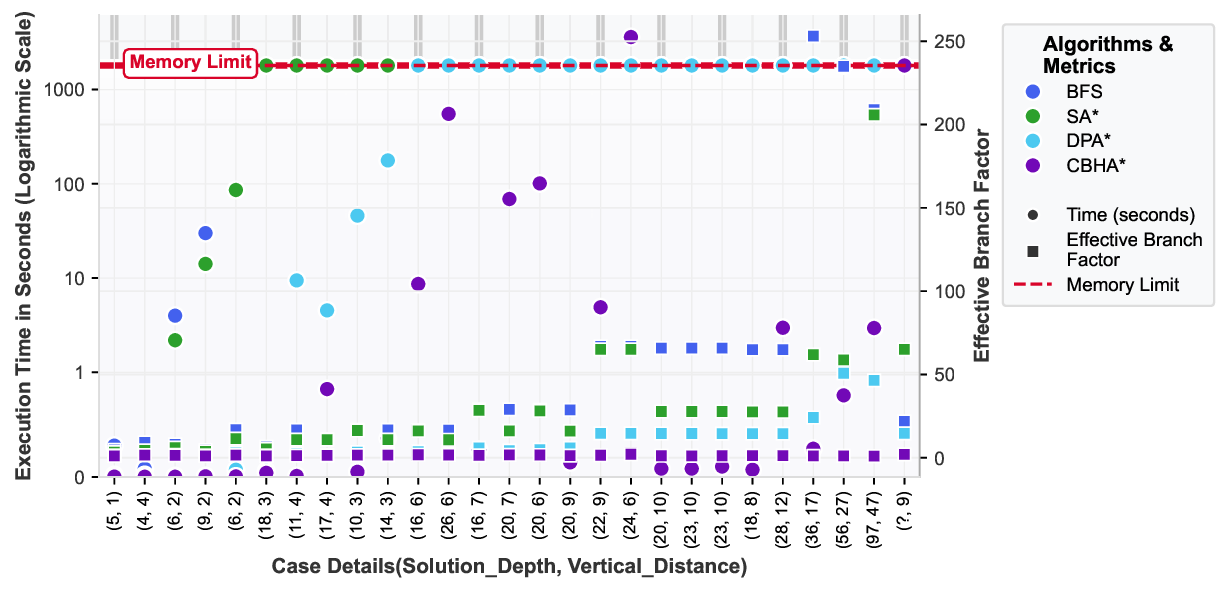}
	\caption{Comparison of execution times and effective branch factors for different algorithms (BFS, SA*, DPA*, and CBHA*) across various case scenarios of Class C.}
	\label{fig:results-class-c}
\end{figure}

\subsection{Class D Results}

\begin{figure}[H]
	\centering
	\includegraphics[scale=0.65]{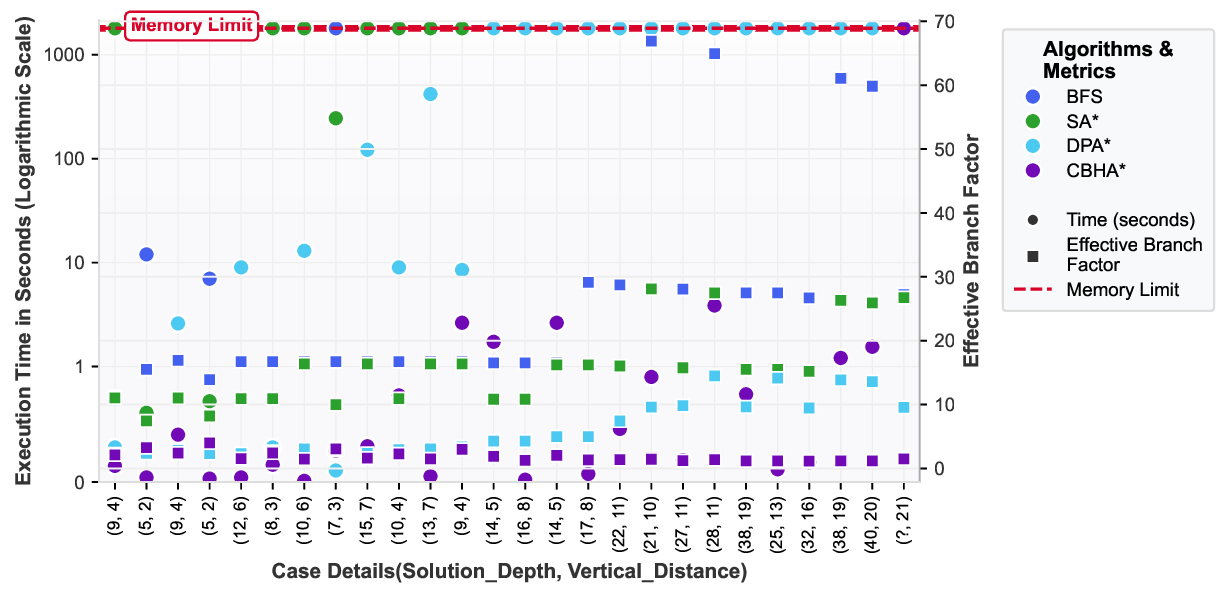}
	\caption{Comparison of execution times and effective branch factors for different algorithms (BFS, SA*, DPA*, and CBHA*) across various case scenarios of Class D.}
	\label{fig:results-class-d}
\end{figure}

\subsection{Class E Results}

\begin{figure}[H]
	\centering
	\includegraphics[scale=0.65]{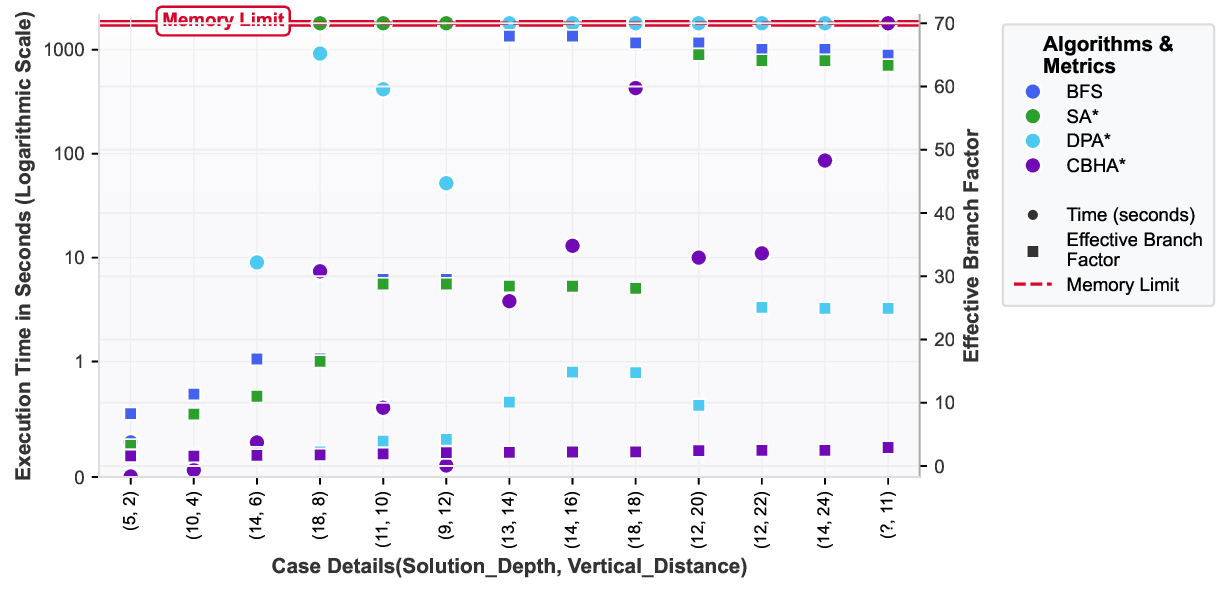}
	\caption{Comparison of execution times and effective branch factors for different algorithms (BFS, SA*, DPA*, and CBHA*) across various case scenarios of Class E.}
	\label{fig:results-class-e}
\end{figure}

\subsection{Class F Results}

\begin{figure}[H]
	\centering
	\includegraphics[scale=0.7]{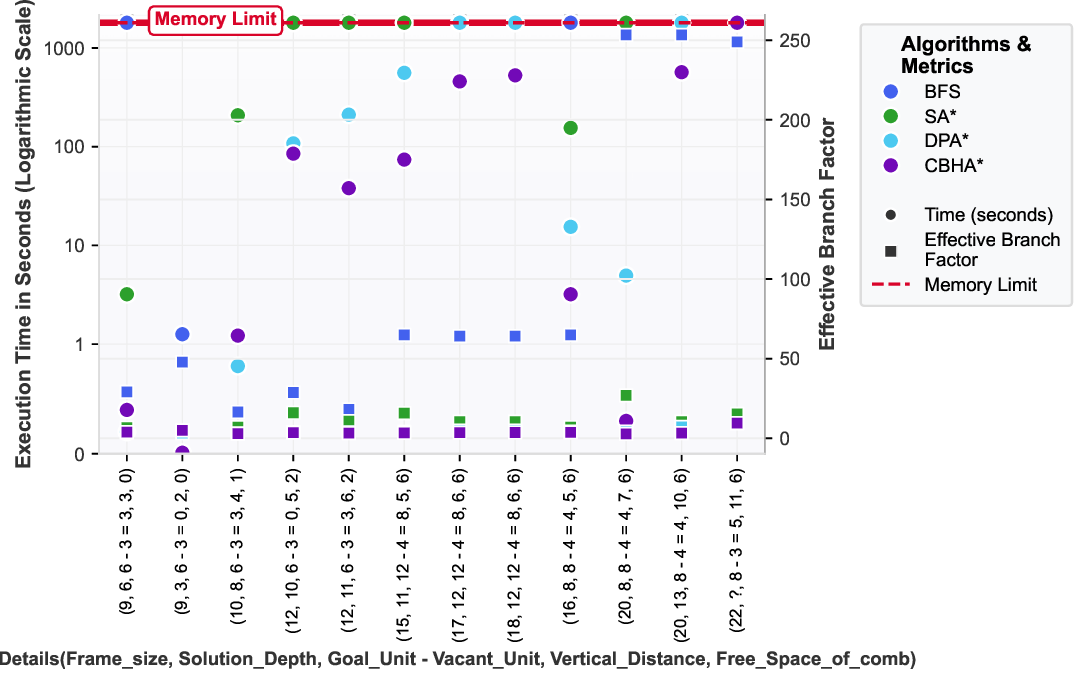}
	\caption{Comparison of execution times and effective branch factors for different algorithms (BFS, SA*, DPA*, and CBHA*) across various case scenarios of Class F.}
	\label{fig:results-class-f}
\end{figure}

\subsection{Class G Results}

\begin{figure}[H]
	\centering
	\includegraphics[scale=0.8]{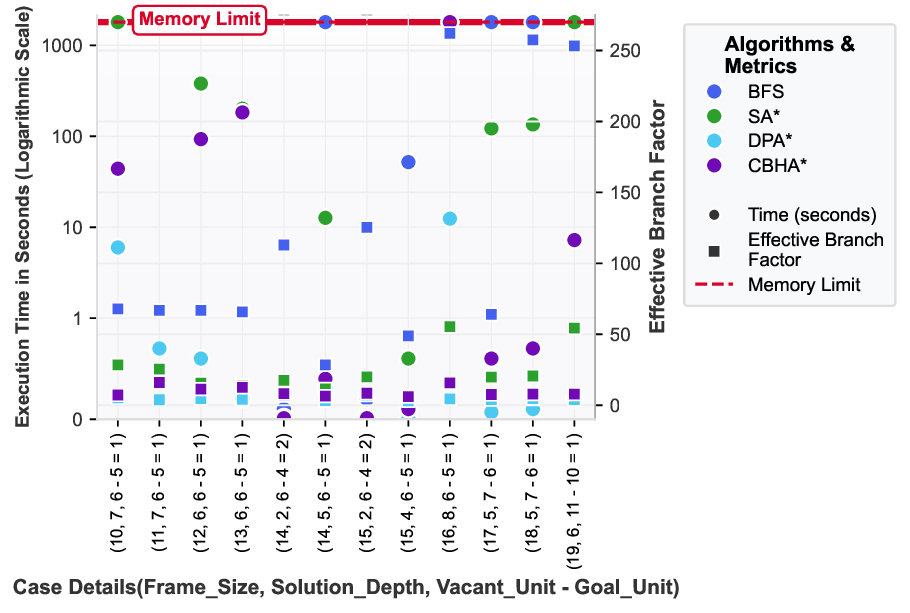}
	\caption{Comparison of execution times and effective branch factors for different algorithms (BFS, SA*, DPA*, and CBHA*) across various case scenarios of Class G.}
	\label{fig:results-class-g}
\end{figure}

\section{Extended Worked Example: Optimal Move Sequence for an $8\times 2$ Instance}\label{app:worked-examples}

\begin{figure}[htbp]
	\centering
	\resizebox{0.65\textwidth}{!}{
		\begin{tikzpicture}

			\tikzstyle{every node}=[font=\LARGE]

			\draw [ fill={rgb,255:red,228; green,7; blue,7} ] (5,13.5) rectangle (3.75,12.25);
			\draw [ fill={rgb,255:red,228; green,7; blue,7} ] (6.25,13.5) rectangle (5,12.25);
			\draw [ fill={rgb,255:red,228; green,7; blue,7} ] (5,12.25) rectangle (3.75,11);
			\draw [ fill={rgb,255:red,228; green,136; blue,7} ] (6.25,12.25) rectangle (5,11);
			\draw [ fill={rgb,255:red,228; green,136; blue,7} ] (6.25,11) rectangle (5,9.75);
			\draw [ fill={rgb,255:red,228; green,136; blue,7} ] (6.25,9.75) rectangle (5,8.5);
			\draw [ fill={rgb,255:red,33; green,228; blue,7} ] (5,7.25) rectangle (3.75,6);
			\draw [ fill={rgb,255:red,33; green,228; blue,7} ] (5,6) rectangle (3.75,4.75);
			\draw [ fill={rgb,255:red,33; green,228; blue,7} ] (5,4.75) rectangle (3.75,3.5);
			\draw [ fill={rgb,255:red,33; green,7; blue,228} ] (5,9.75) rectangle (3.75,8.5);
			\draw [ fill={rgb,255:red,33; green,7; blue,228} ] (5,8.5) rectangle (3.75,7.25);
			\draw [ fill={rgb,255:red,7; green,224; blue,228} ] (6.25,6) rectangle (5,4.75);
			\draw [ fill={rgb,255:red,7; green,224; blue,228} ] (6.25,4.75) rectangle (5,3.5);
			\draw  (3.75,11) rectangle (5,9.75);
			\draw  (6.25,8.5) rectangle (5,7.25);
			\draw  (6.25,7.25) rectangle (5,6);
			\draw [->, line width=1mm, >=latex] (7,8.5) -- (8,8.5);
			\draw [ fill={rgb,255:red,228; green,7; blue,7} ] (10,13.5) rectangle (8.75,12.25);
			\draw [ fill={rgb,255:red,228; green,7; blue,7} ] (11.25,13.5) rectangle (10,12.25);
			\draw [ fill={rgb,255:red,228; green,7; blue,7} ] (10,12.25) rectangle (8.75,11);
			\draw [ fill={rgb,255:red,228; green,136; blue,7} ] (11.25,12.25) rectangle (10,11);
			\draw [ fill={rgb,255:red,228; green,136; blue,7} ] (11.25,11) rectangle (10,9.75);
			\draw [ fill={rgb,255:red,228; green,136; blue,7} ] (11.25,9.75) rectangle (10,8.5);
			\draw [ fill={rgb,255:red,33; green,7; blue,228} ] (11.25,8.5) rectangle (10,7.25);
			\draw [ fill={rgb,255:red,33; green,7; blue,228} ] (11.25,7.25) rectangle (10,6);
			\draw [ fill={rgb,255:red,7; green,224; blue,228} ] (11.25,6) rectangle (10,4.75);
			\draw [ fill={rgb,255:red,7; green,224; blue,228} ] (11.25,4.75) rectangle (10,3.5);
			\draw [ fill={rgb,255:red,33; green,224; blue,7} ] (10,7.25) rectangle (8.75,6);
			\draw [ fill={rgb,255:red,33; green,224; blue,7} ] (10,6) rectangle (8.75,4.75);
			\draw [ fill={rgb,255:red,33; green,224; blue,7} ] (10,4.75) rectangle (8.75,3.5);
			\draw  (8.75,11) rectangle (10,9.75);
			\draw  (8.75,9.75) rectangle (10,8.5);
			\draw  (8.75,8.5) rectangle (10,7.25);
			\draw [->, line width=1mm, >=latex] (12,8.5) -- (13,8.5);
			\draw [ fill={rgb,255:red,228; green,7; blue,7} ] (15,13.5) rectangle (13.75,12.25);
			\draw [ fill={rgb,255:red,228; green,7; blue,7} ] (16.25,13.5) rectangle (15,12.25);
			\draw [ fill={rgb,255:red,228; green,7; blue,7} ] (15,12.25) rectangle (13.75,11);
			\draw [ fill={rgb,255:red,228; green,136; blue,7} ] (16.25,12.25) rectangle (15,11);
			\draw [ fill={rgb,255:red,228; green,136; blue,7} ] (16.25,11) rectangle (15,9.75);
			\draw [ fill={rgb,255:red,228; green,136; blue,7} ] (16.25,9.75) rectangle (15,8.5);
			\draw [ fill={rgb,255:red,33; green,7; blue,228} ] (16.25,8.5) rectangle (15,7.25);
			\draw [ fill={rgb,255:red,33; green,7; blue,228} ] (16.25,7.25) rectangle (15,6);
			\draw [ fill={rgb,255:red,7; green,224; blue,228} ] (16.25,6) rectangle (15,4.75);
			\draw [ fill={rgb,255:red,7; green,224; blue,228} ] (16.25,4.75) rectangle (15,3.5);
			\draw [ fill={rgb,255:red,33; green,228; blue,7} ] (15,11) rectangle (13.75,9.75);
			\draw [ fill={rgb,255:red,33; green,228; blue,7} ] (15,9.75) rectangle (13.75,8.5);
			\draw [ fill={rgb,255:red,33; green,228; blue,7} ] (15,8.5) rectangle (13.75,7.25);
			\draw  (13.75,7.25) rectangle (15,6);
			\draw  (13.75,6) rectangle (15,4.75);
			\draw  (13.75,4.75) rectangle (15,3.5);
			\draw [->, line width=1mm, >=latex] (17,8.5) -- (18.25,8.5);
			\draw [ fill={rgb,255:red,228; green,7; blue,7} ] (20,13.5) rectangle (18.75,12.25);
			\draw [ fill={rgb,255:red,228; green,7; blue,7} ] (21.25,13.5) rectangle (20,12.25);
			\draw [ fill={rgb,255:red,228; green,7; blue,7} ] (20,12.25) rectangle (18.75,11);
			\draw [ fill={rgb,255:red,228; green,136; blue,7} ] (21.25,12.25) rectangle (20,11);
			\draw [ fill={rgb,255:red,228; green,136; blue,7} ] (21.25,11) rectangle (20,9.75);
			\draw [ fill={rgb,255:red,228; green,136; blue,7} ] (21.25,9.75) rectangle (20,8.5);
			\draw [ fill={rgb,255:red,33; green,7; blue,228} ] (21.25,8.5) rectangle (20,7.25);
			\draw [ fill={rgb,255:red,33; green,7; blue,228} ] (21.25,7.25) rectangle (20,6);
			\draw [ fill={rgb,255:red,7; green,224; blue,228} ] (20,7.25) rectangle (18.75,6);
			\draw [ fill={rgb,255:red,7; green,224; blue,228} ] (20,6) rectangle (18.75,4.75);
			\draw [ fill={rgb,255:red,33; green,228; blue,7} ] (18.75,11) rectangle (20,9.75);
			\draw [ fill={rgb,255:red,33; green,228; blue,7} ] (18.75,9.75) rectangle (20,8.5);
			\draw [ fill={rgb,255:red,33; green,228; blue,7} ] (18.75,8.5) rectangle (20,7.25);
			\draw  (21.25,6) rectangle (20,4.75);
			\draw  (21.25,4.75) rectangle (20,3.5);
			\draw  (18.75,4.75) rectangle (20,3.5);
			\draw [->, line width=1mm, >=latex] (22,8.5) -- (23.25,8.5);
			\draw [ fill={rgb,255:red,228; green,7; blue,7} ] (26.25,6) rectangle (25,4.75);
			\draw [ fill={rgb,255:red,228; green,7; blue,7} ] (26.25,4.75) rectangle (25,3.5);
			\draw [ fill={rgb,255:red,228; green,7; blue,7} ] (25,4.75) rectangle (23.75,3.5);
			\draw [ fill={rgb,255:red,7; green,224; blue,228} ] (25,7.25) rectangle (23.75,6);
			\draw [ fill={rgb,255:red,7; green,224; blue,228} ] (25,6) rectangle (23.75,4.75);
			\draw [ fill={rgb,255:red,33; green,7; blue,228} ] (26.25,7.25) rectangle (25,6);
			\draw [ fill={rgb,255:red,33; green,7; blue,228} ] (26.25,8.5) rectangle (25,7.25);
			\draw [ fill={rgb,255:red,33; green,228; blue,7} ] (25,11) rectangle (23.75,9.75);
			\draw [ fill={rgb,255:red,33; green,228; blue,7} ] (25,9.75) rectangle (23.75,8.5);
			\draw [ fill={rgb,255:red,33; green,228; blue,7} ] (25,8.5) rectangle (23.75,7.25);
			\draw [ fill={rgb,255:red,228; green,136; blue,7} ] (26.25,12.25) rectangle (25,11);
			\draw [ fill={rgb,255:red,228; green,136; blue,7} ] (26.25,11) rectangle (25,9.75);
			\draw [ fill={rgb,255:red,228; green,136; blue,7} ] (26.25,9.75) rectangle (25,8.5);
			\draw  (26.25,13.5) rectangle (25,12.25);
			\draw  (23.75,13.5) rectangle (25,12.25);
			\draw  (23.75,12.25) rectangle (25,11);
			\draw [ fill={rgb,255:red,7; green,224; blue,228} ] (30,13.5) rectangle (28.75,12.25);
			\draw [ fill={rgb,255:red,7; green,224; blue,228} ] (31.25,13.5) rectangle (30,12.25);
			\draw [ fill={rgb,255:red,228; green,136; blue,7} ] (31.25,12.25) rectangle (30,11);
			\draw [ fill={rgb,255:red,228; green,136; blue,7} ] (31.25,11) rectangle (30,9.75);
			\draw [ fill={rgb,255:red,228; green,136; blue,7} ] (31.25,9.75) rectangle (30,8.5);
			\draw [ fill={rgb,255:red,33; green,228; blue,7} ] (28.75,11) rectangle (30,9.75);
			\draw [ fill={rgb,255:red,33; green,228; blue,7} ] (28.75,9.75) rectangle (30,8.5);
			\draw [ fill={rgb,255:red,33; green,228; blue,7} ] (28.75,8.5) rectangle (30,7.25);
			\draw [ fill={rgb,255:red,33; green,7; blue,228} ] (31.25,8.5) rectangle (30,7.25);
			\draw [ fill={rgb,255:red,33; green,7; blue,228} ] (31.25,7.25) rectangle (30,6);
			\draw [ fill={rgb,255:red,228; green,7; blue,7} ] (31.25,6) rectangle (30,4.75);
			\draw [ fill={rgb,255:red,228; green,7; blue,7} ] (31.25,4.75) rectangle (30,3.5);
			\draw [ fill={rgb,255:red,228; green,7; blue,7} ] (28.75,4.75) rectangle (30,3.5);
			\draw  (28.75,12.25) rectangle (30,11);
			\draw  (28.75,7.25) rectangle (30,6);
			\draw  (28.75,6) rectangle (30,4.75);
			\draw [->, line width=1mm, >=latex] (26.75,8.5) -- (28,8.5);
			\draw [->, line width=1mm, >=latex] (31.75,8.5) -- (33,8.5);
			\draw [ fill={rgb,255:red,228; green,7; blue,7} ] (35,6) rectangle (33.75,4.75);
			\draw [ fill={rgb,255:red,228; green,7; blue,7} ] (35,4.75) rectangle (33.75,3.5);
			\draw [ fill={rgb,255:red,228; green,7; blue,7} ] (36.25,4.75) rectangle (35,3.5);
			\draw [ fill={rgb,255:red,33; green,7; blue,228} ] (36.25,7.25) rectangle (35,6);
			\draw [ fill={rgb,255:red,33; green,7; blue,228} ] (36.25,8.5) rectangle (35,7.25);
			\draw [ fill={rgb,255:red,33; green,228; blue,7} ] (35,8.5) rectangle (33.75,7.25);
			\draw [ fill={rgb,255:red,33; green,228; blue,7} ] (35,9.75) rectangle (33.75,8.5);
			\draw [ fill={rgb,255:red,33; green,228; blue,7} ] (35,11) rectangle (33.75,9.75);
			\draw [ fill={rgb,255:red,228; green,136; blue,7} ] (36.25,9.75) rectangle (35,8.5);
			\draw [ fill={rgb,255:red,228; green,136; blue,7} ] (36.25,11) rectangle (35,9.75);
			\draw [ fill={rgb,255:red,228; green,136; blue,7} ] (36.25,12.25) rectangle (35,11);
			\draw [ fill={rgb,255:red,7; green,224; blue,228} ] (36.25,13.5) rectangle (35,12.25);
			\draw [ fill={rgb,255:red,7; green,224; blue,228} ] (33.75,13.5) rectangle (35,12.25);
			\draw  (36.25,6) rectangle (35,4.75);
			\draw  (33.75,7.25) rectangle (35,6);
			\draw  (33.75,12.25) rectangle (35,11);
			\node [font=\LARGE] at (7.5,9.25) {1};
			\node [font=\LARGE] at (12.5,9.25) {2};
			\node [font=\LARGE] at (17.5,9.25) {3};
			\node [font=\LARGE] at (22.5,9.25) {4};
			\node [font=\LARGE] at (27.5,9.25) {5};
			\node [font=\LARGE] at (32.5,9.25) {6};

		\end{tikzpicture}
	}
	\caption{This sequence demonstrates the optimal solution for moving the target piece from its initial state (0) to the final target state (6).}
	\label{fig:optimal-move-sequence}
\end{figure}

\section{Structural Parallels Between the Kinematic Taxonomy and Physical Constraint Systems}\label{app:sn8-industrial-mapping}

The Flying Block Puzzle functions as a controlled microworld that isolates the
cascading positional dependencies arising in restricted-mobility
environments~\cite{russell2020artificial}. The description list below maps each kinematic
equivalence class to a representative physical constraint system whose structural
properties mirror those formalised in the taxonomy. These correspondences are
descriptive structural analogies motivated by the formal class properties;
direct cross-domain experimental evaluation remains future work, and adapting
the class heuristics to each domain's piece kinematics would require deriving
domain-specific forms of the geometric terms $d_v$, $R$, $F$, and $\delta$,
paralleling standard practices in adaptive navigation planning.
These mappings instantiate the principles of qualitative physics, reasoning about physical systems through structural properties and monotonic relationships without numerical simulation. Here $k$ denotes the number of vacant units, $n$ the goal-piece size, and $s$ the number of free spaces in a comb piece; each entry closes with a representative puzzle instance, cross-referenced to the example figures where available, to concretise the analogy. These mappings treat each class in
isolation; Section~\ref{sn10-1-multi-class-transition-narrative-pipelines} below presents integrated multi-class transition
pipelines showing how a single physical system moves dynamically across
multiple classes.

\begin{description}[style=sameline, leftmargin=1.5em]
    \item[\textbf{Class A (Sparse-Environment Dispatch):}]
    Characterised by a high vacancy-to-size ratio ($k \ge n$) where jumping moves are available.
    \textit{Physical Counterpart:} Autonomous vehicle initialisation in a low-density warehouse.
    \textit{Puzzle Equivalence:} A $6\times 2$ frame with $k=4$ vacant units and a goal-piece size of $n=3$, where the goal piece can jump directly to its target in a single move (Fig.~\ref{fig:class-a-example}).

    \item[\textbf{Class B (Column-Locked Sliding Routing):}]
    Defined by an I-shaped goal piece ($n \ge 3$) under space constraints ($k < n$), restricting the system to sliding moves only.
    \textit{Physical Counterpart:} VLSI channel routing navigating narrow silicon gaps with long rigid wires.
    \textit{Puzzle Equivalence:} An $8\times 2$ frame with $k=1$ and $n=3$, requiring a 14-step sliding sequence to displace two column-locked blockers.

    \item[\textbf{Class C (Rotational Component Assembly):}]
    Arises when an I-shaped goal piece has a size of $n=2$ with exactly $k=1$ vacant unit, which uniquely enables rotation and column-crossing.
    \textit{Physical Counterpart:} High-density PCB assembly where micro-SMD components must rotate on a crowded board to navigate narrow gaps.
    \textit{Puzzle Equivalence:} An $8\times 2$ frame with $k=1$ and $n=2$, requiring a 16-step optimal sequence of coordinated slides and rotations (Fig.~\ref{fig:class-c-example}).

    \item[\textbf{Class D (Confined Arm Manoeuvring):}]
    Formed by an L-shaped goal piece under severe vacancy constraints ($k=1$), where all path-blocking obstacles must be fully cleared.
    \textit{Physical Counterpart:} Articulated elbow joint of a manipulator navigating a turbine hull with no bypass lanes.
    \textit{Puzzle Equivalence:} An $8\times 2$ frame with $k=1$ and an L-shaped goal piece, where bypass moves are impossible, forcing full clearance of path blockers.

    \item[\textbf{Class E (Narrow-Corridor Bypass MAPF):}]
    Occurs with an L-shaped goal piece and $k=2$ vacant units, which allows size-1 blocking obstacles to be bypassed.
    \textit{Physical Counterpart:} Narrow-corridor large-agent Multi-Agent Path Finding, simulating two-agent exchange in a logistics facility with single-unit side bays.
    \textit{Puzzle Equivalence:} An $8\times 2$ frame with $k=2$ and an L-shaped goal piece, where a size-1 blocker is bypassed via a two-vacancy slide (Fig.~\ref{fig:class-e-bypass-slide}).

    \item[\textbf{Class F (Fixed-Orientation Monorail Assembly):}]
    An equivalence class for comb-shaped pieces where the number of free spaces exceeds vacant units ($s > k$). The comb's body is side-pinned, rendering a vertical flip impossible (Theorem~\ref{thm:comb_flip}).
    \textit{Physical Counterpart:} A fixed-orientation chassis part on a constrained monorail assembly line.
    \textit{Puzzle Equivalence:} A comb piece with $s=2$ and $k=1$ where the body is locked to a single column, requiring multi-step coordination for vertical progress.

    \item[\textbf{Class G (Irregular Clutter Bin Picking):}]
    The residual case representing symmetric shapes or comb-shaped pieces with $s \le k$, where standard geometric heuristics are uninformative.
    \textit{Physical Counterpart:} Heterogeneous parts handling in unsorted industrial bins.
    \textit{Puzzle Equivalence:} A comb piece with $s=1 \le k=2$ where the general heuristic $h_G = \lceil d/k\rceil$ provides optimal guidance.
\end{description}

\subsection{Multi-Class Transition Narrative Pipelines}\label{sn10-1-multi-class-transition-narrative-pipelines}

In physical and industrial systems, constraint regimes are rarely static.
A single continuous mission typically moves a planning agent from unconstrained
open spaces through progressively tighter corridors and back again --- a
sequence that corresponds mathematically to traversing multiple kinematic
classes (A--G) within one operational pipeline. Rather than tabulating these
transitions row by row, we set out below, for two integrated engineering
systems, which CBHS class --- and therefore which heuristic --- applies at
each operational phase.

\subsubsection*{Pipeline I: The Fulfillment Pipeline (Autonomous Logistics \& Warehousing Fleet)}
This pipeline simulates an Automated Guided Vehicle (AGV) navigating a high-density warehouse:
\begin{enumerate}[label=\arabic*., leftmargin=1.5em]
    \item \textbf{Initial Dispatch (Class A, $k \ge n$):} The AGV manoeuvres through wide-open loading bays. Multiple open slots allow free repositioning, including jumping moves, so the general heuristic $h_A = h_{\mathrm{general}}$ applies. \textit{Puzzle equivalence:} a $6{\times}2$ frame, $k=4$, $n=3$; the goal piece jumps to target in 1 move.
    \item \textbf{Narrow Aisle Access (Class B, $k < n$, I-shaped):} Upon entering a high-density shelving lane, physical boundaries contract. The vehicle must displace blocking pallets sequentially, guided by $h_B = \lceil d/k \rceil + \sum \lceil g_p/k \rceil$. \textit{Puzzle equivalence:} an $8{\times}2$ frame, $k=1$, $n=3$ I-shaped piece.
    \item \textbf{Intersection Turn (Class C, $k=1$, $n=2$, I-shaped):} At a tight aisle junction, the AGV rotates $90^\circ$ and crosses column lanes. The planning layer switches to $h_C = 2(d_v + R + F) - M$. \textit{Puzzle equivalence:} an $8{\times}2$ frame, $k=1$, $n=2$ I-shaped piece.
    \item \textbf{Narrow-Corridor Bypass (Class E, $k=2$, L-shaped):} The AGV uses single-unit side bays to bypass the blocker, matching $h_E = d_v + \sum \lceil g_p/2 \rceil - \delta$. \textit{Puzzle equivalence:} an $8{\times}2$ frame, $k=2$, L-shaped piece.
    \item \textbf{Unsorted Bin Picking (Class G, Residual):} The AGV enters a high-clutter sorting area. The planning layer relies on the residual heuristic $h_G = \lceil d/k \rceil$. \textit{Puzzle equivalence:} a comb piece with $s=1 \le k=2$.
\end{enumerate}

\subsubsection*{Pipeline II: The Turbine Solder Mission (Articulated Robotic Hull Inspection)}
This pipeline describes an articulated manipulator arm executing a precision hull inspection:
\begin{enumerate}[label=\arabic*., leftmargin=1.5em]
    \item \textbf{Deployment Cavity (Class A, $k \ge n$):} The manipulator arm initialises in a spacious turbine intake chamber, so $h_A = h_{\mathrm{general}}$ applies. \textit{Puzzle equivalence:} a $6{\times}2$ frame, $k=4$, $n=3$.
    \item \textbf{Rib-Clearance Traverse (Class D, $k=1$, L-shaped):} The arm's L-shaped elbow joint navigates tight rib structures; all blocking components must be sequentially cleared, guided by $h_D = d_v + \sum g_p$. \textit{Puzzle equivalence:} an $8{\times}2$ frame, $k=1$, L-shaped piece.
    \item \textbf{Protrusion Bypass (Class E, $k=2$, L-shaped):} A 2-vacancy slide bypass manoeuvre is used, matching $h_E = d_v + \sum \lceil g_p/2 \rceil - \delta$. \textit{Puzzle equivalence:} an $8{\times}2$ frame, $k=2$, L-shaped piece.
    \item \textbf{Track Solder Operation (Class F, $s > k$, Comb):} The comb-like end-effector is pinned against a circular track ($s > k$), so the system switches to $h_F = \lceil d/k \rceil + \sum \lceil g_i/k \rceil$. \textit{Puzzle equivalence:} a comb piece with $s=2 > k=1$.
\end{enumerate}

\subsection{Detailed Operational Transitions}\label{sn10-2-detailed-operational-transitions}

\paragraph{Logistics AGV Case Study.}
Consider an automated warehouse system operating in a high-density logistics
facility. The AGV begins its mission in the dispatch zone (\textbf{Class~A}
regime), where the ratio of vacant space to AGV footprint is high ($k \ge n$).
Here, planning is cheap: $h_{\mathrm{general}}$ provides accurate guidance
because obstacles are sparse and the goal piece can reach its target via a
single jumping move.

As the AGV enters a storage rack corridor, the physical boundaries contract,
entering the \textbf{Class~B} regime ($k < n$, I-shaped goal piece). The
planning layer dynamically switches its heuristic from $h_{\mathrm{general}}$
to $h_B$, which accounts for column-locked, sliding-only dynamics and estimates
clearance cost as $\lceil d/k \rceil + \sum_{p} \lceil g_p/k \rceil$. Without
this switch, the general heuristic would plateau and provide no discriminating
guidance.

Upon reaching the end of the aisle, the AGV must navigate a narrow T-junction
(\textbf{Class~C} regime), requiring a 90-degree rotation and a column-crossing
manoeuvre. The planning layer immediately applies
$h_C = 2(d_v + R + F) - M$, penalising orientation discrepancy ($R$),
frame-side mismatch ($F$), and the alternating clear-then-advance constraint.
This transition illustrates why static single-class heuristics fail: a planner
using $h_B$'s column-locked assumptions would over-penalise rotational states
and miss optimal trajectories.

\paragraph{Robotic Hull Inspection Case Study.}
An articulated manipulator arm begins in an unconstrained intake cavity
(\textbf{Class~A}), transitions to tight rib structures that require full
sequential clearance of all blocking components (\textbf{Class~D},
$h_D = d_v + \sum g_p$), then encounters localised protrusions where a
two-vacancy bypass is available (\textbf{Class~E},
$h_E = d_v + \sum \lceil g_p/2 \rceil - \delta$), and finally executes a
precision solder operation where the comb-shaped end-effector is body-pinned
and cannot flip (\textbf{Class~F}, $h_F = \lceil d/k \rceil + \sum \lceil
g_i/k \rceil$). Each class transition triggers a heuristic switch; using
Class~D's heuristic in the Class~E phase would over-estimate costs by
ignoring the bypass move available when $k=2$, violating the admissibility
requirement established for Class~E (Theorem~\ref{thm:hE_consistency}).

These case studies confirm that the CBHS framework is not a static
classification tool for isolated puzzle instances but a generalizable mechanism
for adaptive spatial planning under non-static physical constraints.

\section{Configurations Beyond Current Algorithmic Reach}\label{app:hard-configs}

Figure~\ref{fig:hard-configurations} illustrates several examples of challenging configurations and highlights their structural differences from typical solvable instances.

\begin{figure}[htbp]
	\centering
	\resizebox{0.6\textwidth}{!}{
		\begin{tikzpicture}

			\tikzstyle{every node}=[font=\Huge]
			\draw [ fill={rgb,255:red,247; green,8; blue,8} ] (-3.75,-1.25) rectangle (-2.5,-2.5);
			\draw [ fill={rgb,255:red,247; green,8; blue,8} ] (-1.25,-1.25) rectangle (-2.5,-2.5);
			\draw [ fill={rgb,255:red,247; green,8; blue,8} ] (-3.75,0) rectangle (-2.5,-1.25);
			\draw [ fill={rgb,255:red,31; green,219; blue,87} ] (-1.25,0) rectangle (-2.5,-1.25);
			\draw [ fill={rgb,255:red,23; green,119; blue,222} ] (-3.75,1.25) rectangle (-2.5,0);
			\draw [ fill={rgb,255:red,31; green,219; blue,87} ] (-1.25,1.25) rectangle (-2.5,0);
			\draw [ fill={rgb,255:red,23; green,119; blue,222} ] (-3.75,2.5) rectangle (-2.5,1.25);
			\draw [ fill={rgb,255:red,31; green,219; blue,87} ] (-1.25,2.5) rectangle (-2.5,1.25);
			\draw [ fill={rgb,255:red,23; green,119; blue,222} ] (-3.75,3.75) rectangle (-2.5,2.5);
			\draw [ fill={rgb,255:red,31; green,219; blue,87} ] (-1.25,3.75) rectangle (-2.5,2.5);
			\draw [ fill={rgb,255:red,23; green,119; blue,222} ] (-3.75,5) rectangle (-2.5,3.75);
			\draw [ fill={rgb,255:red,228; green,112; blue,17} ] (-1.25,5) rectangle (-2.5,3.75);
			\draw [ fill={rgb,255:red,216; green,244; blue,6} ] (-3.75,6.25) rectangle (-2.5,5);
			\draw [ fill={rgb,255:red,228; green,112; blue,17} ] (-1.25,6.25) rectangle (-2.5,5);
			\draw [ fill={rgb,255:red,216; green,244; blue,6} ] (-3.75,7.5) rectangle (-2.5,6.25);
			\draw [ fill={rgb,255:red,228; green,112; blue,17} ] (-1.25,7.5) rectangle (-2.5,6.25);
			\draw [ fill={rgb,255:red,216; green,244; blue,6} ] (-3.75,8.75) rectangle (-2.5,7.5);
			\draw [ fill={rgb,255:red,228; green,112; blue,17} ] (-1.25,8.75) rectangle (-2.5,7.5);
			\draw  (-3.75,10) rectangle (-2.5,8.75);
			\draw [ fill={rgb,255:red,152; green,16; blue,109} ] (-1.25,10) rectangle (-2.5,8.75);
			\draw (-1.25,10) -- (-1.25,10.75);
			\draw (-2.5,10) -- (-2.5,10.5);
			\draw (-3.75,10) -- (-3.75,10.75);

			\draw [ fill={rgb,255:red,247; green,8; blue,8} ] (3.75,13.75) rectangle (2.5,12.5);
			\draw  (3.75,12.5) rectangle (2.5,11.25);
			\draw  (5,12.5) rectangle (3.75,11.25);
			\draw  (5,13.75) rectangle (3.75,12.5);
			\draw [ fill={rgb,255:red,247; green,8; blue,8} ] (5,15) rectangle (3.75,13.75);
			\draw [ fill={rgb,255:red,247; green,8; blue,8} ] (2.5,15) rectangle (3.75,13.75);
			\draw  (5,16.25) rectangle (3.75,15);
			\draw  (2.5,16.25) rectangle (3.75,15);
			\draw  (5,11.25) rectangle (3.75,10);
			\draw  (2.5,11.25) rectangle (3.75,10);
			\draw (5,16.25) -- (5,17);
			\draw (3.75,16.25) -- (3.75,17);
			\draw (2.5,16.25) -- (2.5,17);
			\draw (5,10) -- (5,9.25);
			\draw (3.75,10) -- (3.75,9.25);
			\draw (2.5,10) -- (2.5,9.25);

			\draw [ fill={rgb,255:red,247; green,8; blue,8} ] (11.25,-1.25) rectangle (12.5,-2.5);
			\draw [ fill={rgb,255:red,247; green,8; blue,8} ] (13.75,-1.25) rectangle (12.5,-2.5);
			\draw [ fill={rgb,255:red,228; green,112; blue,17} ] (11.25,0) rectangle (12.5,-1.25);
			\draw [ fill={rgb,255:red,23; green,119; blue,222} ] (13.75,0) rectangle (12.5,-1.25);
			\draw [ fill={rgb,255:red,228; green,112; blue,17} ] (11.25,1.25) rectangle (12.5,0);
			\draw [ fill={rgb,255:red,23; green,119; blue,222} ] (13.75,1.25) rectangle (12.5,0);
			\draw [ fill={rgb,255:red,228; green,112; blue,17} ] (11.25,2.5) rectangle (12.5,1.25);
			\draw [ fill={rgb,255:red,31; green,219; blue,87} ] (13.75,2.5) rectangle (12.5,1.25);
			\draw [ fill={rgb,255:red,31; green,219; blue,87} ] (11.25,3.75) rectangle (12.5,2.5);
			\draw [ fill={rgb,255:red,31; green,219; blue,87} ] (13.75,3.75) rectangle (12.5,2.5);
			\draw [ fill={rgb,255:red,152; green,16; blue,109} ] (11.25,5) rectangle (12.5,3.75);
			\draw  (13.75,5) rectangle (12.5,3.75);
			\draw [ fill={rgb,255:red,152; green,16; blue,109} ] (11.25,6.25) rectangle (12.5,5);
			\draw [ fill={rgb,255:red,228; green,244; blue,6} ] (13.75,6.25) rectangle (12.5,5);
			\draw [ fill={rgb,255:red,152; green,16; blue,109} ] (11.25,7.5) rectangle (12.5,6.25);
			\draw [ fill={rgb,255:red,216; green,244; blue,6} ] (13.75,7.5) rectangle (12.5,6.25);
			\draw [ fill={rgb,255:red,152; green,16; blue,109} ] (11.25,8.75) rectangle (12.5,7.5);
			\draw [ fill={rgb,255:red,216; green,244; blue,6} ] (13.75,8.75) rectangle (12.5,7.5);
			\draw [ fill={rgb,255:red,152; green,16; blue,109} ] (11.25,10) rectangle (12.5,8.75);
			\draw [ fill={rgb,255:red,216; green,244; blue,6} ] (13.75,10) rectangle (12.5,8.75);
			\draw (13.75,10) -- (13.75,10.75);
			\draw (12.5,10) -- (12.5,10.5);
			\draw (11.25,10) -- (11.25,10.75);

			\draw  (18.75,11.25) rectangle (17.5,10);
			\draw  (20,11.25) rectangle (18.75,10);
			\draw  (20,12.5) rectangle (18.75,11.25);
			\draw  (17.5,12.5) rectangle (18.75,11.25);
			\draw  (20,13.75) rectangle (18.75,12.5);
			\draw  (17.5,13.75) rectangle (18.75,12.5);
			\draw  (20,15) rectangle (18.75,13.75);
			\draw  (17.5,15) rectangle (18.75,13.75);
			\draw [ fill={rgb,255:red,247; green,8; blue,8} ] (20,16.25) rectangle (18.75,15);
			\draw [ fill={rgb,255:red,247; green,8; blue,8} ] (17.5,16.25) rectangle (18.75,15);
			\draw (20,10) -- (20,9.5);
			\draw (18.75,10) -- (18.75,9.5);
			\draw (17.5,10) -- (17.5,9.5);
			\draw (20,16.25) -- (20,16.75);
			\draw (18.75,16.25) -- (18.75,16.75);
			\draw (17.5,16.25) -- (17.5,16.75);

			\draw [ fill={rgb,255:red,3; green,3; blue,3} ] (-2.5,12) circle (0.25cm);
			\draw [ fill={rgb,255:red,3; green,3; blue,3} ] (-2.5,12.75) circle (0.25cm);
			\draw [ fill={rgb,255:red,3; green,3; blue,3} ] (-2.5,13.5) circle (0.25cm);
			\draw [ fill={rgb,255:red,3; green,3; blue,3} ] (3.75,18.25) circle (0.25cm);
			\draw [ fill={rgb,255:red,3; green,3; blue,3} ] (3.75,19) circle (0.25cm);
			\draw [ fill={rgb,255:red,3; green,3; blue,3} ] (3.75,19.75) circle (0.25cm);
			\draw [ fill={rgb,255:red,3; green,3; blue,3} ] (3.75,6.5) circle (0.25cm);
			\draw [ fill={rgb,255:red,3; green,3; blue,3} ] (3.75,7.25) circle (0.25cm);
			\draw [ fill={rgb,255:red,3; green,3; blue,3} ] (3.75,8) circle (0.25cm);
			\draw [ fill={rgb,255:red,3; green,3; blue,3} ] (12.5,11.75) circle (0.25cm);
			\draw [ fill={rgb,255:red,3; green,3; blue,3} ] (12.5,12.5) circle (0.25cm);
			\draw [ fill={rgb,255:red,3; green,3; blue,3} ] (12.5,13.25) circle (0.25cm);
			\draw [ fill={rgb,255:red,3; green,3; blue,3} ] (18.75,18.25) circle (0.25cm);
			\draw [ fill={rgb,255:red,3; green,3; blue,3} ] (18.75,19) circle (0.25cm);
			\draw [ fill={rgb,255:red,3; green,3; blue,3} ] (18.75,19.75) circle (0.25cm);
			\draw [ fill={rgb,255:red,3; green,3; blue,3} ] (18.75,7) circle (0.25cm);
			\draw [ fill={rgb,255:red,3; green,3; blue,3} ] (18.75,7.75) circle (0.25cm);
			\draw [ fill={rgb,255:red,3; green,3; blue,3} ] (18.75,8.5) circle (0.25cm);

			\node [font=\Huge] at (-3,21.25) {\textbf{Initial}};
			\node [font=\Huge] at (3.75,21.25) {\textbf{Goal}};
			\node [font=\Huge] at (12.5,21.25) {\textbf{Initial}};
			\node [font=\Huge] at (18.75,21.25) {\textbf{Goal}};
			\node [font=\Huge] at (0.5,-3.75) {\textbf{Number of vacant units = 1}};
			\node [font=\Huge] at (16.25,-3.75) {\textbf{Number of vacant units = 1}};
			\draw [->, line width=1mm, >=latex] (-3.25,-2.25) -- (-3.25,-1.5);

		\end{tikzpicture}
	}
	\caption{Some examples of challenging configurations beyond the reach of our algorithms.}
	\label{fig:hard-configurations}
\end{figure}

\end{document}